\DeclareMathOperator*{\argmax}{\operatorname{argmax}}
\newcommand{\KL}[0]{\operatorname{KL}}
\newcommand{\KLinf}[0]{\operatorname{KL_{inf}}}
\newcommand{\KLinfL}[0]{\operatorname{KL_{inf}^L}}
\newcommand{\KLinfU}[0]{\operatorname{KL_{inf}^U}}
\newcommand{\KLinfUC}[1]{\operatorname{KL}^{\operatorname{U},{#1}}_{\operatorname{inf}}}
\newcommand{\KLinfLC}[1]{\operatorname{KL}^{\operatorname{L},{#1}}_{\operatorname{inf}}}
\newcommand{\Supp}[0]{\operatorname{Supp}}
\newcommand{\lrset}[1]{\left\{{#1}\right\}}
\newcommand{\lrp}[1]{\left({#1}\right)}
\newcommand{\floor}[1]{\left\lfloor{#1}\right\rfloor}
\newcommand{\ubar}[1]{\underbar{${#1}$}}
\newcommand{\abs}[1]{\left\lvert{#1}\right\rvert}
\newcommand\numberthis{\addtocounter{equation}{1}\tag{\theequation}}
\newcommand{\Exp}[1]{\mathbb{E}\lrp{#1}}
\newcommand{\E}[2]{\mathbb{E}_{#1}\lrp{#2}}
\newcommand{\inv}{^{\text{-}1}}
\newcommand\blfootnote[1]{%
  \begingroup
  \renewcommand\thefootnote{}\footnote{#1}%
  \addtocounter{footnote}{-1}%
  \endgroup
}
\begin{document}

\title{Optimal $\delta$-Correct Best-Arm Selection for Heavy-Tailed Distributions}

\author{\name Shubhada Agrawal {\email sagrawal362@gatech.edu} \\
		\addr Georgia Institute of Technology\\
		\name Sandeep Juneja \email juneja@tifr.res.in \\
       \addr  TIFR, Mumbai\\
   		\name Peter Glynn \email glynn@stanford.edu\\
   		\addr Stanford University}

\editor{}



\maketitle\blfootnote{This is an updated version of \cite{agrawal2020optimal}. Some subtle technical errors were present in the previous version. These stemmed from working with the Wasserstein metric on the space of probability measures and due to improper handling of constants in the concentration result developed in that work. To address these, in this updated version, we use the L\'evy metric throughout and develop a tighter concentration result.}

\vspace{-3em}
\begin{abstract}
		Given a finite set of unknown distributions $\textit{or arms}$ that can be sampled, we consider the problem of identifying the one with the maximum mean using a $\delta$-correct algorithm (an adaptive, sequential algorithm that restricts the probability of error to a specified $\delta$) that has minimum sample complexity.   Lower bounds for $\delta$-correct algorithms are well known. $\delta$-correct algorithms that match the lower bound asymptotically as $\delta$ reduces to zero have been previously developed when arm distributions are restricted to a single parameter exponential family. In this paper, we first observe a negative result that some restrictions are essential, as otherwise, under a $\delta$-correct algorithm, distributions with unbounded support would require an infinite number of samples in expectation. We then propose a $\delta$-correct algorithm that matches the lower bound as $\delta$ reduces to zero under the mild restriction that a known bound on the expectation of $(1+\epsilon)^{th}$ moment of the underlying random variables exists, for $\epsilon > 0$. We also propose batch processing and identify near-optimal batch sizes to speed up the proposed algorithm substantially. The best-arm problem has many learning applications, including recommendation systems and product selection. It is also a well-studied classic problem in the simulation community. 
\end{abstract}
\begin{keywords}
	Multi-armed bandits, best-arm identification, sequential learning, ranking and selection 
\end{keywords}
\section{Introduction}
Given a vector of unknown arms or probability distributions that can be sampled, we consider algorithms that sequentially sample from or {\em pull} these arms and, at termination, identify the best arm, i.e., the arm with the largest mean. The algorithms considered provide  $\delta$-correct probabilistic guarantees, i.e., the probability of identifying an incorrect arm is bounded from above by a pre-specified $\delta>0$. Further, the $\delta$-correct algorithms aim to minimize the sample complexity or the expected total number of arms pulled before they terminate. 

This best-arm problem is well-studied in the literature (see, e.g.,  in \emph{learning} - ~\cite{wang2021fast,degenne2019non,chen2017nearly,chen17b, garivier2016optimal,kaufmann2016complexity,russo2016simple,jamieson2014lil,karnin13,kaufmann2013information,kalyanakrishnan2012pac,gabillon2012best,bubeck2011pure,audibert2010best,even2006action,mannor2004sample}; in earlier \emph{statistics literature} - \cite{jennison1982asymptotically,bechhofer1968sequential,paulson1964sequential}; in \emph{simulation}  - \cite{glynn2004large,kim2001fully,chen2000simulation,dai1996converge,ho1992ordinal}). 

The  $\delta$-correct guarantee of the algorithm imposes constraints on the expected number of times the algorithm must pull each arm. The lower bounding techniques for these problems rely on {\em change of measure} based analysis that goes back at least to  \cite{lai1985asymptotically}. Also see \cite{mannor2004sample} and \cite{BURNETAS1996}. Recently, several works proposed inequalities capturing these changes of measures, relating the log-likelihood ratio under the two measures to the probability of events under these measures (see, for example, \cite{garivier2019explore}).

\cite{garivier2016optimal} consider the best arm problem assuming that each arm distribution belongs to a single parameter exponential family (SPEF). Under this restriction, they arrive at an {\em asymptotically optimal} algorithm having a sample complexity matching the derived lower bound asymptotically as $\delta \rightarrow 0$. SPEF  distributions include Bernoulli, Poisson, and Gaussian distributions with known variance.

In practice, it is rarely the case (other than in the Bernoulli setting) that the distributions are from SPEF. For example, \cite{dubey2019thompson} argue that the rewards from the interactions in many real-world systems often follow heavy-tailed distributions. It is well known that the distribution of online behavior on websites \citep{kumar2010characterization}, distributions of packet-delays in congested networks \citep{yu2018pure,liebeherr2012delay}, etc., do not enjoy the sub-Gaussian properties. Best arm problems, in particular, arise in many settings. For instance, one can view the selection of the best product version to roll out for production and sale after a set of expensive pilot trials among many competing versions to be the best arm problem. In simulation theory, selecting the best design amongst many (based on output from a simulation model) is a classic problem with applications to manufacturing, road and communications network design, etc. In these and many other settings, the underlying distributions can be very general and may not be modeled well by a SPEF distribution. Hence, there is a need for a general theory and efficient algorithms with broader applicability. We substantially address this issue. 

\paragraph{Contributions.} Our first contribution is an impossibility result illustrating why some distributional restrictions on arms are necessary for $\delta$-correct algorithms to be practical. Consider an algorithm that provides  $\delta$-correct guarantees when acting on a finite set of distributions, each belonging to a collection $\mathcal{H}$, where  $\mathcal{H}$ comprises distributions with unbounded support that are {\em $\KL$ right dense} (defined in Section~\ref{sec:problem.baimean}). In this setup, we show that the expected number of samples required by any algorithm on every instance must be infinite. Examples of such $\mathcal{H}$ include all light-tailed distributions with unbounded support (a distribution is said to be light-tailed if its moment-generating function is finite in a neighborhood of zero). Another example is a collection of unbounded distributions supported on $\Re$, which have a finite absolute $p^{th}$ moment, for some $p \geq 1$.

To arrive at an effective $\delta$-correct algorithm, we restrict arm distributions to the collection
\begin{equation}
\label{eq:LClass}
\mathcal{L} \triangleq \lrset{\eta \in \mathcal{P}(\Re) : \E{X\sim \eta}{\abs{X}^{1+\epsilon}} \leq B} ,
\end{equation}
where \(\mathcal{P}(\Re)\) denotes the collection of all probability measures with support in \(\Re \), and $B > 0$ and $\epsilon > 0$ are known fixed constants. Such bounds can often be found in simulation models using Lyapunov function-based techniques (see, e.g., \cite{glynn2008bounding}). 

With this mild restriction, we greatly simplify the lower bound on sample complexity for $\delta$-correct algorithms to understand its structure. Our {\bf main contribution} is an {\em asymptotically optimal} $\delta$-correct algorithm whose sample complexity matches the lower bound, asymptotically as $\delta \rightarrow 0$, on every bandit instance with distributions from $\cal L$. The proposed algorithm can be specialized to bandits with distributions in $[0,1]$, giving optimal algorithms for this setting. 

Crucial to our analysis of the lower bound and the $\delta$-correct algorithm are the $\KL$ projection functionals $\KLinfL: \mathcal P(\Re) \times \Re \rightarrow \Re^+$ and $\KLinfU: \mathcal P(\Re) \times \Re \rightarrow \Re^+,$ defined below. For probability measures $\kappa_1$ and $\kappa_2$, let $\KL(\kappa_1,\kappa_2)$ denote the Kullback-Leibler divergence between them (defined in Section~\ref{sec:problem.baimean}), and let $m(\kappa)$ denote the mean of the probability distribution $\kappa$. Then, for $\eta\in \mathcal{P}(\Re)$ and $x \in \Re$, 
\begin{equation} \label{eq:KLinf}
\KLinfU(\eta,x) := \min\limits_{\kappa \in \mathcal L, m(\kappa) \ge x}~ \KL(\eta,\kappa), \quad  \KLinfL(\eta,x) := \min\limits_{ \kappa \in \mathcal L, m(\kappa) \le x} ~\KL(\eta,\kappa).
\end{equation}
Heuristically, $\KLinfU(\eta,x)$ measures the difficulty of separating the distribution $\eta$ from all the other distributions in $\mathcal{L}$ whose mean is at least $x$. It equals
zero when $x=m(\eta)$ and \(\eta\in\mathcal{L}\). Symmetric statements hold for $\KLinfL$. 

Unlike in the large-deviations theory (\cite{dembo2010large}), the optimization is in the second argument of $\KL$ in the definitions above. While the theory with optimization in the first argument of $\KL$ is well-developed, in this work, we study the structure and properties of these $\KL$-projection functions defined in~\eqref{eq:KLinf} in great detail and characterize their optimal solutions. We develop a concentration inequality for scaled sums of empirical versions of these functions, which plays a key role in the proof of \(\delta \)-correctness of the proposed algorithm. Our concentration inequality relies on arriving at simpler (dual) representations for the projection functionals. To this end, we substantially extend the similar representations developed by  \cite{HondaBounded10} for bounded random variables to random variables belonging to ${\cal L}$. We point out that \citet{HondaBounded10} solve the regret minimization problem for stochastic bandits.

The proposed algorithm is a plug-and-play strategy that solves a max-min optimization problem for the empirical distributions at each step. In bounding its sample complexity, we exploit the joint continuity of \(\KLinfL\) and $\KLinfU$ in their arguments, as well as the continuity of the optimizers for the max-min problem, with respect to the underlying arm distributions. A key challenge in proving these properties is arriving at the topology in the space of probability measures, ensuring the continuity of these functions, along with fast convergence of the empirical estimates to the true ones. We consider the L\'evy metric on the space of probability distributions (Section~\ref{sec:WC}). The L\'evy metric is relatively tractable to work with. Moreover, ${\cal L}$ is a compact space under this distance (Lemma~\ref{lem:propL}). This, in particular, allows us to use the well-known Berge's Maximum Theorem and related results (Section~\ref{sec:Berge}) to derive the requisite continuity properties.

In, e.g.,  \cite{garivier2016optimal,kalyanakrishnan2012pac}, the proposed algorithms solve the lower bound problem at every iteration. However, this can be computationally demanding, particularly in the generality considered in the current work. We instead solve this problem in batches and arrive at near-optimal batch sizes that minimize the overall cost of the algorithm. This includes the computational effort and the cost of sampling from an arm. The latter may be significant when each sample is costly, for example, when the samples correspond to a clinical trial result. To the best of our knowledge, this idea of updating the statistics only after a batch of samples to balance the computational and statistical costs was not previously considered in the literature of MAB. 

In this work, we keep the batch size constant and give it as an input to the algorithm. For the optimal batch size, the overall cost of the algorithm reduces from being quadratic in the number of samples to linear. The sample complexity of the resulting algorithm is a constant factor away from the lower bound. This cost is still quadratic in the total number of samples for sub-optimal batch sizes but with a smaller constant. However, the resulting algorithm exactly matches the lower bound.

As previously mentioned in a footnote, this work is an updated version of \cite{agrawal2020optimal}. Some subtle technical errors were present in that version that stemmed from working with the Wasserstein metric on the space of probability measures. In addition, in the concentration result developed, the constant terms were dependent on the unknown bandit instance, which resulted in a stopping threshold (an input to the proposed algorithm) dependent on this unknown constant. We fixed these errors in an erratum posted online \citep{agrawal2020perratum}. In this updated version, we incorporate these corrections and improve the analysis. We use the L\'evy metric instead of the Wasserstein metric throughout. To fix the stopping threshold, we present a much tighter martingale-based concentration result. In addition, towards the end, we discuss the special setting of bandits with bounded-support distributions.

\paragraph{Roadmap.} In Section~\ref{sec:problem.baimean}, we review some background material and present an impossibility result illustrating the need for distributional restrictions on arms. In Section~\ref{sec:lb.mean}, an efficient lower bound for $\delta$-correct algorithms for the fixed-confidence BAI problem is provided when the arm distributions are restricted to ${\cal L}$. The algorithm that matches this lower bound asymptotically as $\delta \rightarrow 0$ is developed in Section~\ref{sec:trackstop.mean}. We present the theoretical guarantees of the proposed algorithm in Section~\ref{sec:tg.meanBAI}. Discussion on optimal batch size and a numerical experiment are shown in Sections~\ref{sec:optbatch.mean} and~\ref{numerics}, respectively. We present the optimal algorithm for the MAB with bounded-support distributions in Section~\ref{sec:bdd.meanBAI}. While the critical ideas in some of the proofs are outlined in the main body, details of the proofs are all given in the appendices.

\section{Background and the Impossibility Result}
\label{sec:problem.baimean}

Let $\mathcal P(\Re)$ denote the collection of all probability measures on $\Re$. For $\eta$ and $\kappa$ in $\mathcal P(\Re)$, let 
$$\KL(\eta,\;\kappa)= \int\log\lrp{\frac{d \eta}{d \kappa}(x)}d \eta(x)$$ 
denote the Kullback-Leibler divergence between them. For $p,q\!\in\!(0,\!1)$, let $d(p,q)$ denote the KL-divergence between Bernoulli distributions with mean $p$ and $q$, respectively, i.e., 
$$d(p, q) = p \log \left ( \frac{p}{q} \right ) + (1-p)  \log \left ( \frac{1-p}{1-q} \right ).$$ 
Furthermore, let $[K]:= \{1,\dots, K\}$, and for $\eta\in\mathcal P(\Re)$, let $m(\eta)$ denote its mean. 

Let $\mathcal{H} \subset \mathcal P(\Re)$ denote the universe of probability distributions for which we aim to devise $\delta$-correct algorithms. We assume each distribution in $\mathcal{H}$ has a finite mean. Consider a vector of distributions $\mu=(\mu_1, \ldots, \mu_K)$ such that $\mu_i \in \mathcal H$, for all $i \in [K]$.  Let $a^*(\mu)$ denote the optimal arm in $\mu$ and let $\operatorname{Alt}(\mu)$ denote the collection of all the bandit instances with optimal arm different from that in $\mu$, i.e.,
\[ \operatorname{Alt}(\mu)=\lrset{ \nu \in \mathcal H^K: ~  m(\nu_{a^*(\mu)}) < \max\limits_{b\in[K]} ~ m(\nu_b) }.  \]

Under a  $\delta$-correct algorithm acting on $\mu$, for $\delta \in (0,1)$ and $\nu\in\operatorname{Alt}(\mu)$, recall the following consequence of data-processing inequality (see \citet[Section 2]{garivier2019explore}):
\begin{equation}
\label{eq:lb}
\sum_{i=1}^K \mathbb{E}_{\mu} (N_i(\tau_\delta)) \, \KL(\mu_i , \nu_i) \geq d(\delta, 1-\delta) \geq \log \left (\frac{1}{2.4 \delta} \right ),
\end{equation}
where $N_i(t)$ denotes the number of times arm $i$ is pulled by the algorithm in $t$ trials, and $\tau_\delta= \sum_{i=1}^K N_i(\tau_\delta)$ denotes the algorithm's termination time. It relates the expected log-likelihood ratio of observing samples under $\mu$ and $\nu$ (l.h.s. in~\eqref{eq:lb}) and the Bernoulli KL-divergence between the probabilities of events under the two instances (r.h.s. in~\eqref{eq:lb}). 

The following lemma helps in proving our negative result in Theorem~\ref{theorem:thm_main}. 
\begin{lemma}  \label{lem:lem201}
     For $\mathcal H = \mathcal P(\Re)$, given  $\eta \in \mathcal H$, for any finite $a >0$ and $b> m(\eta)$,
     there exists a distribution $\kappa \in \cal H$ such that 
     \begin{equation} \label{eqn:neg_result_odd_space1}
     \KL({\eta}, \kappa)   \leq a \;\;\;\;\;\text{ and }\;\;\;\;\; m(\kappa) \geq b.
     \end{equation}
\end{lemma}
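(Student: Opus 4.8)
The plan is to take $\kappa$ to be a small perturbation of $\eta$ that relocates a tiny fraction of mass far out to the right: this keeps $\kappa$ close to $\eta$ in KL while pushing its mean past $b$. Concretely, for $\epsilon\in(0,1)$ and a distribution $\nu$ to be chosen later, set $\kappa = \kappa_{\epsilon,\nu} \triangleq (1-\epsilon)\eta + \epsilon\nu$, which is clearly a probability measure on $\Re$.

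First I would control the divergence, uniformly over $\nu$. Since $\kappa(A)\ge (1-\epsilon)\eta(A)$ for every measurable $A$, we have $\eta\ll\kappa$ with $\frac{d\eta}{d\kappa}\le \frac{1}{1-\epsilon}$ $\eta$-almost everywhere, and therefore $\KL(\eta,\kappa)=\int \log\frac{d\eta}{d\kappa}\,d\eta \le -\log(1-\epsilon)$ (the integral being well defined, as it is nonnegative and bounded above by this finite constant). Hence, using $a>0$, fixing $\epsilon \triangleq 1-e^{-a}\in(0,1)$ already guarantees $\KL(\eta,\kappa)\le a$ regardless of the choice of $\nu$.

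With $\epsilon$ now fixed, I would choose $\nu$ to force the mean up. Since $m(\kappa)=(1-\epsilon)m(\eta)+\epsilon\, m(\nu)$, it suffices to make $m(\nu)$ at least $\big(b-(1-\epsilon)m(\eta)\big)/\epsilon$, which is a finite number. Two clean choices work: (i) take $\nu=\delta_y$, the point mass at $y \triangleq \max\lrset{0,\ (b-(1-\epsilon)m(\eta))/\epsilon}$; or (ii) take $\nu = \eta(\,\cdot \mid [M,\infty))$, the distribution $\eta$ conditioned to $[M,\infty)$, which is well defined because the unbounded support of $\eta$ on the positive half-line gives $\eta([M,\infty))>0$ for every $M$, and $m(\nu)=\mathbb{E}_\eta[X\mid X\ge M]\ge M\to\infty$, so that a large enough $M$ suffices. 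Either way, $m(\kappa)\ge b$, completing the proof. Option (ii) is worth stating separately because the resulting $\kappa$ inherits structural properties of $\eta$ (unbounded support, finiteness of moments, light tails), which is exactly what is needed when one later wants $\kappa$ to stay in a prescribed class of distributions $\mathcal{U}$.

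There is no serious obstacle; the only points that need a little care are the Radon--Nikodym bound $\frac{d\eta}{d\kappa}\le\frac{1}{1-\epsilon}$ underpinning the KL estimate, and, under option (ii), the observation that conditioning on a far-out tail sends the conditional mean to infinity --- both immediate consequences of the unbounded-support hypothesis on $\eta$.
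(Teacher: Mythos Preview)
Your argument is correct and is essentially the same construction as the paper's, just phrased more transparently as a mixture. In fact, your option~(ii) is exactly what the paper does: the paper builds $\kappa$ by setting $\kappa(x)=(1-\gamma)\eta(x)$ on $(-\infty,y]$ and scaling the survival function on $(y,\infty)$ by $\beta=1+\gamma\eta(y)/\bar{\eta}(y)$, which, when you unpack the densities, is precisely $\kappa=(1-\gamma)\eta+\gamma\,\eta(\,\cdot\mid(y,\infty))$; the paper then bounds $\KL(\eta,\kappa)\le -\eta(y)\log(1-\gamma)\le -\log(1-\gamma)$ and chooses $\gamma=1-e^{-a}$, just as you do with $\epsilon$.
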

It shows that given a distribution $\eta$, there exists another distribution $\kappa$ with the given mean $b$, such that $\kappa$ is close in $\KL$ divergence to $\eta$. It is easy to see that we can always construct $\kappa$ from $\eta$ by scaling down $\eta$ by a constant close to $1$ and putting the remaining mass at a point in the extreme right, say $y$. The scaling factor and the point $y$ can be chosen as a function of $a$ and $b$. This argument is made rigorous in Section~\ref{sec:klrightdenselemma}. 

\begin{definition}
     A collection of probability distributions $\mathcal{H}$  is referred to as $\KL$ right dense,   if for every $\eta \in \mathcal{H}$,
     and every $a >0$, $b > m(\eta)$,  there exists a distribution  $\kappa \in \mathcal{H}$
     such that (\ref{eqn:neg_result_odd_space1}) holds.
\end{definition}

Observe that a necessary condition for $\mathcal{H}$ to be $\KL$ right dense is that a uniform upper bound on the supports of each member of $\cal H$ does not exist. 

\begin{theorem} \label{theorem:thm_main} 
     For a $\KL$ right dense collection ${\mathcal{H}}$, $\mu\in \mathcal H^K$ such that $a^*(\mu) = 1$, and for a $\delta$-correct algorithm  operating on $\mu$, $\mathbb{E}_\mu\lrp{N_k(\tau_\delta)}=  \infty$ for all $k\in\lrset{2, \dots, K}$.
\end{theorem}

From Lemma~\ref{lem:lem201}, for any $k \geq 2$, one can easily find a bandit instance $\nu \in {\operatorname{Alt}(\mu)}$ such that for some $k\ne 1$ and $\forall i \ne k$, $\nu_i= \mu_i$, $m(\nu_k) > m(\mu_1)$, and $\KL(\mu_k, \nu_k)$ is arbitrarily small. Theorem~\ref{theorem:thm_main} now follows from (\ref{eq:lb}). 

When the only information available about a distribution is that its mean exists, \cite{bahadur1956nonexistence} prove a related impossibility result that there does not exist an effective test of hypothesis for testing  whether the mean of the distribution is zero (also see \cite{Lehmann2006testing}). However, to the best of our knowledge, Theorem~\ref{theorem:thm_main} is the first impossibility result in the best arm (or, equivalently, ranking and selection) setting. 

Theorem~\ref{theorem:thm_main} suggests that further restrictions are needed on  \(\mathcal{H} \) for $\delta$-correct algorithms to provide reasonable performance guarantees. To this end, we limit our analysis to the class \(\mathcal{L}\) defined in (\ref{eq:LClass}) earlier, i.e., we set $\mathcal H = \mathcal L$. Notice that this class includes several heavy-tailed distributions. The techniques developed in the current work to handle this general class of distributions can be specialized for simpler classes to deduce similar results. To illustrate this, in Section~\ref{sec:bdd.meanBAI}, we present optimal bounds for distributions supported in $[0,1]$, highlighting the modifications in the proposed framework.

\section{Lower Bound}\label{sec:lb.mean}
In this section, we present a lower bound on the expected number of samples that a $\delta$-correct algorithm needs to generate when acting on a given bandit instance. This lower bound involves a max-min problem, where the minimization is in the space of probability measures. We characterize its optimizers and bring out the structural properties of the max-min problem, which forms a crucial component of the proposed algorithm.

For $x \in \Re$, let $\delta_x$ denote the probability measure that assigns unit mass to $\lrset{x}$, and $0$ everywhere else. Define $$M := \left[-B^{\frac{1}{1+\epsilon}}, B^{\frac{1}{1+\epsilon}}\right].$$ 
Let $M^o$ denote its interior. The following lemma characterizes means of distributions in $\cal L$. 

\begin{lemma}\label{lem:MeanBounds}
For $\eta\in \mathcal L$, $m(\eta) \in M$. Moreover, $\delta_{B^{{1}/\lrp{1+\epsilon}}}$ and $\delta_{-B^{{1}/\lrp{1+\epsilon}}}$ are the unique probability measures in $\cal L$ with means $B^{\frac{1}{1+\epsilon}}$ and $-B^{\frac{1}{1+\epsilon}}$, respectively.
\end{lemma}
The results in the above lemma follow from applications of Jensen's inequality. We refer the reader to Section~\ref{app:lb:MeanBounds} for its proof. 

Next, let $\mathcal M := \mathcal L^K$, $\mu\in\mathcal M$, and let $a^*(\mu)$ denote the unique best-arm in $\mu$. For simplicity of presentation, we assume that $a^*(\mu) = 1$.  Henceforth, we set $\operatorname{Alt}(\mu)$ to be the collection of bandit instances from $\cal M$ in which arm $1$ is sub-optimal, i.e., 
\[ \operatorname{Alt}(\mu) =  \bigcup\limits_{j\ne 1} \lrset{ \nu \in \mathcal M: ~ m(\nu_j) > m(\nu_1)}. \numberthis\label{eq:altmu.mean} \]
From (\ref{eq:lb}) it follows that for any $\delta$-correct algorithm acting on \(\mu\), for every $\nu \in \operatorname{Alt}(\mu)$, 
\begin{equation*}
\mathbb{E}_{\mu}(\tau_\delta) \inf\limits_{\nu \operatorname{Alt}(\mu)} ~ \sum_{i=1}^K \frac{\mathbb{E}_{\mu} (N_i(\tau_\delta))}{\mathbb{E}_{\mu}(\tau_\delta)} \KL(\mu_i , \nu_i) \geq \log \left(\frac{1}{2.4 \delta} \right ). 
\end{equation*}

Let \(\Sigma_K \) denote probability simplex in \(\Re^K\). It then follows that
\[\mathbb{E}_{\mu}(\tau_\delta) \ge \frac{\log \frac{1}{2.4\delta}}{V(\mu)},\quad \text{ where } \quad V(\mu) = \sup\limits_{t\in\Sigma^K}\inf\limits_{\nu\in \operatorname{Alt}(\mu)} ~ \sum_{i=1}^K t_i\KL(\mu_i , \nu_i). \numberthis \label{eq:lb3}
\]

We remark that the problem of computing the lower bound on $\mathbb{E}_{\mu}(\tau_\delta)$ reduces to solving the above max-min problem. While in the case of SPEF, the above inner minimization problem in (\ref{eq:lb3}) is essentially an optimization problem in the Euclidean space, in our setting, it is in the space of probability measures. We now exploit the structure of this max-min problem to arrive at an equivalent but more straightforward representation, which will be essential for implementing our algorithm and its analysis. 

\subsection{Simplification of the Lower Bound and its Properties}\label{sec:simpLB}

Recall that for \(\eta\in\mathcal{P}(\Re)\) and \(x \in \Re\),   \(\KLinfU(\eta,x)\) and $\KLinfL(\eta,x)$ are defined in (\ref{eq:KLinf}). The following lemma gives an alternative representation for the max-min problem associated with the lower bound in~\eqref{eq:lb3}.

\begin{lemma} \label{lem:maxminsimple.mean}
	For $\mu \in \mathcal M$ with $a^*(\mu) = 1$, the inner minimization problem in $V(\mu)$ equals 
	\[ \min\limits_{j\ne 1} ~ \inf \limits_{x\le y} \lrset{t_1 \KLinfL(\mu_1, x) + t_j \KLinfU(\mu_j, y)},  \]
	and hence, 
	\[V(\mu) = \sup\limits_{t\in\Sigma_K} ~ \min\limits_{j\ne 1}~ \inf\limits_{x\le y} ~ \lrset{t_1 \KLinfL(\mu_1, x) + t_j \KLinfU(\mu_j, y)} .\numberthis \label{eq:Vmu.mean} \]
\end{lemma} 

The proof of the above lemma uses the observation that the bandit instances from $\operatorname{Alt}(\mu)$ attaining the infimum in $V(\mu)$ differ from $\mu$ in distributions corresponding to at most $2$ arms. See \cite{garivier2016optimal} for a similar observation in the SPEF setting. We refer the reader to Section~\ref{sec:proofmaxminmean} for proof of the above result.

To characterize the optimizers in (\ref{eq:lb3}), or equivalently in~\eqref{eq:Vmu.mean}, and to study their topological properties that will be useful in the analysis of the proposed algorithm, we endow $\mathcal P(\Re)$, and hence $\mathcal L$, with the topology of weak convergence (see Section~\ref{sec:WC} for a background). It is metrized by the \emph{L\'evy metric} on \( \mathcal{P}(\Re) \) (denoted by \(d_L\)), which is defined below. Thus, convergence of sequences in the weak topology is equivalent to their convergence in the L\'evy metric (see, for example, \citet[Theorem 6.8]{billingsley2013convergence}, \citet[Theorem D.8]{dembo2010large}). 
\begin{definition}[L\'evy metric]
{For \(\eta , \kappa \in \mathcal{P}(\Re) \), 
$$d_L(\eta,\kappa):= \inf\lrset{\gamma > 0 : F_\eta(x-\gamma)-\gamma \leq F_\kappa(x) \leq F_\eta(x+\gamma) + \gamma, ~ \forall x\in\Re },$$
where for $x\in\Re$, $F_\eta(x) := \eta((-\infty, x])$ and $F_\kappa(x) := \kappa((-\infty, x])$.}
\end{definition}


The following lemma establishes properties of the distributions in $\cal L$ endowed with L\'evy metric, which will be used to prove the convergence of the empirical estimates used by the algorithm. We refer the reader to Section~\ref{sec:proof_propL} for proof of the Lemma~\ref{lem:propL}.
\begin{lemma}\label{lem:propL}
	\(\cal L\) is convex and compact in the topology of weak convergence. Random variables associated with its members form a uniformly integrable collection. Furthermore, for a sequence of distributions \(\eta_n \in \cal L\) such that $\eta_n$ converges to  \(\eta \in \cal L \) in L\'evy metric, we have \(m(\eta_n) \longrightarrow m(\eta) \).
\end{lemma}

Next, observe that the max-min problem in~\eqref{eq:Vmu.mean} involves the two functions $\KLinfL$ and $\KLinfU$, defined in~\eqref{eq:KLinf}. We now present structural and topological properties of these functions that will bring out the structure of the max-min problem in~\eqref{eq:Vmu.mean}. Joint continuity of these functions, in particular, will play a crucial role in proving sample complexity bound for the proposed algorithm.

\begin{lemma}\label{lem:propklinf}
 Consider L\'evy metric (or topology of weak convergence) on $\mathcal P(\Re)$. 
	\begin{enumerate}[label=(\alph*)]
		\item \label{lem:ConvexandstconvexklinfU} $\KLinfU$ is a jointly convex function. For $\eta \in \mathcal L$, and $x \in M^o$ such that $x > m(\eta)$, $\KLinfU(\eta,x)$ is a strictly convex function of $x$. Moreover, it equals $0$ for $x \le m(\eta)$.
		\item \label{lem:ConvexandstconvexklinfL} $\KLinfL$ is a jointly convex function. For $\eta \in \mathcal L$, and $x\in M^o$ such that $x < m(\eta)$, $\KLinfL(\eta,x)$ is a strictly convex function of $x$. Moreover, it equals $0$ for $x \ge m(\eta)$.
		\item \label{lem:cont_klinf_mean} When restricted to $\mathcal L \times M^o$, $\KLinfU$ and $\KLinfL$ are  jointly continuous in their arguments. 
	\end{enumerate}
\end{lemma}

From~\eqref{eq:KLinf}, we have that for $\eta \in \mathcal L$ and $x\le m(\eta)$, $\KLinfU(\eta,x) = 0$. This follows from feasibility of $\eta$ for $\KLinfU$ and non-negativity of $\KL$. Similarly, for $x\ge m(\eta)$, $\KLinfL(\eta,x) = 0$. Convexity for these functions also follows from their definitions, along with convexity of $\KL$ divergence. The proof for joint continuity is technically challenging. We separately prove the joint lower-semicontinuity and upper-semicontinuity. The proofs use the properties of probability measures in \( \cal L \) from Lemma~\ref{lem:propL} and the dual representations for \( \KLinfL\) and  \(\KLinfU\) from  Theorem~\ref{th:klinfDual_mean}, presented later. Lower semicontinuity is argued from the primal formulation. To prove the upper-semicontinuity, we use the dual representations along with Berge's Theorem and associated results mentioned in Section~\ref{sec:Berge}. We refer the reader to Section~\ref{sec:prop.klinf.mean} for a detailed proof of the lemma.

Using the properties from the above lemma, we further simplify the representation for max-min problem in ~\eqref{eq:Vmu.mean}. To this end, we introduce some notation.

\paragraph{Notation. }Given $\mu \in \mathcal M$, recall that $a^*(\mu) = 1$. For \(j \in \lrset{2,\dots,K}\), consider functions $g_j: \mathcal M \times \Re^3\rightarrow \Re$  and  $G_j: \mathcal M\times\Re^2\rightarrow\Re. $ 
Let $\Re^+$ denote the set of non-negative reals. Then, for $x\in\Re$, $y\in\Re^+$ and $z\in\Re^+$, 
\[g_j(\mu,y,z,x)  :=  y \KLinfL(\mu_1,x)+z \KLinfU(\mu_j,x)\]
and
\[G_j(\mu,y,z) := \inf_x~\lrset{ g_j(y,z,x) : x\in  [m(\mu_j), m(\mu_1)]}. \]
\begin{lemma}\label{lem:positiveG}
	For $\mu\in\mathcal M^o$ with $a^*(\mu) = 1$, and for $y > 0$ and $z > 0$, $G_j(\mu,y,z) > 0~$ for all $j\ne 1$.
\end{lemma}

The lemma above shows that $G_j$ is strictly positive if $y$ and $z$ are non-zero. The proof follows immediately from continuity of the two $\KL$ projection functions (Lemma~\ref{lem:propklinf}\ref{lem:cont_klinf_mean}) along with the observation that for $x$ in the specified range, at least one of $\KLinfL$ and $\KLinfU$ is strictly positive. We refer the reader to Section~\ref{app:positiveG} for a proof of the lemma.

Next, let $x^*_j: \mathcal M^o\times [0,1]^2 \rightarrow \Re$, where for $\nu\in\mathcal M^o$, $y\in [0,1]$, $z\in [0,1]$, and $j\ne 1$,  $x^*_j(\nu, y,z)$ denotes the set of minimizers in $G_j(\nu,y,z)$. Further, let $t^*:\mathcal M \rightarrow \Sigma_K$, where for $\nu\in\mathcal M$,  $t^*(\nu)$ denotes the set of maximizers in $V(\nu)$. Overloading notation, we use $x^*_j(\nu)$ to denote the set of minimizers in $G_j(\nu, t^*_1(\nu), t^*_j(\nu))$. 

We now simplify the minimization problem in $V(\cdot)$ further by restricting the feasible region of the inner infimum to a compact set. This representation will enable us to use the classical Berge's Theorem to prove the continuity of the set of maximizers in $V(\cdot)$ as a function of the bandit instance. We refer the reader to Section~\ref{sec:Berge} for a definition of continuity for set-valued maps and details of Berge's Theorem and related results.

\begin{lemma}\label{lem:AltVmu.mean}
For $\mu \in \mathcal M^o$ with $a^*(\mu) = 1$, $V(\mu) > 0$, and it simplifies as 
\[ V(\mu) = \max\limits_{t\in\Sigma_K}~\min\limits_{j\ne 1 }~ G_j(\mu,t_1,t_j).\numberthis\label{eq:altVmurep.mean} \]
$x^{*}_j(\cdot,\cdot,\cdot)$ is singleton and a jointly-continuous function of its arguments. Moreover, for each $j\ne 1$, functions $G_j$ are jointly continuous. 
\end{lemma}

We will use (\ref{eq:altVmurep.mean}) and the formulation of $V(\mu)$ in (\ref{eq:Vmu.mean}) interchangeably. To arrive at the alternative representation for $V(\mu)$, we use strict convexity of the two $\KL$-projection functions (Lemma~\ref{lem:propklinf}\ref{lem:ConvexandstconvexklinfU} and \ref{lem:ConvexandstconvexklinfL}) for appropriate ranges of the second argument. To prove the continuity of various functions, we use Berge's Theorem at several steps, along with joint continuity of $\KLinfU$ and $\KLinfL$ in their respective arguments (Lemma~\ref{lem:propklinf}\ref{lem:cont_klinf_mean}). Strict positivity of $V(\mu)$, then, follows from Lemma~\ref{lem:positiveG}. We refer the reader to Section~\ref{sec:prooflemAltVmu.mean} for proof of the above result.

In the following theorem, we show that if there is a unique best arm in $\mu$, then there is a unique maximizer for the lower bound problem. Furthermore, the maximizer, $t^*(\cdot)$, is a continuous function of the bandit instance (in the topology of weak convergence on $\cal L$). As mentioned earlier, the proposed algorithm is a plug-and-play strategy that solves the max-min problem for the empirical distributions and tracks its maximizers. The continuity of the maximizers will be crucial for the proof of convergence of algorithmic estimates and allocation to the corresponding true values.

\begin{theorem}\label{PropOpt}
     For $\mu\in \mathcal M^o$ with $a^*(\mu) = 1$, the set $t^*(\mu)$ is singleton. Moreover,
     \begin{enumerate}[label=(\alph*)]
          \item \(\forall i,\; t^*_i(\mu) > 0~\) and \(~\sum_{i}t^*_i(\mu)=1 \) \label{PropOpt_gre0}, 
          \item \(G_2(\mu, t^*_1(\mu),t^*_2(\mu)) = G_j(\mu, t^*_1(\mu),t^*_j(\mu)), \text{ for } j \in \lrset{3,\ldots,K}\). \label{PropOpt_EqualVals}
     \end{enumerate}
     Furthermore, \(V(\mu)\) equals \(G_2(\mu, t^{*}_1(\mu),t^{*}_2(\mu))\),  and \(t^*(\cdot)\) is a continuous function.
\end{theorem}

\noindent\emph{Proof sketch: } We show \ref{PropOpt_gre0} and \ref{PropOpt_EqualVals} by contradiction. For \ref{PropOpt_gre0}, we argue that if for $t\in t^*(\mu)$ there is  an index $i$ such that $t_i = 0$, then $V(\mu) = 0$ contradicting the strict positivity of $V(\mu)$ (Lemma~\ref{lem:AltVmu.mean}). For \ref{PropOpt_EqualVals}, we first establish the monotonicity of $G_j(\mu,1,\cdot)$ for all $j\ne 1$. We, then, argue that if for $t\in t^*(\mu)$, \ref{PropOpt_EqualVals} doesn't hold, then by continuity of $G_j(\mu, \cdot, \cdot)$ (Lemma~\ref{lem:AltVmu.mean}), we can perturb $t$ slightly to arrive at $t'$ for which the value of $\min_j G_j(\mu, t'_1, t'_j)$ increases, contradicting the optimality of $t$.

Next, we use \ref{PropOpt_EqualVals} and strict monotonicity of $G_j(\mu,1,\cdot)$ to prove the uniqueness of maximizers, $t^*$. To prove the continuity of $t^*$, we first show that it is an upper-hemicontinuous map. Its upper-hemicontinuity follows from Berge's Theorem applied to the max-min problem. Then, continuity follows since $t^*(\mu)$ is a singleton. We refer the reader to Section~\ref{sec:Berge} for the definition of upper-hemicontinuity and Berge's and related theorems and to Section~\ref{sec:proofPropOpt} for a detailed proof of the theorem.

\subsection{Understanding the Lower Bound: Dual Formulations}\label{sec:dual}
The proposed algorithm, discussed in Section~\ref{sec:trackstop.mean}, relies on solving the max-min problem in (\ref{eq:Vmu.mean}) repeatedly with $\mu$ replaced by its running estimator, which may not belong to $\cal M$. Thus, efficiently solving for the optimal proportions $t^*(\nu)$ for any $\nu \in \mathcal P(\Re)$ is crucial for the algorithm's implementation. Towards this, in this section, we develop alternative representations (dual) for the two $\KL$-projection functionals. Unlike the primal problems for these, which are optimization problems in the space of probability measures, the dual formulations are convex optimization problems in two real-valued variables.

\paragraph{Notation. }Let $\Re^+$ and $\Re^-$ denote the collection of non-negative and non-positive real numbers. For \(\eta \in\mathcal{P}(\Re)\), let \(\Supp(\eta)\) denote the collection of points in the support of measure \(\eta\). For \(x\in M^o\), \(\bm{\lambda} \in \Re^2\), $\bm{\gamma}\in \Re^2$, and \(X \in \Re \), define \[
g^U( X,\bm{\lambda}, x) = 1 - \lambda_1 (X-x)  - \lambda_2 (B-\abs{X}^{1+\epsilon}).  \] 
Similarly, consider a symmetric quantity, 
\[g^L(X,\bm{\gamma},x) =  1 + \gamma_1(X-x) -\gamma_2(B-\abs{X}^{1+\epsilon}) .\] 
Furthermore, define the sets 
\[{S}^U(x) = \lrset{\lambda_1 \geq 0, \lambda_2\geq 0: ~~ 1 + \lambda_1 x - \lambda_2 B -  \frac{\epsilon\lambda^{1+\frac{1}{\epsilon}}_1}{(1+\epsilon)^{1+\frac{1}{\epsilon}} \lambda^{\frac{1}{\epsilon}}_2 } \ge 0 }, \numberthis\label{eq:SU} \]
and 
\[{S}^L(x) = \lrset{\gamma_1\geq 0, \gamma_2\geq 0: ~~ 1 - \gamma_1 x -\gamma_2 B - \frac{\epsilon \gamma^{1+\frac{1}{\epsilon}}_1 }{ \gamma^{\frac{1}{\epsilon}}_2 (1+\epsilon)^{1+ \frac{1}{\epsilon}} } }. \numberthis\label{eq:SL} \]  
The set $S^U(x)$ corresponds to values of the dual parameters for which $g^U(y,\bm{\lambda}, x) \ge 0$, for all $y\in\Re$. Similarly, $S^L(x)$ corresponds to those $\bm \gamma$ for which, $g^L(y, \bm \gamma, x) \ge 0$, for all $y\in\Re$.  Notice that these are convex sets. 

\begin{theorem}\label{th:klinfDual_mean}
     \begin{enumerate}[label=(\alph*)] 
     \item \label{th:klinfUDual_mean}For \(\eta\in\mathcal{P}(\Re) \) and \(x \in M^o \),  
          $$\KLinfU(\eta,x) = \max_{\bm{\lambda} \in S^U(x)}~ \E{\eta}{\log\lrp{g^U(X,\bm{\lambda},x)}} .$$ 
          The maximum in this expression is attained at a unique \({\bm \lambda^*} \in S^U(x)\). The unique probability measure \(\kappa^* \in \cal L\) that achieves infimum in the primal problem satisfies $$ \frac{d\kappa^*}{d\eta}(y) = \frac{1}{g^U(y,\bm{\lambda^*},x)}, \quad \text{ for } y\in \Supp(\eta).$$ 
          $\kappa^*$ has at most $1$ point in its support outside $\Supp(\eta)$, given by 
          \[ y^* = \lrp{\frac{\lambda^*_1}{\lambda^*_2 (1+\epsilon)}}^{\frac{1}{\epsilon}}.\]
          Moreover, this extra point, $y^*$, satisfies: $g^U(y^*,\bm{\lambda^*},x) = 0$. 
          Additionally, as long as $\eta$ is itself not feasible for $\KLinfU(\eta,x)$ problem, $\lambda^*_2 > 0$ and the moment constraint is tight, i.e., $\E{\kappa^*}{|X|^{1+\epsilon}} = B.$ Furthermore, for $\eta\in\mathcal L$ and $x\in M^o$ and $x > m(\eta)$, the mean constraint is also tight, i.e., $m(\kappa^*) = x$. 

     \item For \(\eta\in\mathcal{P}(\Re) \) and \(x \in M^o \),  
          $$\KLinfL(\eta,x) = \max_{\bm{\gamma} \in S^L(x)}~ \E{\eta}{\log\lrp{g^L(X,\bm{\gamma},x)}} .$$ 
          The maximum in this expression is attained at a unique \({\bm \gamma^*} \in S^L(x)\). The unique probability measure \(\kappa^* \in \cal L\)  that achieves infimum in the primal problem satisfies $$ \frac{d\kappa^*}{d\eta}(y) = \frac{1}{g^L(y,\bm{\gamma^*},x)}, \quad \text{ for } y\in \Supp(\eta).$$ 
          $\kappa^*$ has at most $1$ point in its support outside $\Supp(\eta)$, given by 
          \[ y^* = - \lrp{\frac{\gamma^*_1}{\gamma^*_2 (1+\epsilon)}}^{\frac{1}{\epsilon}}.\]          
          Moreover, this extra point, $y^*$, satisfies: $g^L(y^*,\bm{\gamma^*},x) = 0$. 
          Additionally, as long as $\eta$ is itself not feasible for $\KLinfL(\eta,x)$ problem, $\gamma^*_2 > 0 $ and the moment constraint is tight, i.e., $\E{\kappa^*}{|X|^{1+\epsilon}} = B.$ Furthermore, for $\eta\in\mathcal L$ and $x\in M^o$ and $x < m(\eta)$, the mean constraint is also tight, i.e., $m(\kappa^*) = x$. 
     \end{enumerate}
\end{theorem}
Since the primal optimization problem is in the space of probability measures, the alternative representations from the theorem above help in computing these $\KL$ projection functions. In the proof of the above theorem, we first show that the alternate representations for $\KLinfU$ and $\KLinfL$  are their dual formulations, and the parameters $\bm{\lambda}$ and $\bm{\gamma}$ are the corresponding dual variables. The sets ${S}^U$ and $S^L$ correspond to the feasible values of these dual variables. We argue that strong duality holds in our setting, implying that the primal-optimal value equals the dual-optimal value. The quantities, $\KLinfU$ and $\KLinfL$ are symmetric. Hence, we only present the proof for Theorem~\ref{th:klinfDual_mean}\ref{th:klinfUDual_mean} in Section~\ref{Proof_th:klinfDual_mean}. Proof for the other part follows similarly. 

\begin{remark}\emph{When the optimization is in the first argument of $\KL$ with a fixed second argument, it is well known that the optimizer is an exponentially twisted version of the second argument, which is also absolutely continuous with respect to the second argument. Notably, in the current setup with a fixed first argument but with optimization in the second argument of $\KL$, the optimizer is a (non-uniform) scaling of the first argument. Moreover, it is no longer restricted to be absolutely continuous with respect to the fixed first argument and can put mass at an additional point. The theorem above characterizes this unique point in terms of the optimal dual variables.}
\end{remark}

In our algorithm, the $V(\mu)$ statistic appears with the empirical estimates, instead of $\mu$, at two places - first for computing the maximizers of the max-min problem to decide which arm to sample next, second, our stopping statistic is related to the inner minimization problem in $V(\cdot)$. From~\eqref{eq:altVmurep.mean}, it is easily seen that efficiently computing the two $\KL$-projection functionals is crucial for implementing our sampling and stopping rules. Lemma~\ref{lem:compactdualspace} below shows that the feasible regions in the dual representations of these $\KL$ projection functions are compact. Thus, for computing $\KLinfU$ and $\KLinfL$, we can use versions of ellipsoid methods on the dual formulations. We refer the reader to Section~\ref{app:compactDual} for proof of Lemma~\ref{lem:compactdualspace}. 
Define  
\[ \bar{M}:= \left[ -B^{\frac{1}{1+\epsilon}}, B^{\frac{1}{1+\epsilon}} \right) \quad \text{ and }\quad \ubar{M}:= \left( -B^{\frac{1}{1+\epsilon}}, B^{\frac{1}{1+\epsilon}}\right]. \]
\begin{lemma}\label{lem:compactdualspace}
For $x \in \bar{M}$ and $y\in \ubar{M}$, $S^U(x)$ and $S^L(y)$ are convex and compact subsets of $\Re^2$. 
\end{lemma}

Theorem~\ref{th:klinfDual_mean} and Lemma~\ref{lem:compactdualspace} also play a crucial role in establishing the $\delta$-correctness of the proposed algorithm. Consider $\eta\in\cal L$ and the empirical distribution corresponding to $n$ samples from it, denoted by $\hat{\eta}_n$. Observe from the dual representations that $\KLinfU(\hat{\eta}_n, \cdot)$ and $\KLinfL(\hat{\eta}_n, \cdot)$ are maximum of empirical averages. As we will see later, for the algorithm to be $\delta$-correct, we require a concentration of these $\KL$-projection functions evaluated for the empirical distributions. For this, we use the dual formulations from Theorem~\ref{th:klinfDual_mean} to arrive at appropriate mixtures of super-martingales. At a very high level, for each fixed value of the dual variables, we construct a martingale parameterized by the dual variables. We mix these different martingales with a uniform prior over the dual regions. This is possible as the dual-feasible sets are compact (Lemma~\ref{lem:compactdualspace}). 

\section{The Optimal Algorithm and its Guarantees}\label{sec:trackstop.mean}
In this section, we propose a $\delta$-correct algorithm and show that its sample complexity matches the lower bound exactly as $\delta \rightarrow 0$. Recall that a $\delta$-correct algorithm has a \emph{sampling rule} that, at any stage, based on the information available, decides which arm to sample next. Further,  it has a \emph{stopping rule} and, at the stopping time, it announces an estimate for the arm with the highest mean reward while ensuring that the probability of incorrect assessment is less than or equal to a  pre-specified $\delta \in (0,1)$. We describe the proposed sampling and stopping rules in the following sections. 

\subsection{Randomized Tracking Rule}
Our sampling algorithm relies on solving the max-min lower bound problem with the vector of empirical distributions used as a proxy for the unknown true bandit instance $\mu$. Recall from Theorem~\ref{PropOpt} that $t^*(\cdot)$ is a continuous function on $\cal M$. Since at time $n$, the empirical distributions $\hat{\mu}(n)$ may not belong to $\mathcal M$, we first project $\hat{\mu}(n)$ to $\mathcal M$ in the Kolmogorov metric $d_K$ using the projection map $\Pi$ defined in (\ref{eq:KolProj.mean}) below. We then compute the maximizers for these projected empirical distributions, which guide the sampling strategy. The proposed algorithm also explores each arm to ensure that no arm is starved with insufficient samples. 

For $\eta\in\mathcal P(\Re)$, $x\in\Re$, recall that $F_\eta(x) = \eta((-\infty, x])$ denotes the distribution function of $\eta$.  Define the map $\Pi = (\tilde{\Pi}, \dots, \tilde{\Pi})$, where for $\eta\in\mathcal P(\Re)$, $\tilde{\Pi}: \mathcal P(\Re) \rightarrow \mathcal L$ is given by 
\[ \tilde{\Pi}(\eta) \in \arg\!\min\limits_{\kappa\in\mathcal L} d_K(\eta,\kappa), \quad \text{ where } \quad d_K(\eta,\kappa) = \sup\limits_{x\in\Re} \abs{F_\eta(x) - F_\kappa(x)}. \numberthis\label{eq:KolProj.mean}\]

At a high level, the exploration ensures that $\hat{\mu}(n)$ converges to $\mu$ and the sampling rule and continuity of $t^*$ together guarantee that the fraction of samples allocated to each arm converges to $t^*(\mu)$. \cite{garivier2016optimal} and \cite{juneja2019partition} follow a similar plug-in strategy for SPEF distributions, where their algorithms use the empirical means as proxies for the true means. We point out that this projection step discussed above is required for the theoretical guarantees of the proposed algorithm. However, in the numerical experiments, we do not project the empirical distributions to $\cal L$, even though it is not supported by the current theory.

Since solving the max-min lower bound problem can be computationally demanding, we solve it periodically after fixed, well-chosen $m > 1$ samples (which is allowed to be a function of \(\delta \)), where $m$ may then be optimized to minimize the overall computation effort. The specifics of the proposed algorithm $\bf AL_1$, are as follows:  
\begin{enumerate}
     \item
     Initialize by allocating \(m\) samples in a round-robin way to generate at least  $\floor{\frac{m}{K}}$ samples from each arm. Set  $l=1$ and let $l m$ denote the total number of samples generated.
     \item
     Check if the stopping condition (discussed in (\ref{eq:beta.mean})) is met. If not, compute the optimal proportions for the projected empirical distributions, i.e., compute \(t^*(\Pi(\hat{\mu}(lm)))\). 
     \item
     Compute starvation \(s_a\) for each arm as \[s_a:= \max\lrset{0, {((l+1)m)}^{1/2} -N_{a}(lm)}.\]
     \item if \(m \geq \sum_a s_a\), generate \(s_a\) samples from each arm $a$.
     Specifically, first, generate $s_1$ samples from arm 1,
     then $s_2$ samples from arm 2 and so on. Additionally, generate \( \max\lrset{m-\sum_a s_a,0}\) independent samples from \(t^*(\Pi(\hat{\mu}(lm)))\in\Sigma_K\) to 
     arrive at the number of remaining samples to be allocated to each arm, and sample each arm that many times. 
     
     \item Else, if $\sum_a s_a>m$,  generate \(\hat{s}_a\) samples from each  arm \(a\), where \(\lrset{\hat{s}_a}_{a=1}^K\) are a solution to the load balancing problem: \(\min \lrp{\max_a \lrset{ s_a-\hat{s}_a}} \text{ s.t. } s_a\geq \hat{s}_a\geq 0 \;\forall a\in[K],\) and \(\sum_a\hat{s}_a=m  . \)
     Again, first generate $\hat{s}_1$ samples from arm 1,
     then $\hat{s}_2$ samples from arm 2 and so on.
     \item  Increment $l$ by 1 and return to step 2.
\end{enumerate}

The following lemma proves that at the end $l^{th}$ interval of length \(m\), the sampling algorithm ensures at least (\(\sqrt{lm}-1 \)) samples to each arm, guaranteeing that no arm is starved. 

\begin{lemma}
     \label{lemma:MinNoOfSamples}
     Set \(m\geq (K+1)^2\).  Algorithm ${\bf AL_1}$  ensures that $N_a(lm) \geq {(l m)}^{1/2}-1$ for all $l\geq 1$.
\end{lemma}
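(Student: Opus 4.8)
The plan is to argue by induction on the round index $l$, showing that after round $l$ completes, every arm has been pulled at least $(lm)^{1/2}-1$ times. For the base case $l=1$, the round-robin initialization gives at least $\floor{m/K}$ samples per arm, and since $m\geq(K+1)^2$ we have $\floor{m/K}\geq m/K - 1 \geq (K+1) - 1 = K \geq (m)^{1/2}-1$ once we check $m/K \geq m^{1/2}$, i.e. $m \geq K^2$, which holds; so the base case reduces to a short arithmetic verification using $m\geq(K+1)^2$.

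For the inductive step, assume $N_a(lm)\geq (lm)^{1/2}-1$ for all $a$, and consider round $l+1$, which adds a total of exactly $m$ samples. The key observation is how the starvation quantities $s_a = ((l+1)m)^{1/2} - N_a(lm))^+$ behave. First I would bound the total starvation: using the inductive hypothesis, $s_a \leq ((l+1)m)^{1/2} - (lm)^{1/2} + 1$ for each $a$ (when the positive part is active), and summing over $a$ gives $\sum_a s_a \leq K\big(((l+1)m)^{1/2} - (lm)^{1/2}\big) + K$. I would then show this is at most $m$ whenever $m\geq(K+1)^2$, by noting $((l+1)m)^{1/2} - (lm)^{1/2} = m^{1/2}\big(\sqrt{l+1}-\sqrt l\big) \leq m^{1/2}/(2\sqrt l) \leq m^{1/2}/2$, so $\sum_a s_a \leq Km^{1/2}/2 + K$; comparing with $m$ and using $m^{1/2}\geq K+1$ should close it. This means we are always in the "Else" branch being avoided — i.e. Step 4 applies — so each arm $a$ actually receives at least its full $s_a$ samples in round $l+1$. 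Hence $N_a((l+1)m) \geq N_a(lm) + s_a \geq \max\{N_a(lm),\,((l+1)m)^{1/2}\} \geq ((l+1)m)^{1/2} - 1$, completing the induction. (If $s_a=0$ then $N_a(lm)\geq((l+1)m)^{1/2}$ already and the bound is immediate.)

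I would also need to handle the boundary bookkeeping carefully: Step 1 generates $m$ samples and sets $l=1$ with $lm=m$ total samples, so "$N_a(lm)$" refers to the count after round $l$'s sampling is complete; the statement is about $l\geq 1$, so I should make sure the indexing in the induction matches the algorithm's convention that round $l$ ends with exactly $lm$ total pulls. A minor subtlety in Step 4 is that the $m-\sum_a s_a$ extra samples drawn from $t^*(\hat\mu(lm))$ could in principle give some arm more and others none, but since those are added on top of the guaranteed $s_a$ pulls, they only help; the monotone lower bound $N_a((l+1)m)\geq N_a(lm)+s_a$ is all that is used.

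The main obstacle is purely the arithmetic of verifying $\sum_a s_a \leq m$ under the hypothesis $m\geq(K+1)^2$ uniformly in $l\geq1$; the telescoping/square-root estimate $\sqrt{l+1}-\sqrt l \leq \tfrac{1}{2\sqrt l}$ makes the worst case $l=1$, and there one needs $K\cdot\tfrac{m^{1/2}}{2}(\sqrt2-1) + K \leq m$, which follows comfortably from $m^{1/2}\geq K+1$ since $(\sqrt2-1)/2 < 1$. Everything else is routine, so I expect the proof to be short once the inductive framing and the $m\geq(K+1)^2$ slack are set up correctly.
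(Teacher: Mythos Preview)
Your proposal is correct and follows essentially the same inductive strategy as the paper: the base case uses $\lfloor m/K\rfloor \geq m^{1/2}-1$ from $m\geq (K+1)^2$, and the inductive step bounds the per-arm deficit by $m^{1/2}(\sqrt{l+1}-\sqrt{l})$ (plus the $+1$ you correctly note from the algorithm targeting $((l+1)m)^{1/2}$ rather than $((l+1)m)^{1/2}-1$) and verifies the total is at most $m$. If anything, you are more careful than the paper in explicitly confirming that Step~4 always applies; the only blemish is the expression $K\cdot\tfrac{m^{1/2}}{2}(\sqrt2-1)+K$ in your final paragraph, which conflates the bound $1/(2\sqrt l)$ with the exact value $\sqrt2-1$ --- but your earlier, cruder bound $Km^{1/2}/2+K\leq m$ already closes the argument.
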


Next, we show that our sampling algorithm also ensures that the fraction of times each arm is pulled is close to its optimal proportion \(t^*_a(\mu)\), i.e.,

\begin{lemma}
     \label{Lem:ASconvergence}
     For all \(a\in\lrset{1,\dots,K}\), $$ \frac{N_a(lm)}{lm} \xrightarrow{a.s.} t^*_a(\mu), \text{ as } l\rightarrow \infty.$$ 
\end{lemma}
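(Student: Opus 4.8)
The plan is to show almost-sure convergence of the sampling proportions $N_a(lm)/(lm)$ to $t^*_a(\mu)$ by combining three ingredients: (i) the almost-sure convergence of the empirical distributions $\hat\mu_a(lm)$ to $\mu_a$ in the Wasserstein metric, (ii) the continuity of the optimal-proportion map $t^*(\cdot)$ established in Theorem \ref{PropOpt}, and (iii) a tracking argument showing that the sampling rule in ${\bf AL_1}$ forces the realized proportions to follow the plugged-in targets $t^*(\hat\mu(lm))$. First I would fix a sample path in the probability-one event on which, for every arm $a$, $N_a(lm)\to\infty$ (guaranteed by Lemma \ref{lemma:MinNoOfSamples}, which gives $N_a(lm)\ge (lm)^{1/2}-1$) and on which $\hat\mu_a(lm)\to\mu_a$ in $d_W$ as $l\to\infty$ — the latter holds because $\mathcal{L}$ is uniformly integrable (Lemma \ref{lem:001}) so that the strong law in Wasserstein distance applies to i.i.d.\ draws from each $\mu_a$, and since each arm is sampled infinitely often the per-arm empirical measures converge. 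On such a path, continuity of $t^*$ (Theorem \ref{PropOpt}) yields $t^*(\hat\mu(lm))\to t^*(\mu)$.

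The core of the argument is then the tracking step. Writing $R(lm)$ for the vector whose $a$-th coordinate is the number of times arm $a$ has been pulled "for exploration/starvation" (via the $s_a$, resp.\ $\hat s_a$, in steps 3–5) up to round $l$, and $S(lm)$ for the number of pulls allocated by sampling from the distribution $t^*(\hat\mu(lm))$, I would argue: (a) the total starvation mass $\sum_a R(lm)$ is $O(\sqrt{lm})$, hence $R(lm)/(lm)\to 0$, because starvation is only triggered to lift each $N_a$ up to the $\sqrt{\cdot}$ threshold and Lemma \ref{lemma:MinNoOfSamples} shows this threshold is essentially maintained; (b) for the $t^*$-driven pulls, conditioning on the filtration at round $l$, the increment is a multinomial draw with parameter $t^*(\hat\mu(lm))$, so by a martingale/Azuma or Borel–Cantelli argument the cumulative $t^*$-driven count of arm $a$ divided by the number of $t^*$-driven rounds converges a.s.\ to the Cesàro average of $t^*_a(\hat\mu(jm))$, $j\le l$, which by (i)–(ii) converges to $t^*_a(\mu)$. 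Combining (a) and (b), and using that the $t^*$-driven pulls dominate ($S(lm)/(lm)\to 1$), gives $N_a(lm)/(lm) = (R_a(lm)+(\text{$t^*$-driven count of }a))/(lm) \to t^*_a(\mu)$.

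I expect the main obstacle to be the precise handling of the $t^*$-driven allocation when $\sum_a s_a$ exceeds $m$ (step 5): in those rounds no $t^*$-sampling happens and the load-balancing problem is solved instead, so one must check these rounds are asymptotically negligible. This follows because step 5 is triggered only when $\sum_a s_a>m$, i.e.\ when many arms are below the $\sqrt{\cdot}$ target simultaneously; Lemma \ref{lemma:MinNoOfSamples} shows the target is met for all $l$, so step 5 can occur only finitely often (or on a vanishing fraction of rounds), after which the clean multinomial structure of step 4 applies. A secondary technical point is justifying the exchange between "fraction of $t^*$-driven pulls" and "fraction of all pulls"; this is where one invokes $\sum_a R(lm)=O(\sqrt{lm})=o(lm)$ together with the fact that each round contributes exactly $m$ pulls, so $t^*$-driven rounds constitute a $1-o(1)$ fraction of the total sample budget. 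The rest is a routine Cesàro/Borel–Cantelli bookkeeping exercise, so I would keep it brief in the main body and defer details to the appendix.
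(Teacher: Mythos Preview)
Your proposal is correct and follows essentially the same route as the paper's proof: decompose pulls into starvation-driven and $t^*(\hat\mu)$-driven, use Lemma~\ref{lemma:MinNoOfSamples} to bound the starvation mass by $O(\sqrt{lm})$, invoke a law-of-large-numbers/martingale argument for the multinomial draws, and conclude via continuity of $t^*$ (Theorem~\ref{PropOpt}) together with $\hat\mu(lm)\to\mu$. Your handling of step~5 is also right---Lemma~\ref{lemma:MinNoOfSamples} (with $m\ge(K+1)^2$) guarantees $\sum_a s_a\le m$ for all $l\ge 1$, so step~5 never occurs after initialization and the clean multinomial structure applies throughout.
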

We refer the reader to Sections~\ref{app:lem:MinNoOfSamples} and~\ref{app:lem:ASconvergence} for proofs of the above results. We now discuss the stopping rule for the Algorithm ${\bf AL_1}$. 

\subsection{Modified Empirical Likelihood Ratio Test: The Stopping Rule}\label{sec:ELT.mean}
At each step, generated data suggest a unique arm with the maximum mean (arbitrarily breaking ties, if any), say arm \(j\). Call this the null hypothesis, and its complement (arm \(j\) does not have maximum mean), the alternate hypothesis. For the stopping rule, we consider a modification of the generalized (empirical) likelihood ratio test (see \citet{chernoff1959sequential,owen2001empirical}). The algorithm stops if an appropriately defined likelihood ratio exceeds a specified threshold. Otherwise, it continues to generate samples according to the randomized tracking rule from the previous section. We refer the reader to Section~\ref{ch:EL} for a brief discussion on Empirical Likelihood for non-parametric distributions. 

Numerator in the likelihood ratio that we use is the likelihood under the most likely \(K\)-vector of distributions (not necessarily from $\cal M$) with arm \(j\) having the maximum mean. At time \(n\), since among all \(K\)-vectors of distributions in \(\lrp{\mathcal{P}(\Re)}^K\) with distribution \(j\) having a maximum mean, empirical distribution vector \(\hat{\mu}(n)=\lrp{\hat{\mu}_a(n): a\in [K]}\) maximizes the likelihood of the observed data (see Theorem~\ref{th:ELikelihood}), we take numerator to be the likelihood of observing data under the empirical distribution \(\hat{\mu}(n)\). 

The denominator equals the likelihood of the observed data under the most-likely bandit model (distributions of arms from $\mathcal L$) satisfying the alternate hypothesis, i.e., the likelihood under $\nu\in\mathcal{M}$ that maximizes the likelihood of given data under the alternative hypothesis. It follows from Remark~\ref{rem:discreteEL} that the most-likely model for the denominator likelihood will also be a discrete measure supported at least on all the data points. Thus, the likelihood ratio under consideration is well-defined. 

Recall that the empirical distributions may not belong to $\mathcal L$. Our likelihood ratio stopping statistic described above is also related to the lower bound max-min problem, as discussed next. If at stage $n$, arm $j$ is optimal in $\hat{\mu}(n)$, then consider the set $\operatorname{Alt}(\hat{\mu}(n))$ from (\ref{eq:altmu.mean}), i.e., 
\[\operatorname{Alt}(\hat{\mu}(n)) = \bigcup\limits_{i\ne j}\lrset{ \nu \in \mathcal M: ~ m(\nu_i) > m(\nu_j)  }.\]
The \(\log \lrp{\text{generalized empirical likelihood ratio}}\) described in the previous paragraph can be shown to equal (see Section~\ref{simpSR} for a proof)
\[ S_j(n) = \inf_{\mu' \in \operatorname{Alt}(\hat{\mu}(n))}~ \sum_{a=1}^{K}N_a(n)\KL(\hat{\mu}_a(n),\mu'_a).\] 
In addition, if $\hat{\mu}(n) \in \mathcal M$, then the above statistic simplifies as in Lemma~\ref{lem:maxminsimple.mean}. However, this need not be true always. Our stopping statistic at time $n$, when arm $j$ is optimal in $\hat{\mu}(n) $, is a modification of $S_j(n)$ and is defined to be 
\[ Z_j(n) :=  \min\limits_{a\ne j} ~ \inf\limits_{x\le y} ~ \lrset{N_j(n) \KLinfL(\hat{\mu}_j(n), x) + N_a(n) \KLinfU(\hat{\mu}_a(n), y) }. \numberthis \label{eq:zj}\]
It equals $S_j(n)$ when $\hat{\mu}(n)\in\mathcal M$ (Lemma~\ref{lem:maxminsimple.mean}) and arm $j$ is optimal in $\hat{\mu}(n)$.

\paragraph{Stopping and Recommendation Rules. } If at stage \(n\), arm $j$ is optimal in $\hat{\mu}(n)$, check 

\begin{equation} \label{eq:beta.mean}
Z_{j}(n) \ge \beta(n,\delta), \quad\text{ where }\quad \beta(n,\delta) = \log \frac{K-1}{\delta} + 4 \log (n + 1) + 2.
\end{equation}
The algorithm stops if \(Z_{j}(n)\geq \beta(n,\delta)\), announcing arm $j$ as the one with the maximum mean. If this condition is not met, the algorithm continues generating samples. 

A mild nuance in our analysis is that the stopping condition is checked only after intervals of \(m\), i.e., every time after $m$ samples are generated, and not at each step. Our analysis shows that as long as the size of each batch, denoted by $m$, is $o(\log\frac{1}{\delta})$, the additional samples required due to the delay in checking for the stopping condition are not significant compared to the total number of samples generated by the stopping time. Hence, this does not affect the asymptotic optimality of the proposed algorithm. We now present the theoretical guarantees of the proposed algorithm.

\subsection{Theoretical Guarantees}\label{sec:tg.meanBAI}
Recall that for simplicity of notation, we assume that $a^*(\mu) = 1$. For $\delta > 0$, let \(\tau_{\delta}\) denote the stopping time of the algorithm. It makes an error if at time \(\tau_{\delta}\), $m(\hat{\mu}_j(\tau_{\delta})) > \max_{i\ne j} m(\hat{\mu}_i(\tau_{\delta}))$, for some \(j\ne 1\). Let  \(\mathcal{E}\) denote the error event.  

\begin{theorem}
     \label{bigTh2}
     The algorithm ${\bf AL_1}$, with  \(\beta(n,\delta)\) as in (\ref{eq:beta.mean}), and {\(m=o(\log(1/\delta))\)}, is \(\delta\)-correct, i.e.,
     \begin{equation} \mathbb{P}(\mathcal{E}) \leq \delta. \label{DeltaCorrectub}
     \end{equation}
     Further, 
     \begin{equation}
     \label{SampleComplexityub}
     \limsup\limits_{\delta\rightarrow 0}\frac{\mathbb{E}_{\mu}(\tau_{\delta})}{\log\lrp{1/\delta}}\leq \frac{1}{V(\mu)}. 
     \end{equation}
\end{theorem}
We first analyze \(\delta\)-correctness of algorithm \(\bf{AL_1}\) and present the analysis of the sample complexity towards the end of the current section. 

\subsubsection*{Proof of $\delta$-correctness in Theorem~\ref{bigTh2}}
	The proof of \(\delta \)-correctness relies on the concentration of scaled sums of the empirical versions of the two $\KL$ functionals, i.e., \(\KLinfL(\hat{\kappa}(n),m(\kappa))\) and $\KLinfU(\hat{\kappa}(n), m(\kappa))$, where for \(\kappa\in\mathcal{L}\), \(\hat{\kappa}(n)\) denotes the empirical distribution corresponding to \(n\) samples from \(\kappa\). The concentration result, in turn, relies on the dual representations from Theorem~\ref{th:klinfDual_mean}.

     Recall that the algorithm makes an error if, at the stopping time \(\tau_{\delta}\), the empirically best arm is not the arm $1$. Moreover, we only check for the stopping condition at times $lm$, for $l\in\mathbb{N}$ and a fixed $m\ge 1$. Let $\mathcal E_j(n)$ be the event that at time $n$, the empirically-best arm is arm $j$, i.e.,
     \[ \mathcal E_j(n) = \lrset{ \hat{\mu}(n) : m(\hat{\mu}_j(n)) > \max\limits_{a\ne j} ~ m(\hat{\mu}_a(n)) }.\]

    Then, if an error occurs, there exists $l$ such that at time $lm$, an arm $j\ne 1$ has the maximum empirical mean, and the stopping condition is met at this time. Thus, 
    \[ \mathcal E \subset \lrset{\exists l\in\mathbb{N} :  \bigcup\limits_{j\ne 1}\lrset{ Z_j(lm) \ge \beta(lm,\delta),~ \mathcal E_j(lm)}}. \]

    Recall from~\eqref{eq:zj} that $Z_j(\cdot)$ is a minimum of certain arm-specific statistics over arms $a\ne j$. Thus, it is upper bounded by the statistic for arm $1$, i.e., 
    $$Z_j(lm) \le \inf\limits_{x\le y} \lrset{N_j(lm)\KLinfL(\hat{\mu}_j(lm), x) + N_1(lm)\KLinfU(\hat{\mu}_1(lm), y)}. $$
    Next, observe that $x=m(\mu_j)$ and $y=m(\mu_1)$ are feasible. Hence, 
    \[ Z_j(lm) \le {N_j(lm)\KLinfL(\hat{\mu}_j(lm), m(\mu_j)) + N_1(lm)\KLinfU(\hat{\mu}_1(lm), m(\mu_1))}.  \]
    Thus, the error event is further contained in 
    \[ \lrset{\!\exists n\!\in\! \mathbb{N}: ~ \bigcup\limits_{j\ne 1}\! \lrset{ N_j(n)\KLinfL(\hat{\mu}_j(n), m(\mu_j)) \! + \! N_1(n)\KLinfU(\hat{\mu}_1(n), m(\mu_1)) \ge \beta(n,\delta), ~ \mathcal E_j(n) }\!}. \]
	Using union bound,
	\begin{equation} \label{eq:deltasummand}
		\mathbb{P}(\mathcal E) \le \sum\limits_{j\ne 1}\mathbb{P}\lrp{\exists n\in \mathbb{N}: { N_j(n)\KLinfL(\hat{\mu}_j(n), m(\mu_j))\!+\! N_1(n)\KLinfU(\hat{\mu}_1(n), m(\mu_1)) \ge \beta(n,\delta) }}. 
	\end{equation}

	Using the concentration result from Proposition~\ref{prop:DeviationsMean} (stated below) and $\beta(n,\delta)$ from (\ref{eq:beta.mean}), we get that each summand in the above expression is at most $\frac{\delta}{K-1}$, proving the $\delta$-correctness of the algorithm. \BlackBox

	We remark that in the earlier version of this work \citep{agrawal2020optimal}, we developed an exponential tail bound for the summands in~\eqref{eq:deltasummand} using an $\epsilon$-net-based argument. Building on the concentration results in our later works \citep{agrawal2021regret,agrawal2021optimal}, here we provide a much simpler and elegant martingale-proof for bounding these terms.

	\begin{proposition} \label{prop:DeviationsMean}
    For \( i\in [K], j\in[K] \), \(i\ne j\), \(h(n) = 4\log(n+1)+2\), and \( x \ge 0 \),
    \[ \mathbb{P}\lrp{\exists n:  N_i(n) \KLinfU(\hat{\mu}_i(n), m(\mu_i)) + N_j(n)\KLinfL(\hat{\mu}_j(n), m(\mu_j)) -h(n) \ge x  } \le e^{-x}.\]
	\end{proposition}
\noindent\emph{Proof sketch: } To get the said concentration result, we construct a non-negative super-martingale dominating the exponential of the l.h.s. above. Ville's inequality \citep{ville1939etude} then gives the desired result. Below, we sketch the steps for arriving at the appropriate super-martingale. 

\paragraph{Constructing the mixture martingale.} From the dual formulation for $\KLinfU$ (Theorem~\ref{th:klinfDual_mean}), 
\[  N_i(n)\!\KLinfU(\hat{\mu}_i(n),m(\mu_i))\! =\! \max\limits_{(\lambda_1, \lambda_2)\! \in\! S^U(x,B)} \sum\limits_{l=1}^{N_i(n)} \!  \log (1\!-\!\lambda_1(X_l-m(\mu_i)) \!-\! \lambda_2(B\!-\!\abs{X_l}^{1+\epsilon})),\! \numberthis\label{eq:d1} \]
where $\lambda_1$ and $\lambda_2$ are dual variables, and the set $S^U$ is defined in~\eqref{eq:SU}. 

Next, recall that $\mu_i$ belongs to $\mathcal L$ and $X_l$ are i.i.d. Thus, for fixed $ \lambda_1 $ and $\lambda_2$, the summands in the r.h.s. of~\eqref{eq:d1} are i.i.d. and their exponentials are non-negative with mean bounded from above by $1$. Thus, for fixed dual variables, the exponential of r.h.s. in~\eqref{eq:d1} is a non-negative super-martingale. However, none of these super-martingales dominate the exponential of $N_i(n)\KLinfL(\hat{\mu}_i(n), m(\mu_i))$. Lemma~\ref{lem:exp-concave.BAIm} below shows that a uniform mixture of these super-martingales mixed over the possible dual variables dominates the l.h.s. of~\eqref{eq:d1} after adjusting it by a mild cost of \( \frac{h(n)}{2}\). 

Following symmetric arguments, we construct another (mixture) super-martingale that dominates the exponential of $N_j(n)\KLinfL(\hat{\mu}_j(n), m(\mu_j)) - \frac{h(n)}{2}$. We then show that the product of these two (mixture) super-martingales that we constructed, is a non-negative super-martingale dominating the exponential of the l.h.s. in the original expression. 

We refer the reader to Section~\ref{App_DeviationsMean.BAI} for complete proof of this result. 

\begin{lemma}[{\citet[Lemma F.1]{agrawal2021optimal}}]\label{lem:exp-concave.BAIm}
    Let $\Lambda \subseteq \mathbb R^d$ be a compact and convex subset, and let $q$ be the uniform distribution on $\Lambda$. Let $g_i: \Lambda \to \mathbb R$ be any series of exp-concave functions. Then
    \[
    \max_{\bm{\lambda} \in \Lambda}~
    \sum_{i=1}^T g_i(\bm{\lambda})
    ~\le~
    \log \E{\bm{\lambda}\sim q}{
        e^{\sum_{i=1}^T g_i(\bm{\lambda})}
    }
    + d\log(T+1)+1.
    \]
\end{lemma}

     \subsubsection*{Proof Sketch for Sample Complexity in Theorem~\ref{bigTh2}} 
     To see that the expected number of samples required by \(\bf{AL_1}\) matches lower bound as \(\delta\rightarrow 0\), i.e., (\ref{SampleComplexityub}) holds, we first give a high-level  proof idea.
     Recall from (\ref{eq:beta.mean}) that $\tau_\delta$ equals 
     \begin{align*}
     \inf\lrset{lm : \max_{j\in[K]}\min_{b\ne j} \inf\limits_{x\le y} \lrset {\frac{N_j(lm)}{lm}\KLinfL(\hat{\mu}_j(lm),x)+\frac{N_b(lm)}{lm}\KLinfU(\hat{\mu}_b(lm),y)} \geq \frac{\beta(lm,\delta)}{lm}},
     \end{align*}
     and satisfies
     \begin{equation*}
     \max_{j\in[K]}\min_{b\ne j} \inf\limits_{x\le y} \lrset {\frac{N_j(\tau_{\delta})}{\tau_{\delta}}\KLinfL(\hat{\mu}_j(\tau_{\delta}),x)+\frac{N_b(\tau_{\delta})}{\tau_{\delta}}\KLinfU(\hat{\mu}_b(\tau_{\delta}),y)} \approx \frac{\beta\lrp{\tau_{\delta},\delta}}{\tau_{\delta}} .     
     \end{equation*}

     Furthermore, for sufficiently large \(l\), consider the event that the empirical distributions at time $lm$ are close to the true distributions in the Kolmogorov metric (defined in~\eqref{eq:KolProj.mean}). With high probability, \(\forall a, \;\; \hat{\mu}_{a}(lm)  \approx \mu_a \), and from Lemma~\ref{Lem:ASconvergence}, \(N_a(lm)/{lm} \approx t^*_a(\mu) \). When this is true, arm \(1\) is the best arm, and \(\tau_{\delta}\) satisfies     
     \begin{equation*}
     \tau_{\delta}\approx \beta\lrp{\tau_{\delta},\delta} \lrp{\min\limits_{b\ne 1} \inf_{x\le y} \lrset{t^*_1(\mu) \KLinfL(\mu_1,x)+t^*_{b}(\mu)\KLinfU(\mu_b,y)}}^{-1} = \frac{\beta\lrp{\tau_{\delta},\delta}}{V(\mu)}.
     \end{equation*}

     With $\beta(\cdot,\cdot)$ from (\ref{eq:beta.mean}), \(\tau_{\delta} \) that satisfies the above equality is given by 
     \begin{equation}\label{approxTauDelta}
     \tau_{\delta} = \frac{1}{V(\mu)}\log\frac{K-1}{\delta}  + \frac{4}{V(\mu)} \log \lrp{1+\log\frac{K-1}{\delta} } + o\lrp{\log\frac{1}{\delta}}. \end{equation}
     Dividing both sides of (\ref{approxTauDelta}) by \(\log(1/\delta) \), we get \(\frac{\tau_{\delta}}{\log\lrp{1/\delta}}\approx \frac{1}{V(\mu)}\), for sufficiently small \(\delta \). 

    Complement of this high-probability event (i.e., the set where $\hat{\mu}(lm)$ is far from $\mu$) contributes only lower order terms (with respect to \(\log(1/\delta) \)) to \(\mathbb{E}_{\mu}(\tau_{\delta})\) since the probability of this event is exponentially small in the number of samples $lm$. Combining these, we get an upper bound on \(\mathbb{E}_{\mu}(\tau_{\delta})\) that asymptotically (as \(\delta\rightarrow 0\)) matches the lower bound in (\ref{eq:lb3}). 

    Rigorous proof of the sample complexity result in Theorem~\ref{bigTh2}, i.e., proof for (\ref{SampleComplexityub}), is given in Section~\ref{App_SampleComplexity}. Our proof builds upon that in \cite{garivier2016optimal}, where the authors consider a restricted SPEF, while we allow arm distributions to belong to a more general class \(\mathcal{L} \). Our proof differs in that we work in the space of probability measures instead of in Euclidean space, which leads to additional nuances. To work in the space of probability measures, we use the L\'evy metric to define the continuity of functions and convergence of sequences in this space. Moreover, we work with the projections of the empirical distributions and check for the stopping condition only once in \(m\) samples instead of doing so in every sample. Our proof allows for these flexibilities. 

	In the next section, we show that for the optimal batch size in the current framework, the computational cost reduces from quadratic in the number of samples to linear. However, the sample complexity of the resulting algorithm is off from the lower bound by a multiplicative constant. For sub-optimal batch sizes, the computational effort of the algorithm is still quadratic in the total number of samples but with smaller constants. This results in an improvement in the computation time in practice. Moreover, the resulting algorithm exactly matches the lower bound. 

     \subsection{Discussion on Optimizing Batch Sizes}\label{sec:optbatch.mean}
     We now discuss the choice of the batch size $m$ to minimize the overall cost of the experiment. Suppose that the average cost of generating any sample is given by $c_1$. This cost may be significant when sampling is costly, for instance, if samples correspond to the output of a massive simulation model or a result of clinical trials. It may be small, e.g.,  in an online recommendation system. 

     The cost of solving the max-min problem may be measured by the computational effort involved, which involves determining the \(\KL\) projection functions for the empirical distributions of each arm. Empirically, we see that the computational cost of estimating the \(\KLinf\)s increases with the number of samples in the empirical distribution in the setting considered in this work. It can also be seen from the dual formulations (Theorem~\ref{th:klinfDual_mean}), where the number of terms in the objective function increases by $1$ with each sample. See also \citep{cappe2013kullback,honda2015SemiBounded} for a discussion on the computational cost of $\KLinf$s. Hence, the cost for computing the optimal weights at time $n$ is also linear in $n$. Hence, the overall computational cost of solving the max-min problem after $n$ samples is modeled well as $c_{21}+ c_{22}n$, where $c_{21}$ and $c_{22}$ are constants. Observe that when arm distributions belong to a SPEF, the cost of solving the max-min problem is constant, independent of number of samples. Hence, it can be modelled by setting $c_{22} = 0$ in the current discussion.
      
     To approximate the optimal batch size, we need to approximate the sample complexity. To this end, let 
     $$\tilde{\beta}(\delta) :=   \log \frac{K-1}{\delta} + 4 \log \lrp{1 + \log \frac{K-1}{\delta}}.$$ 

     For small values of \(\delta\),  the sample complexity of \(\bf AL_1\) is bounded as below (see (\ref{EqForM*}) in Section~\ref{App_SampleComplexity}),
     \begin{equation}\label{eq:SampleComplexDependenceOnM}
     \mathbb{E}_{\mu}\lrp{\tau_{\delta}} \leq \frac{\tilde{\beta}(\delta)}{V(\mu)} + m 
     + \mbox{ lower order terms},\end{equation}
     where $m$ denotes the batch size. Equation~\eqref{eq:SampleComplexDependenceOnM} remains valid if we use $\log\frac{1}{\delta}$ in place of $\tilde{\beta}(\delta)$. However, for small values of $\delta$, these may differ significantly. Empirically we find  that $\frac{\log\frac{1}{\delta}}{V(\mu)}+m$ substantially underestimates $\mathbb{E}_{\mu}\lrp{\tau_{\delta}}$, while  $\frac{\tilde{\beta}(\delta)}{V(\mu)}+m$ is much closer. 

     Using $\frac{\tilde{\beta}(\delta)}{V(\mu)} + m$ as a reasonable proxy for $\mathbb{E}_{\mu}\lrp{\tau_{\delta}}$, the total cost $\mathcal{C}$ of \(\bf AL_1 \) is approximated as 
     \[\mathcal{C} = \lrp{c_1 + \frac{c_{21}}{m}}\frac{\tilde{\beta}(\delta)}{V(\mu)} + \frac{ c_{22}\tilde{\beta}^2(\delta)}{2mV^2(\mu)} + \lrp{c_1 + \frac{c_{22}}{2}}m.\]
     
     For a constant $m$ (independent of $\delta$), $\tilde{\beta}(\delta) = O(\log\frac{1}{\delta})$, implying $\mathcal{C} = O(\log^2{\frac{1}{\delta}}).$ For \(m= O(\log\frac{1}{\delta})\), it is \(O(\log(\frac{1}{\delta}))\). Optimizing over \(m\) to minimize \(\mathcal{C}\), we get     
     \begin{equation}\label{optBatch}
     m^* = \lrp{c_{21} \frac{\tilde{\beta}(\delta)}{V(\mu)} + 0.5c_{22}{\frac{\tilde{\beta}^2(\delta)}{V^2(\mu)}}}^{0.5}\frac{1}{\lrp{c_1+0.5\;c_{22}}^{0.5}},
     \end{equation}
     i.e., optimizer \(m^* = O(\log\frac{1}{\delta})\). Notice that even though \(m=m^*\) minimizes \(\mathcal{C}\), (\ref{eq:SampleComplexDependenceOnM}) suggests that with this choice of \(m\), the ratio of sample complexity of \(\bf AL_1\) to the max-min lower bound no longer converges to 1  as $\delta \rightarrow 0$. It can, however, be seen that the $\delta$-correct property still holds for \(\bf AL_1\) even for $m = O(\log\frac{1}{\delta})$. 

     If, however, the \(\KL \) functionals could be estimated using computational effort that is independent of the size of the empirical distribution, that is, if $c_{22}=0$,  then  
     \[m^* =O\left(\sqrt{\log\frac{1}{\delta}}\right), \] 
     and \(\bf AL_1 \) is asymptotically optimal. Notably, this suggests that this is the optimal batch-size for SPEF setting, and the resulting batched algorithm remains asymptotically optimal. One way to achieve such a guarantee in the current setting is to approximate the empirical distribution by a fixed size distribution (e.g., by bucketing the generated samples into finitely many bins). Doing this may substantially reduce the computation time and is worth exploring further. 

     \section{Numerical Results}\label{numerics}
     In this section, we present the experimental results for the algorithm \(\bf AL_1\) on a small example. We consider a \(4\)-armed bandit, with each arm having Pareto distribution with two parameters (\(\alpha,\beta\)), supported on \([\beta,\infty)\) and has PDF  
     $$f_{\alpha,\beta}(x) = \frac{\alpha\beta^{\alpha}}{x^{\alpha+1}}.$$ 
     The four arms have parameters set to \((4,1.875),(4,1.5),(4,1.25),(4,0.75)\). We choose \(\mathcal{L} \) with \(\epsilon = 1\) and \(B=9\), i.e.,
     \[\mathcal L  = \lrset{\eta\in\mathcal P(\Re): ~ \E{\eta}{\abs{X}^2} \le 9}. \]

     \begin{figure}
     \centering
     \includegraphics[width=0.8\linewidth]{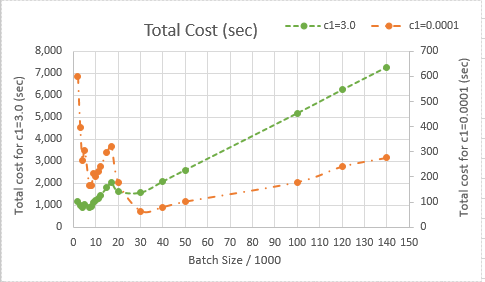}
     \caption{Total computational cost as a function of \((a)\) batch size and \((b)\) \(c_1\), the cost of generating a sample.}
     \label{fig:VBS}
     \end{figure}

     We compare the total cost, that is, the cost of generating samples (\(c_1\) per sample) added to the total cost of computing the max-min solutions after every batch, as a function of the batch size. The horizontal axis in Figure~\ref{fig:VBS} denotes the batch size in thousands, and both the vertical axes correspond to total cost (normalized and measured in seconds).

     It shows that the computational cost, and hence, the total cost, initially comes down with an increase in the  batch size since the number of batches in the algorithm reduces. However, a further increase in batch size increases the total cost due to the delay in stopping by the last batch (the algorithm would have stopped earlier if the batch size was smaller). 
	
	Interestingly, as the batch size further increases, we see that the cost decreases and then increases again. To understand this, consider the case where the algorithm typically stops after two batches are complete. Now, at some stage, if the batch size increases sufficiently, the algorithm typically stops in a single batch, leading to substantial cost reduction. If the batch size is further increased, the delay in stopping increases, again leading to an increase in cost. In this case, the fact that the support size of the empirical distributions increases also increases the cost with an increase in the batch size.

     \begin{figure}
          \centering
          \includegraphics[width=0.8\linewidth]{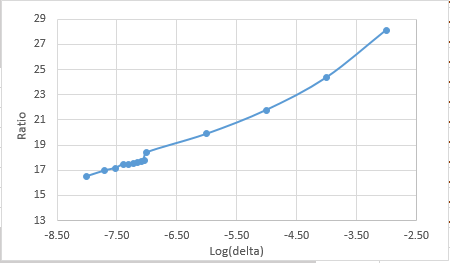}
          \caption{Ratio of average number of samples needed by the algorithm to stop and the lower bound as a function of \(\delta \).}
          \label{fig:LBUB}
     \end{figure}

    Figure~\ref{fig:LBUB} plots the ratio of an average number of samples needed by \(\bf AL_1 \) to stop, and the lower bound on this quantity, as a function of \(\delta\). As can be seen from the figure, as \(\delta \) reduces from \(0.001\) to \(10^{-8}\), this ratio decreases from 30 to 16. 

\section{Discussion on MAB with Bounded-Support Distributions}\label{sec:bdd.meanBAI}
     In this section, we discuss an optimal algorithm for MAB with distributions having bounded support, say in $[0,1]$. We show that algorithm \(\bf  AL_1\) when specialized to this class, is an asymptotically-optimal \(\delta\)-correct algorithm for this class of MAB problems. Since all the proofs follow as in the more general setting discussed in detail earlier, we only give the statements of the results. 

     Let 
    \[{\mathcal{H}}=\lrset{\eta\in\mathcal{P}(\Re): \Supp(\eta)\subset [0,1]}.\]
	Observe that $\cal H$ is a uniformly integrable family of measures \citep{williams1991probability}. Let \(\mathcal{M}_{\mathcal H} = \mathcal H^K \) denote the collection of all vectors of \(K\) distributions, each belonging to \(\cal H \). As earlier, we endow \(\cal H \) with the L\'evy metric (or equivalently, the topology of weak convergence). Furthermore, \({\mathcal{H}} \) can be shown to be a compact set with respect to this metric (see \citet[Section 5]{billingsley2013convergence}). 

     Recall that the $\KL$-projection functions defined in~\eqref{eq:KLinf} implicitly depend on the class of distribution under consideration. For \(\eta\in \mathcal H\) and \(x \in [0,1] \), define analogous $\KL$ projection functions, denoted by \( \KLinfUC{\mathcal H}(\eta,x) \) and $\KLinfLC{\mathcal H}(\eta,x)$, 
     \[ \KLinfUC{\mathcal H}(\eta,x) := \min\limits_{\substack{\kappa\in\mathcal H, m(\kappa)\ge x }} \KL(\eta,\kappa)\quad \text{and}\quad \KLinfLC{\mathcal H}(\eta,x) = \min\limits_{\substack{\kappa\in\mathcal H, m(\kappa)\le x}} \KL(\eta,\kappa). \numberthis\label{eq:bddklinf} \]
     \citet{HondaBounded10} develop a dual representation for $\KLinfUC{\mathcal H}$ and establish various properties of this function. Since the two functions in~\eqref{eq:bddklinf} are symmetric, similar representations and properties can be proven for $\KLinfLC{\mathcal H}$.   
     
    Given $\mu \in \cal M_H$, for simplicity of notation, we assume that arm $1$ is the unique arm with the maximum mean in $\mu$. Let $\tau_\delta$ denote the stopping time of a $\delta$-correct algorithm. The lower bound on the average number of samples required by any $\delta$-correct algorithm can be shown to be 
     \[ \E{\mu}{\tau_\delta} \ge \frac{\log \frac{1}{\delta}}{V_{\mathcal H}(\mu)}, ~~~\text{where} ~~~ V_{\mathcal H}(\mu) = \sup\limits_{w\in \Sigma_K} ~ \min\limits_{j\ne 1}\inf\limits_{x\le y} \lrset{w_1 \KLinfLC{\mathcal H}(\mu_1, x) + w_j \KLinfUC{\mathcal H}(\mu_j, y)}. \]
    This follows from the proof of~\eqref{eq:Vmu.mean}, replacing $\cal L$ by $\cal H$.

    Next, consider the algorithm that replaces $\cal L$ by $\cal H$, and $\KLinfU$ and $\KLinfL$ in $\bf AL_1$ with $\KLinfUC{\mathcal H}$ and $\KLinfLC{\mathcal H}$, respectively. Call it $\bf AL_B$. Unlike in the more general setting considered earlier, observe that empirical distribution always belongs to $\cal M_H$. Hence, the projection of empirical distributions on $\cal H$ are the empirical distributions themselves, i.e., at time $t$, 
     \[ {\Pi}\lrp{\hat{\mu}(t)} = \hat{\mu}(t),\]
     where $\Pi$ is defined in (\ref{eq:KolProj.mean}) with $\cal L$ replaced by $\cal H$. Set 
     \[\beta(n,\delta) = \log\frac{K-1}{\delta} + 2\log(n+1) + 2. \numberthis\label{eq:BetaBdd}\]

     \begin{theorem}
     Algorithm  $\bf AL_B$ with $\beta$ in (\ref{eq:BetaBdd}) and $m = o(\log\frac{1}{\delta})$, is $\delta$-correct and satisfies 
     \[ \limsup\limits_{\delta \rightarrow 0} \frac{\E{\mu}{\tau_\delta}}{\log \frac{1}{\delta}} \le \frac{1}{V_{\mathcal H}(\mu)}. \]
     \end{theorem}

     Proof of the above theorem is similar to that of Theorem~\ref{bigTh2}. The following proposition will be useful in proving the $\delta$-correctness.

     \begin{proposition} \label{prop:DeviationsMean.bdd}
     For \( i\in [K], j\in[K] \), \(i\ne j\), \(h(n) = 2\log(n+1)+2\), and \( x \ge 0 \),
     \[ \mathbb{P}\lrp{\exists n:  N_i(n) \KLinfUC{\mathcal H}(\hat{\mu}_i(n), m(\mu_i)) + N_j(n)\KLinfLC{\mathcal H}(\hat{\mu}_j(n), m(\mu_j)) -h(n) \ge x  } \le e^{-x}.\]
     \end{proposition}

     The proof of this proposition follows along the lines of that in Proposition~\ref{prop:DeviationsMean}. The dual formulations for $\KLinfUC{\mathcal H}$ and $\KLinfLC{\mathcal H}$ suggest martingales for fixed values of the dual variables. As earlier, we construct a mixture of these martingales that dominate the exponentials of the l.h.s. to give the desired result. We point out that \citet{honda2015SemiBounded} also develop a similar concentration result for the empirical $\KLinfUC{\mathcal H}$ using an $\epsilon$-net argument. However, their concentration inequality is only valid for a fixed $n$. A direct application of their concentration inequality may result in a slightly higher threshold $\beta$, affecting the performance of the algorithm for reasonable values of $\delta$.

     \section{Conclusions} 
     In this work, we developed an asymptotically optimal algorithm for the fixed-confidence BAI problem in two settings: (1) arms with heavy-tailed distributions and (2) arms with bounded-support distributions. We saw that the two $\KL$ projection functions ($\KLinfU$ and $\KLinfL$) play a crucial role in the implementation and analysis of the algorithm. We developed alternative representations for these and proved several properties that are of independent interest. We also discussed the computational cost of the proposed algorithms and developed a batched version to reduce this cost while ensuring its good statistical performance. 

     A natural extension is the $\epsilon$-BAI problem in the PAC framework. An asymptotic lower bound (as $\delta\rightarrow 0$) for this problem is known (see, for example, \cite{NIPS2019_MultipleCorrectAns,garivier2021nonasymptotic}). Optimal algorithms for this problem have been proposed for SPEF arms in the MAB setting (\cite{NIPS2019_MultipleCorrectAns}) and recently for structured MAB (linear) in \cite{jourdan2022choosing}. While extending these to a heavy-tailed setting may be possible using the techniques developed in the current work, the algorithms relying on computing the solutions of the max-max-min problem for this general class of distributions may not be tractable. Computationally tractable algorithms for $\epsilon$-BAI might also give efficient algorithms for the BAI problem by choosing $\epsilon$ as a function of the number of samples ($\epsilon_t$) such that $\epsilon_t\rightarrow 0$ as $t\rightarrow\infty$. We believe that this is an exciting direction for future research. 

     \acks{We acknowledge the support of Department of Atomic Energy, Government of India, under project no. 12-R\&D-TFR-5.01-0500. We also thank ICTS, TIFR to allow the authors to collaborate on this work during the Applied Probability Program at ICTS. We are grateful to Wouter M. Koolen for providing his insightful feedback on our initial results.}

\bibliography{BibTex.bib}

\newpage
\appendix
\section{Mathematical Background}\label{app:bg}
	In this appendix, we present some of the classical mathematical results from the literature we use in our analysis. In Section~\ref{sec:WC}, we state the definitions of the relevant topologies on the set of probability measures. We also review some concepts from the theory of weak convergence of probability measures in this section. To prove the continuity of the optimal values of optimization problems, as well as of the optimizers, we use the classical Berge's Theorem. We give the necessary background on the continuity-like properties of the set-valued maps and state Berge's Theorem in Section~\ref{sec:Berge}. The stopping rule of our algorithm relies on the likelihood ratio test. However, since we do not make any parametric assumptions in our work, it is crucial to define likelihood ratios in the non-parametric setting. In Section~\ref{ch:EL}, we introduce the empirical (non-parametric) likelihood and a few results from this area.

    \subsection{Topologies in the Space of Probability Measures} \label{sec:WC}   
    We endow the space of probability measures with the topology of weak convergence. In this section, we define this notion of convergence of a sequence of probability measures and introduce different topologies in the space of probability measures that become handy in our analysis. We refer the reader to \cite{billingsley2013convergence} for alternative characterizations of weak convergence and its properties.

    Recall that $\Re$ denotes the set of real numbers, and  $\mathcal P(\Re)$ denotes the collection of all probability measures with support in $\Re$. For $\eta\in\mathcal P(\Re)$, let $F_\eta(y) := \eta((-\infty, y])$ denote the CDF function for $\eta$. Let $\phi:\Re\rightarrow \Re$ be a bounded and continuous function, ${\gamma} > 0 $, and $x\in\Re$. \emph{Weak topology} on $\mathcal P(\Re)$ is the topology generated by the base sets of the form 
    $$ \mathcal U(\phi, x, {\gamma}) := \lrset{ \eta\in \mathcal P(\Re): \abs{\int\limits_{\Re} \phi(y) d\eta(y) - x }  \le {\gamma} }.  $$
    Consider a sequence of probability measures $\{\kappa_n\}_{n\in\mathbb{N}}$ and $\kappa \in \mathcal P(\Re)$. Weak convergence of $\kappa_n$ to $\kappa$, denoted as $\kappa_n \xRightarrow{D} \kappa$, is convergence in this topology \cite[see][Section D.2]{dembo2010large}. 

    The convergence in the weak topology is equivalent to that in {\bf L\'evy metric} on \( \mathcal{P}(\Re) \), (denoted by \(d_L\)). We define the L\'evy metric below (see, \citet[Theorem 6.8]{billingsley2013convergence}, \citet[Theorem D.8]{dembo2010large}). 

    \begin{definition}[L\'evy metric]
    \emph{For \(\eta , \kappa \in \mathcal{P}(\Re) \), 
        $$d_L(\eta,\kappa):= \inf\lrset{\gamma > 0 : F_\eta(x-\gamma)-\gamma \leq F_\kappa(x) \leq F_\eta(x+\gamma) + \gamma, ~ \forall x\in\Re }.$$}
    \end{definition}
    It can be shown that the metric space \( \lrp{\mathcal{P}(\Re), d_L} \) is complete and separable. Another popular metric on $\mathcal P(\Re)$ is the {\bf Kolmogorov metric}, defined next. 
    \begin{definition}[Kolmogorov metric ]
    \emph{For probability measures $\eta$ and $\kappa$,   
        $$ d_K(\eta,\kappa) := \sup\limits_{x\in\Re} \abs{F_\eta(x) - F_\kappa(x)}. $$}
    \end{definition}
    From the definitions above, it follows that $d_L(\eta,\kappa) \le d_K(\eta,\kappa)$ \cite[see][]{gibbs2002choosing}. We refer the reader to \cite{billingsley2013convergence} for a detailed exposition on weak convergence of probability measures.     

    \subsection{Berge's Theorem}\label{sec:Berge}
    Proving the continuity of the optimal values and optimizers of optimization problems is a crucial component of our analysis. We now review the notion of continuity for set-valued maps and then present the classical Berge's Theorem (see, \citet[Chapter 6]{berge1997topological} and \cite[Chapter 9]{SundaramOpt1996}).

    Let $l\in\mathbb{N}$, $d \in \mathbb{N}$, $S \subset \Re^l$, and $T\subset \Re^d$. Let $\Gamma: S\rightarrow T$ be a \emph{set-valued function} which, for each $s\in S$, associates a set $\Gamma(s) \subset T$.

    \begin{definition}[Upper hemicontinuity] 
    \emph{A set-valued function $\Gamma: S \rightarrow T$ is upper hemicontinuous at $s \in S$ if for any open neighborhood $V$ of $\Gamma(s)$ there exists a neighborhood $U$ of $s$ such that for all $x\in U$, $\Gamma(x)$ is a subset of $V$.}
    \end{definition}

    \begin{definition}[Lower hemicontinuity] 
    \emph{A set-valued function $\Gamma: S \rightarrow T$ is lower hemicontinuous at $s \in S$ if for any open set $V$ intersecting $\Gamma(s)$ there exists a neighborhood $U$ of $s$ such that $\Gamma(x)$ intersects $V$ for all $x\in U$. }
    \end{definition}

    \begin{definition}[Continuity] 
    \emph{A set-valued function $\Gamma: S \rightarrow T$ is said to be continuous at $s \in S$ if it is both lower and upper hemicontinuous at $s$.}
    \end{definition}
    \vspace{0.5mm}

    We say that $\Gamma$ is upper(lower)-hemicontinuous in $S$ if it is upper(lower)-hemicontinuous at each point in $S$. If $\Gamma$ is both upper and lower hemicontinuous in $S$, then it is continuous in $S$. Notice that if $\Gamma$ is indeed a function, i.e., for each $s\in S$, $\Gamma(s)$ is a singleton, then all the definitions above coincide with the usual definition of continuity of a function at a point. In addition to the above, a set-valued function $\Gamma$ is said to be \emph{closed-valued} (\emph{compact-valued or convex-valued}) at $s\in S$ if the set $\Gamma(s)\subset T$ is closed (compact or convex, respectively).
    \begin{theorem}[Berge's]\label{th:Berge}
    {Let $\Phi: S \times T \rightarrow \Re$ be a continuous function, and let $\Gamma: S \rightarrow T$ be a compact and continuous, set-valued function (both, lower and upper hemicontinuous) such that for all $s \in S$, $\Gamma(s) \ne \emptyset$. Define 
    $$M(s):= \max\lrset{\Phi(s, t) | t\in \Gamma(s)} .$$
    Then, $M$ is continuous in $S$ and the mapping defined 
    $$\Phi_s := \lrset{t | t \in \Gamma(s), \Phi(s,t) = M(s) } $$ 
    is compact and upper hemicontinuous mapping of $S$ into $T$. }
    \end{theorem}

    \begin{remark}\label{th:partialberge}
    	\emph{If instead, we only have upper (lower) semicontinuity of $\Phi$ and upper (lower) hemicontinuity of $\Gamma$, then we get upper (lower) semicontinuity of $M$ (see, \citet[Chapter 6, Page 115]{berge1997topological}).} 
    \end{remark}
    
    We refer the reader to \citet[Chapter 6]{berge1997topological} and \citet[Chapter 9]{SundaramOpt1996} for alternative characterizations of upper and lower hemicontinuity and proofs of the Theorem~\ref{th:Berge} and Remark~\ref{th:partialberge}.

    \subsection{Empirical Likelihood}\label{ch:EL} 
    In this appendix, we describe a non-parametric method of inference that allows the use of likelihoods without having to assume that the data-generating distribution belongs to a parametric family of distributions. The majority of the content in this appendix is borrowed from \cite{owen2001empirical} and is presented here for completeness.

	\paragraph{Notation. } For a probability measure, $\eta$, \emph{CDF} is the function $F:\Re\rightarrow [0,1]$, where for $x\in\Re$,
    \[F_\eta(x) := \mathbb{P}_{X\sim \eta}\lrp{X \le x}.\] 
    Here, $X\sim\eta$ denotes that $X$ is a random variable which is distributed as $\eta$. 
    Let $F_\eta(x^-) := \mathbb{P}_{X\sim \eta}\lrp{X < x}$, so that 
    $$\mathbb{P}_{X \sim \eta}(X = x) = F_\eta(x) - F_\eta(x^-).$$
    \begin{definition}
    \emph{Suppose we are given observations $X_1, \dots, X_n$. The empirical CDF (ECDF) of these is the function 
    \[ F_n(x) :=  \frac{1}{n}\sum\limits_{i=1}^n \mathbb{1}_{X_i \le x}, \quad \forall x\in\Re, \]
    where, $\mathbb{1}_{X_i \le x}$ is $1$, if $X_i \le x$, otherwise $0$.}
    \end{definition}

    \begin{definition}
    \emph{Suppose we are given observations $X_1, \dots, X_n$, that are assumed to be independent and with a common distribution. The non-parametric likelihood of observing samples from a distribution with CDF $F$, $L(F)$, is given by} 
    \[ L(F) := \prod\limits_{i=1}^n \lrp{ F(X_i) - F(X^-_i) }. \]
    \end{definition}

    \begin{remark}\label{rem:discreteEL}\emph{Observe that $L(F)$ is always $0$ for CDFs of continuous distributions. In particular, to have a non-trivial $L(F)$, the distribution corresponding to $F$ must put non-zero mass at least on  each of the observation points.} 
    \end{remark}

    \begin{theorem}[Informal]\label{th:ELikelihood}
    Suppose we are given observations $X_1, \dots, X_n$ that are assumed to be independent and with a common distribution. ECDF $F_n$ maximizes the empirical likelihood, i.e., for any $F\ne F_n$, $L(F_n) > L(F)$. 
    \end{theorem}
    We refer the reader to \citet[Theorem 2.1]{owen2001empirical} for a proof. 

    As in the parametric inference, we may base hypothesis tests on empirical likelihood ratio: if $L(F)$ is much smaller than $L(\tilde{F})$, we reject the hypothesis that the samples are generated from some distribution with CDF $F$. We refer the reader to \cite{owen2001empirical} for details of this approach and to Section~\ref{sec:ELT.mean} for an application of this method in the MAB setting, where the algorithm uses the empirical likelihood ratio method to test if arm $i$ has the maximum mean, for each arm $i$.

\section{Proof of Lemma~\ref{lem:lem201}}\label{sec:klrightdenselemma}
Let $\mathcal P(\Re)$ denote the collection of all probability measures with support in $\Re$, and set $\mathcal H = \mathcal P(\Re)$. For $\eta\in\mathcal P(\Re)$, let $m(\eta)$ denote its mean. For probability measure $\eta$, let $F_\eta$ denote the associated CDF function. Consider a large $y$ whose value will be fixed later, and take $\gamma \in (0,1)$.
Construct  another distribution function  $\kappa$ as follows: Set
\[
F_\kappa(x) = (1-\gamma) F_\eta(x), \quad  \forall x \leq y, \] 
and
\[
1-F_{\kappa}(x) = \beta (1-F_{\eta})(x), \quad \forall x > y.
\]
Note that
\[
\beta = 1+ \gamma \frac{F_\eta(y)}{1-F_{\eta}(y)} > 1.
\]
Then,
\begin{equation*}
0 \leq \KL(\eta,\kappa) = \int_{x \in \Re} \log \left (\frac{dF_\eta(x)}{d F_\kappa(x)} \right ) dF_\eta(x)
\leq   - F_\eta(y) \log (1-\gamma) .
\end{equation*}

By selecting
$\gamma = 1-\exp(-a)$, we get
\[
\KL(\eta,\kappa) \leq a.
\]

Moreover, for $y$ such that
$\eta(y^+) = \eta(y^-)$,
\begin{align*}
m(\kappa) &= (1-\gamma) \int_{-\infty}^{y} x d F_\eta(x)
+\left (1+  \frac{\gamma F_\eta(y)}{1-F_{\eta}(y)} \right ) \int_{y}^{\infty} x d F_\eta(x)\\ &\geq \exp(-a)m(\eta) + (1- \exp(-a)) F_\eta(y) y.
\end{align*}
Since, RHS increases to infinity as
$y \rightarrow \infty$, one can select $y$ sufficiently large so that
$m(\kappa) \geq b$.

\section{Proofs from Section~\ref{sec:simpLB}}\label{app:lb}
In this section, we give proofs of the results related to simplification of the lower bound max-min problem and the properties of the solutions of this max-min problem. 

\paragraph{Notation. }Recall that we denote the given bandit instance by $\mu \in \mathcal M$. For simplicity of notation, we assume that the unique best arm in $\mu$ is arm $1$, i.e., $a^*(\mu) = 1$. Furthermore, we denote the collection of all the bandit instances in which arm $1$ sub-optimal by $\operatorname{Alt}(\mu)$, i.e., 
\[ \operatorname{Alt}(\mu) = \bigcup\limits_{j\ne 1} \lrset{\nu\in\mathcal M: ~ m(\nu_1) < m(\nu_j)}. \]
Moreover, recall that for $\eta\in\mathcal P(\Re)$ and $x\in \Re$, 
\[ \KLinfU(\eta,x) = \min\limits_{\substack{ \kappa\in\mathcal L:\\ m(\kappa) \ge x  }} \quad \text{ and }\quad \KLinfL(\eta,x) = \min\limits_{\substack{\kappa\in\mathcal L: \\ m(\kappa) \le x}} .\]
Furthermore, for $\eta\in x$, $m(\eta) \in M$, where $M =\left[-B^\frac{1}{1+\epsilon}, B^\frac{1}{1+\epsilon}\right]. $

\subsection{Proof of Lemma~\ref{lem:MeanBounds}}\label{app:lb:MeanBounds}
The bounds on mean follow from an easy application of Jensen's inequality. To show the uniqueness of probability measures with these extreme means, consider the following optimization problem for $x\in\Re$: 
$$ \min\lrset{ \mathbb{E}_\eta(|X|^{1+\epsilon})~\text{ s.t. }~ \eta\in\mathcal P(\Re) ~\text{ and }~m(\eta) =x },$$ 
for which $\eta = \delta_x$ is the unique minimizer. This again follows from Jensen's inequality since $\abs{\cdot}^{1+\epsilon}$ is a convex function. Choosing $x\in\{ -B^{\frac{1}{1+\epsilon}}, B^{\frac{1}{1+\epsilon}} \}$ gives the desired result. 

\subsection{Proof of Lemma \ref{lem:maxminsimple.mean}}\label{sec:proofmaxminmean}
Recall that for $\mu\in \mathcal M$ with arm $1$ being the unique optimal arm, the max-min lower bound problem is given by 
\[ V(\mu) = \sup\limits_{t\in\Sigma_K} ~ \inf\limits_{\nu\in\operatorname{Alt}(\mu)}~\sum\limits_{a=1}^K t_a \KL(\mu_a, \nu_1) .\]

     Let 
     $
          \mathcal S_j = \lrset{\nu\in\mathcal M: ~ m(\nu_1) < m(\nu_j) }.
     $
     Then, for fixed $t\in\Sigma_K$, the inner minimization problem in $V(\mu)$ equals 
     \[ \min\limits_{j\ne 1} \inf\limits_{\nu\in\mathcal S_j} \lrset{\sum\limits_{a=1}^K t_a\KL(\mu_a,\nu_a) }. \]
     Clearly, choosing $\nu\in\mathcal S_j$ such that $\nu_i = \mu_1$ for $i \notin \lrset{1,j}$, makes the value of the function being minimized in the above expression smaller. Hence, the value of the infimum problem in $V(\mu)$ equals
     \[ \min\limits_{j\ne 1} \inf\limits_{\substack{ \nu_1\in\mathcal L, \nu_j \in \mathcal L: \\ m(\nu_1) \le m(\nu_j) }} \lrset{t_1\KL(\mu_1, \nu_1) + t_j \KL(\mu_j, \nu_j)}. \]
     Introducing $x\in\Re$ and $y\in\Re$ such that $x\le y$, and using the definitions of $\KLinfU$ and $\KLinfL$, the above equals
     \[ \min\limits_{j\ne 1} \inf\limits_{x\le y} \lrset{t_1\KLinfL(\mu_1, x) + t_j \KLinfU(\mu_j, y)}, \]
     giving the desired result. 

\subsection{Proof of Lemma~\ref{lem:propL}}\label{sec:proof_propL}
\emph{Convexity} of $\mathcal L $ follows from its definition. 

\vspace{2mm}
\noindent\emph{Uniform integrability: } Since each probability measure $\eta$ in \(\cal L  \) has a uniformly bounded \(p^{th}\) moment for a fixed \( p > 1 \), their Skorokhod transforms, $$\lrset{u \mapsto F_\eta^{-1}(u): \eta \in \mathcal L},$$ form a uniformly integrable collection \cite{williams1991probability}.

\vspace{2mm}
\noindent\emph{Compactness of $\cal L$:} It is sufficient to show that \( \cal L \) is closed and tight. Prohorov's Theorem then gives that it is a compact set in the topology of weak convergence \cite{billingsley2013convergence}. We first show that it is a closed set. Towards this, consider a sequence \( \eta_n \) of probability measures in \( \cal L \), converging weakly to \(\eta \in \mathcal{P}(\Re) \). By Skorohod's Representation Theorem (see \cite{billingsley2013convergence}), there exist random variables \(Y_n, Y \) defined on a common probability space, say \((\Omega, \mathcal{F}, q) \), such that \( Y_n \sim \eta_n \), \(Y \sim  \eta\), and \(Y_n \xrightarrow{a.s.} Y \). Then, by Fatou's Lemma, 
\[ \E{\eta}{\abs{X}^{1+\epsilon}} = \E{q}{\abs{Y}^{1+\epsilon}} = \E{q}{\liminf\limits_{n\rightarrow \infty}\abs{Y_n}^{1+\epsilon}} \leq \liminf\limits_{n\rightarrow \infty}\E{q}{\abs{Y_n}^{1+\epsilon}} \leq B. \]
Hence, \(\eta\) is in \(\cal L \) and the class is closed in the weak topology. To see that it is tight, consider  
$$ K_\epsilon := \left[ -\lrp{2B{\epsilon}\inv}^{\frac{1}{1+\epsilon}}, \lrp{2B{\epsilon}\inv}^{\frac{1}{1+\epsilon}} \right]. $$ 
For \(\eta \in \cal L\), \( \eta\lrp{K^c_\epsilon} \leq \epsilon \).

\vspace{2mm}
\noindent\emph{Convergence of mean:} Consider a sequence \(\eta_n \in  \cal L \) weakly converging to \(\eta \in \cal L \). Then, there exist random variables \(Y_n, Y \) defined on a common probability space \((\Omega, \mathcal{F},q) \) such that \(Y_n \sim \eta_n \), \(Y \sim \eta \), and \(Y_n\xrightarrow{a.s.} Y\) (Skorohod's Theorem, see \cite{billingsley2013convergence}). Since \(\eta_n,\eta \) are uniformly integrable, 
	$$m(\eta_n) = \E{q}{{Y_n}} \longrightarrow \E{q}{{Y}} = m(\eta)$$ 
	(see \cite[Theorem 13.7]{williams1991probability}).

\subsection{Proof of Lemma~\ref{lem:propklinf} and Supporting Results}\label{sec:prop.klinf.mean}
We now prove various properties of these projection functions, which use the dual formulations, the properties of the optimal solutions, and the properties of the class $\mathcal L$ from Lemma~\ref{lem:propL}. Recall from the definitions of these projection functionals that for $\eta \in \mathcal L$, 
$$ \forall x\le m(\eta), \quad \KLinfU(\eta,x) = 0,$$ 
and 
$$ \forall x\ge m(\eta), \quad \KLinfL(\eta,x) = 0.$$ 

Moreover, for a fixed $\eta$, $\KLinfU(\eta,\cdot)$ is a non-decreasing function and $\KLinfL(\eta,\cdot)$ is a non-increasing  function.

\subsubsection{Proof of Lemma~\ref{lem:propklinf}} 
\emph{Joint convexity:} Consider two probability measures, $\eta_1$ and $\eta_2$, and let $x_1\in \Re$ and $x_2\in\Re$. Let $\kappa_1 \in \mathcal L$ and $\kappa_2 \in \mathcal L$ be the optimal primal variables from Theorem~\ref{th:klinfDual_mean} for the $\KLinfU(\eta_1,x_1) $ and $\KLinfU(\eta_2, x_2)$, respectively. For $\lambda \in (0,1)$, let 
$$\eta_{12}:= \lambda \eta_1 + (1-\lambda)\eta_2,$$ 
$$x_{12} := \lambda x_1 + (1-\lambda) x_2,$$
and
$$\kappa_{12} := \lambda \kappa_1 + (1-\lambda) \kappa_2.$$ 
Clearly, $\kappa_{12} $ is feasible for the $\KLinfU(\eta_{12}, x_{12})$ problem. Hence, $\KLinfU(\eta_{12}, x_{12})$ is at most $\KL(\eta_{12}, \kappa_{12}) $. By convexity of $\KL$, 
\[ \KL(\eta_{12}, \kappa_{12}) \le \lambda \KL(\eta_1, \kappa_1) + (1-\lambda)\KL(\eta_2, \kappa_2),\numberthis\label{eq:KL12} \]
which equals the convex combination of $\KLinfU(\eta_1,x_1)$ and $\KLinfU(\eta_2, x_2)$, as desired.

\vspace{2mm}
\noindent\emph{Strict convexity:} In the above discussion, let $\eta\in\cal L$ and consider $\eta_1 = \eta_2 = \eta$. Furthermore, let $x_2 > x_1 > m(\eta)$. Since $\kappa_1$ and $\kappa_2$ differ on $\Supp(\eta)$ (see Lemma~\ref{lem:soldiffS_klinfu}), (\ref{eq:KL12}) holds with strict inequality. This follows from strict convexity of $\KL(\eta,\cdot)$ on $\Supp(\eta)$. 

Symmetric arguments prove these properties for $\KLinfL$. \BlackBox

\vspace{2mm}
\noindent\emph{Joint continuity:} Lemma~\ref{lem:lscklinf_mean} and Lemma~\ref{lem:USCGEQ_mean} below, together imply that $\KLinfU$ and $\KLinfL$ are jointly continuous on $\mathcal L \times M^o$.

\subsubsection{Supporting Results}
We first show that the $\KL$ projection functionals, $\KLinfU(\cdot,\cdot)$ and $\KLinfL(\cdot, \cdot)$ are jointly lower-semicontinuous in their arguments. 

\paragraph{Notation. }Recall that 
\[M  = \left[ -B^{\frac{1}{1+\epsilon}}, B^{\frac{1}{1+\epsilon}}\right].\] 
For $x\in\Re$, let 
\[ 	\mathcal{D}^L_x \triangleq \lrset{\eta \in \mathcal{L} : m(\eta) \leq x}\quad\text{ and }\quad \mathcal{D}^U_x \triangleq \lrset{\eta \in \mathcal{L} : m(\eta) \geq x}. \]
For $\eta\in \mathcal P(\Re)$, recall that 
	\[ \KLinfL(\eta,x) = \min\limits_{\kappa \in D^L_x}~ \KL(\eta,\kappa) \quad \text{ and } \quad \KLinfU(\eta,x) = \min\limits_{ \kappa \in D^U_x }~\KL(\eta,\kappa). \]
For $y\in\Re$, let $f(y) = \abs{y}^{1+\epsilon}$, and for $c \ge 0$, let $f^{-1}(c) := \max\{ y : ~ f(y) \le c \}$. Then, $M = [ -f^{-1}(B), f^{-1}(B) ].$ Let $M^o$ denote the interior of the set $M$. 

\begin{lemma}\label{lem:001}
	For  $x\in M$, the sets \( \mathcal{D}^L_x \) and \(\mathcal{D}^U_x \) are closed, compact, and  convex subsets of $\mathcal P(\Re)$ in the topology of weak convergence. 
\end{lemma}
\begin{proof}
	The convexity of these sets follows from their definitions. Next, since \(\cal L\) is compact (Lemma~\ref{lem:propL}), to show that $\mathcal D^L_x$ and $\mathcal D^U_x$ are compact subsets, it is sufficient to show that these are closed subsets of $\cal L$. To this end, consider a weakly converging sequence $\{\eta_n\}_{n\in\mathbb N} \in \mathcal D^U_x$ such that $\eta_n \xRightarrow{D} \eta$. Since $\cal L $ is compact, $\eta\in\cal L$. It now follows from Lemma~\ref{lem:propL} that $m(\eta) \ge x$, implying that $\eta\in\mathcal D^U_x$. 
\end{proof}

\begin{lemma}\label{lem:lscklinf_mean}
	For \(\eta\in \mathcal{P}(\Re)\) and \( x\in M \), the functionals \(\KLinfU(\cdot,\cdot) \) and \(\KLinfL(\cdot,\cdot) \) are jointly lower-semicontinuous at \((\eta,x) \).
\end{lemma}
\begin{proof}
	For \(\eta,\kappa \in \mathcal{P}(\Re) \), \(\KL(\eta,\kappa) \) is jointly lower-semicontinuous function in the topology of weak convergence (see \citet{posner1975random}) and hence, a jointly lower-semicontinuous function of \((\eta,\kappa,x)\). Since \(\mathcal D^U_x\) is a compact set for each \(x\) (Lemma \ref{lem:001}), it is sufficient to show that \(\mathcal  D^U_x\) is an upper-hemicontinuous correspondence (see Theorem~\ref{th:partialberge} and \citet[Theorem 1, Page 115]{berge1997topological}). 
	
	\vspace{2mm}
	\noindent\emph{Upper-hemicontinuity:} Fix $\tilde{x} \in M$. We will show upper-hemicontinuity at $\tilde{x}$. See, Section~\ref{sec:Berge} for a  definition of upper-hemicontinuity and \citet[Proposition 9.8]{SundaramOpt1996} for its sequential characterization, which we will use here. 

	 Consider a sequence \(x_n \in M\) such that \(x_n \rightarrow \tilde{x}\). Let \( \eta_n \in \mathcal  D^U_{x_n}  \), which exist since \( D^U_{x_n} \) are non-empty sets. Since \( \cal L \) is a tight, and hence relatively compact collection of probability measures (see Lemma~\ref{lem:propL}), and \( \eta_n \in \cal L \), \(\eta_n \) has a weakly convergent sub-sequence, say \(\eta_{n_i} \) converging to \(\eta \in \cal L \) (since \(\cal L\) is also closed). Furthermore, since $\eta_{n_i} \in \mathcal D^U_{x_{n_i}}$, 
	$ m(\eta_{n_i}) \geq x_{n_i}. $ From Lemma \ref{lem:propL}, \( m(\eta) = \lim_{n_i} m(\eta_{n_i}) \geq \tilde{x} \), which implies that \(\eta \in \mathcal  D^U_{\tilde{x}} \), proving upper-hemicontinuity of the set \( \mathcal D^U_{\tilde{x}}\) at \(\tilde{x}\).
	
	Similar arguments hold for \( \KLinfL(\cdot, \cdot) \). 
\end{proof}

To prove upper-semicontinuity, we use dual formulations from Theorem~\ref{th:klinfDual_mean}. We recall some notation here. 

\paragraph{Notation. } Recall that $\Re^+$ and $\Re^-$ denote the collection of non-negative and non-positive real numbers. For \(x\in M^o\), \(\bm{\lambda} \in \Re^2\), $\bm{\gamma}\in \Re^2$, and \(X \in \Re \),  recall that \[
g^U( X,\bm{\lambda}, x) = 1 - \lambda_1 (X-x)  - \lambda_2 (B-\abs{X}^{1+\epsilon}), \numberthis\label{eq:gu} \] 
and
\[{S}^U(x) = \lrset{\lambda_1 \geq 0, \lambda_2\geq 0: ~~ 1 + \lambda_1 x - \lambda_2 B -  \frac{\epsilon\lambda^{1+\frac{1}{\epsilon}}_1}{(1+\epsilon)^{1+\frac{1}{\epsilon}} \lambda^{\frac{1}{\epsilon}}_2 } \ge 0 }. \numberthis\label{eq:SUmean}  \]
Notice that $S^U$ is a convex set.

\begin{lemma}\label{lem:USCGEQ_mean}
	\(\KLinfU \), viewed as a function from \( \mathcal{L}\times [ -B^{\frac{1}{1+\epsilon}},B^{\frac{1}{1+\epsilon}} ) \), is a jointly upper-semicontinuous function.
\end{lemma}
\begin{proof}
	Let 
	\[ \bar{M} = \left[\left. -B^{\frac{1}{1+\epsilon}}, B^{\frac{1}{1+\epsilon}} \right)\right. = \left[\left.-f\inv(B),f\inv\lrp{B}\right)\right. . \]
	We prove in Theorem~\ref{th:klinfDual_mean}\ref{th:klinfUDual_mean} that for \(x\in \bar{M}\), 
	\[\KLinfU(\eta,x) =   \max\limits_{\bm{\lambda}\in S^U(x)} ~ \E{\eta}{\log g^U(X,\bm{\lambda},x)},\]
	where $g^U(\cdot, \cdot, \cdot) $ is defined in (\ref{eq:gu}). 
	
	\paragraph{Proof sketch:} Since for \(x \in  \bar{M}\), \(S^U(x)\) (defined in (\ref{eq:SUmean})) is a compact set (see Lemma~\ref{lem:compactdualspace}), and for all \(y\in\Re\) \(g^U(y,\cdot,\cdot)\) is a jointly continuous map. We will show that the set $S^U(\cdot)$ is an upper-hemicontinuous correspondence (defined in Section~\ref{sec:Berge}) and the objective function, $\E{\eta}{\log\lrp{g^U(X,\bm{\lambda},x)}}$, is jointly upper-semicontinuous in $(x,\eta,\bm{\lambda})$. Then Theorem~\ref{th:partialberge} implies that $\KLinf(\cdot,\cdot)$ is jointly upper-semicontinuous function of its arguments. 

	\paragraph{Upper-hemicontinuity of $S^U(\cdot)$: }
	Clearly, \((0,0) \in S^U(x)\) for all \(x\in \bar{M}\). Next, consider a sequence \( x_n \longrightarrow x_0 \in \bar{M}\) and a sequence \( \bm{\lambda}_n \in S^U(x_n) \). Since \(x_n \longrightarrow x_0 \), there exists a closed and bounded (compact) subset, K, of \( \Re \) containing \(x_0\), such that for some \(J \geq 1\), and all \(n \geq J\), \(x_n \in K\). Since \( \min_y ~ g^U(y,\cdot,\cdot) \) is a jointly continuous function, for \(n\geq J\), \(\bm{\lambda}_n\) also belongs to a compact subset of \(\Re^2\). Bolzano-Weierstrass theorem then gives a convergent sub-sequence \( \lrset{(x_{n_i},\bm{\lambda}_{n_i}} \) in \( \Re^3 \) with the limit \( \lrset{x_0,\bm{\lambda}} \). It is then sufficient to show that \( \bm{\lambda}\) lies in \( S^U(x_0) \), which follows since
	\[  g^U(y,\bm{\lambda}_n, x_n) \geq 0 ~~\Rightarrow ~~ g^U(y,\bm{\lambda}, x_0) \geq 0,\]
	proving that the correspondence \( S^U(\cdot) \) is upper-hemicontinuous (see Section~\ref{sec:Berge} and \citet[Proposition 9.8]{SundaramOpt1996} for definitions and equivalent characterizations of upper-hemicontinuity). 
	
	\paragraph{Upper-semicontinuity of the objective function:} 
	Let \( h(x,\eta,\bm{\lambda}) := \E{\eta}{\log\lrp{g^U(X,\bm{\lambda},x)}} \). Consider a sequence \((x_n, \eta_n, \bm{\lambda}_n) \in \bar{M}\times \mathcal{L}\times {S}^U(x_n)\) converging to \(x,\eta,\bm{\lambda} \in M\times \mathcal{L} \times{S}^U(x)\). Notice that the convergence is defined coordinate-wise, and \(\eta_n\) converges to \(\eta\) in weak topology. We want to show the following:
	\[ \limsup\limits_{n\rightarrow \infty}~~ h(x_n,\eta_n, \bm{\lambda}_n) \leq h(x,\eta,\bm{\lambda}).  \]
	
	By Skorohod's Theorem (see \cite{billingsley2013convergence}), there exist random variables \(Y_n,Y \) defined on a common probability space \((\Omega, \mathcal{F},q) \) such that \(Y_n\sim \eta_n \), \(Y\sim \eta \) and \(Y_n \xrightarrow{a.s.}{Y} \). Hence, \( \log\lrp{g^U(Y_n,\bm{\lambda_n}, x_n)} \xrightarrow{a.s.}\log\lrp{g^U(Y,\bm{\lambda},x)}.\) Observe that
	\[ h(x_n,\eta_n,\bm{\lambda}_n) =  \E{q}{\log\lrp{g^U(Y_n,\bm{\lambda}_n,x_n)}} \quad \text{ and }\quad h(x,\eta,\bm{\lambda}) = \E{q}{\log\lrp{g^U(Y,\bm{\lambda},x)}} .\]
	Let 
	\[ 0\leq Z_n = c_{1n} + c_{2n}\abs{Y_n} + c_{3n}\abs{Y_n}^{1+\epsilon},  \]
	where
	\[ c_{1n} = \lambda_{1n}\abs{x_n} +  \lambda_{3n}B, \quad c_{2n} = {\lambda_{1n}}, \quad c_{3n} = \lambda_{3n}, \]
	\(Z_n \xrightarrow{n\rightarrow\infty} Z\) and \( c_{in} \xrightarrow{n\rightarrow \infty}c_i < \infty \) for $i\in \lrset{1,2,3}$. With these notation, 
	\[\log\lrp{g^U(Y_n,\bm{\lambda}_n, v_n)} \le  \log(1+Z_n),\]
	 and there exist constants \( c_{0n} \xrightarrow{n\rightarrow\infty}c_0 \) such that \( \log(1+Z_n) \leq c_{0n} + Z^{\frac{1}{1+\epsilon}}_n \). Using the form of \(Z_n\) from above, there also exist constants \( c_{4n} \xrightarrow{n\rightarrow \infty} c_4 \) and \( c_{5n}\xrightarrow{n\rightarrow\infty} c_5 \) such that 
	\[ Z^{\frac{1}{1+\epsilon}}_n \leq c_{4n} + c_{5n}\abs{Y_n}. \]
	Thus, there exist constants \( c_{0n}, c_{4n}, c_{5n} \) converging to \(c_0,c_4, c_5\) such that 	
	\[ \log\lrp{g^U(Y_n,\bm{\lambda}_n,x_n)} \leq c_{0n}+c_{4n}+c_{5n}\abs{Y_n} \triangleq  f^U(Y_n,\bm{\lambda}_n,x_n) .\]
	and 
	\[f^U(Y_n,\bm{\lambda}_n,x_n) \xrightarrow{a.s.}{f^U(Y,\bm{\lambda},x)} \quad \text{ and }\quad \E{q}{f^U(Y_n,\bm{\gamma}_n, x_n)} \rightarrow \E{q}{f^U(Y,\bm{\gamma},x)}, \]
	since \( \eta_n,\eta \in \cal L \), whence \(Y_n,Y\) are uniformly integrable (see \cite{williams1991probability}) . Since, 
	\[f^U(Y_n,\bm{\lambda}_n,x_n) - \log\lrp{g^U(Y_n,\bm{\lambda}_n,x_n)}  \geq 0,\]
	by Fatou's Lemma,
	\begin{align*} 
	&\E{q}{\liminf\limits_{n\rightarrow \infty} \lrp{f^U(Y_n,\bm{\lambda}_n,x_n)-\log\lrp{g^U(Y_n,\bm{\lambda}_n,x_n)}}}\\ 
	& \qquad\qquad \leq \E{q}{f^U\lrp{Y,\bm{\lambda},x}} - \limsup\limits_{n\rightarrow \infty} \E{q}{\log\lrp{g^U(Y_n,\bm{\lambda}_n,x_n)} },
	\end{align*}
	which implies
	\begin{align*}
	h(x,\eta,\bm{\lambda}) = \E{q}{\limsup\limits_{n\rightarrow \infty} \log\lrp{g^U(Y_n,\bm{\lambda}_n,x_n)} } &\geq \limsup\limits_{n\rightarrow \infty} \E{q}{\log\lrp{g^U(Y_n,\bm{\lambda}_n,x_n)}}\\ &= \limsup\limits_{n\rightarrow \infty} h(x_n\eta_n, \bm{\lambda}_n).
	\end{align*}
\end{proof}
Arguments similar to those in the proof for Lemma~\ref{lem:USCGEQ_mean}, prove upper-semicontinuity for $\KLinfL$ on $\mathcal L\times (-B^{\frac{1}{1+\epsilon}}, B^{\frac{1}{1+\epsilon}}]$.

\subsection{Proof of Lemma~\ref{lem:positiveG}}\label{app:positiveG}
Since $x$ belongs to a compact set, from continuity (Lemma~\ref{lem:propklinf}\ref{lem:cont_klinf_mean}), $\KLinfL$ and $\KLinfU$ attain minimum in $[m(\mu_j),m(\mu_1)]$. Further, for $x$ in this range, and for $y$ and $z$ chosen as in the lemma statement, at least one of $\KLinfU(\mu_j,x)$ and $\KLinfL(\mu_1,x)$ is strictly positive, implying that $G_j(\mu,y,z) > 0$. \BlackBox

\subsection{Proof of Lemma \ref{lem:AltVmu.mean}}\label{sec:prooflemAltVmu.mean}
\paragraph{Notation. }Given $\mu \in \mathcal M$, recall that $a^*(\mu) = 1$. For \(j \in \lrset{2,\dots,K}\), consider functions 
$$g_j: \mathcal M \times \Re^3\rightarrow \Re \quad \text{ and } \quad G_j: \mathcal M\times\Re^2\rightarrow\Re. $$ Let $\Re^+$ denote the collection of non-negative reals. Then,  for $x\in\Re$, $y\in\Re^+$ and $z\in\Re^+$, recall that $g_j(\mu,y,z,x) =  y \KLinfL(\mu_1,x)+z \KLinfU(\mu_j,x)$, and 
$$G_j(\mu,y,z) = \inf_{x\in [m(\mu_j), m(\mu_1)]}g_j(y,z,x).$$ 

\paragraph{Proof for Alternate representation for $V(\mu)$: } For a fixed $t\in\Sigma_K$ and $\mu$, the fact that the infimum in the $j^{th}$ problem in (\ref{eq:Vmu.mean}) is attained at a common point follows since $\KLinfU$ and $\KLinfL$ are non-decreasing and non-increasing functions of the second arguments, respectively. The range constraint on the common point follows from the observation that for $x$ outside $[m(\mu_j), m(\mu_1)]$, one of the $\KL$ projection functionals is increasing and the other remains constant $(=0)$. Hence, the overall function value is high for $x$ outside $[m(\mu_j, \mu_1)]$. This gives (\ref{eq:altVmurep.mean}).

\paragraph{Proof for $V(\mu)> 0$ : }To see that $V(\mu)$ is strictly positive, consider $$\tilde{t} = \lrset{\frac{1}{K}, \dots, \frac{1}{K}}.$$ From Lemma~\ref{lem:positiveG}
\[ V(\mu) \ge \min\limits_{j\ne 1} G_j(\mu,\tilde{t}_1, \tilde{t}_j) > 0. \]

\paragraph{Proof for Uniqueness of the infimizers: }
For fixed $\mu$  and $t\in\Sigma_K$, for each $j\ne 1$, the function $g_j(\mu,t_1,t_j, \cdot)$ is strictly convex on $[m(\mu_j),m(\mu_1)]$ (since $\KLinfL(\mu_1,\cdot)$ and $\KLinfU(\mu_j,\cdot)$ are strictly convex on this set), implying uniqueness of the infimizers in $G_j(\mu, t_1, t_j)$.

\paragraph{Proof for Continuity of $x^*_j$ and $G_j$: }
To see continuity of $x^*_j(\cdot,\cdot,\cdot)$, observe that for each $j$, the function $g_j(\cdot,\cdot,\cdot,\cdot)$ is jointly continuous in its arguments. This follows from joint continuity of $\KLinfU$ and $\KLinfL$. Since $\cal L$ is a uniformly-integrable collection of probability measures, for $\nu_1\in\mathcal L^o $ and $\nu_j \in \mathcal L^o$ such that $m(\nu_1) \ge m(\nu_j)$, the set-valued map $\nu_1\times \nu_j \rightarrow [m(\nu_j), m(\nu_1)]$ can be verified to be a continuous correspondence (see Section~\ref{sec:Berge} for definition of continuity of set-valued maps). Hence, an application of Berge's Theorem to the optimization problem in $G_j(\mu, t_1, t_j)$ (see Section~\ref{sec:Berge}) along with the uniqueness of the minimizer implies that $x^*_j$ is a jointly-continuous function of its arguments. This also implies that $G_j(\cdot, \cdot, \cdot)$ is a jointly continuous function of its arguments.

\subsection{Proof of Theorem~\ref{PropOpt} and Supporting Results}\label{sec:proofPropOpt}
\paragraph{Notation. }For $x\in\Re$, $y\in\Re^+$ and $z\in\Re^+$, let
\begin{equation*}
g_j(\mu,y,z,x) :=  y \KLinfL(\mu_1,x)+z \KLinfU(\mu_j,x), \text{ and } G_j(\mu,y,z) := \inf_{x\in [m(\mu_j), m(\mu_1)]}g_j(y,z,x). 
\end{equation*}
Recall that $M = [-B^\frac{1}{1+\epsilon}, B^\frac{1}{1+\epsilon}]$. Furthermore, for $\nu\in\mathcal M$, $t^*:\mathcal M \rightarrow \Sigma_K$, $t^*(\nu)$ be the set of maximizers for $V(\nu)$. 

\subsubsection{Supporting Results}
We first show that if $y >0$ and $z > 0$, then the infimum in the expression for $G_j$ is attained in the interior. This will be helpful in proving monotonicity of $G_j(\mu,y,\cdot)$, which will later be used to show uniqueness of the maximizer in $V(\mu)$. 
\begin{lemma}\label{lem:minint.mean}
For $\mu\in \mathcal M^o$ with arm $1$ being the unique best, for $w \in \Sigma_K$ such that $w_1 > 0$  and for $j \ne 1$, $w_j > 0$, $x^*_j \in (m(\mu_j), m(\mu_1))$.
\end{lemma}
\begin{proof}
Lemma~\ref{lem:AltVmu.mean} shows that $x^*_j \in [m(\mu_j), m(\mu_1)]$. We now argue that if $w_1 > 0$ and $w_j > 0$, $x^*_j$ cannot be on the boundary of this set. 

Since $g_j(\nu,w_1,w_j,\cdot)$ is a convex function, it is sub-differentiable. For fixed $\mu$ and $w$, let the set of sub-gradients of $g_j$ at a point $x$ be denoted by $\partial g_j(x)$. Similarly, denote these for $\KLinfU(\mu_j, \cdot)$ and $\KLinfL(\mu_1, \cdot)$ by $\partial\KLinfU(x)$ and $\KLinfL(x)$, respectively. Since $x^*_j$ is a minimizer of $g_j$, $0\in\partial g_j(x^*_j)$.

Recall that $\KLinfU(\mu_j, \cdot)$ and $\KLinfL(\mu_1, \cdot)$ are strictly convex in $[m(\mu_j), m(\mu_1)]$ with $m(\mu_j)$ and $m(\mu_1)$ being the unique points of minimum, respectively. Hence, $0\in\partial \KLinfU(s)$ iff $s= m(\mu_j)$ and $0\in \partial\KLinfL(s)$ iff $s=m(\mu_1)$.

For $x\in [m(\mu_j), m(\mu_1)]$, $\partial g_j(x) = w_1\partial\KLinfL(x) + w_j\partial\KLinfU(x)$. Since $w_1 > 0$ and  $0\in\partial \KLinfU(m(\mu_j))$ and $0\notin \partial \KLinfL(m(\mu_j))$, $0\notin \partial g_j(m(\mu_j))$. Similarly, $0\notin\partial g_j(m(\mu_1))$. Hence, $x^*_j \in (m(\mu_1), m(\mu_j))$. 
\end{proof}

We now prove certain monotonicity properties that will be useful in proving uniqueness of the set of maximizers, $t^*$. For $j \ne 1$ and $z\in\Re^+$, define functions 
\[h_j(\mu,z)  = \inf\limits_{x \in M}~ \lrset{\KLinfL(\mu_1,x) + z \KLinfU(\mu_j, x)}.\]
Clearly, $h_j(\mu,\cdot) \in [0, \KLinfL(\mu_1, m(\mu_j))]$.

\begin{lemma}\label{lem:contandmonforh}
For $\mu \in \mathcal M^o$ with unique optimal arm, say arm $1$, for all $j\ne 1, h_j(\mu,\cdot)$ is continuous and strictly increasing on $\Re^+$.  
\end{lemma}

\begin{proof}
Continuity of $h_j(\mu,\cdot)$ follows from an application of Berge's Theorem to the optimization problem: 
$$\inf\limits_{x\in M} ~ g_j(\mu,1,\cdot,\cdot).$$ 
Since  $g_j(\mu,1,\cdot,\cdot)$ is  jointly continuous  and the set $M$ is a constant, hence continuous, conditions of Berge's Theorem are satisfied. See Section~\ref{sec:Berge} and references therein, for the statement of Berge's Theorem. 

Next, consider $z' >  z \ge 0$. Then, from Lemma~\ref{lem:minint.mean} there exists $x' \in (m(\mu_j), m(\mu_1))$ such that $h_j(\mu,z') = \KLinfL(\mu_1,x') + z'\KLinfU(\mu_1,x') $. Since $z' > z$, we have 
$$h_j(\mu,z') > \KLinfL(\mu_1,x') + z\KLinfU(\mu_1,x') \ge h_j(\mu, z), $$
proving the desired monotonicity.
\end{proof}

\subsubsection{Proof of Theorem~\ref{PropOpt}}
Suppose for $t\in t^*(\mu)$, $t_1 = 0$. Then, for all $j\ne 1$, the infimizers in $G_j(\mu,0,t_j)$ are $m(\mu_j)$ and $G_j(\mu,0,t_j)=0$ for all $j\ne 1$. Similarly, suppose $t_j = 0$ for some $j\ne 1$. Then, by the same argument as above, $G_j(\mu,t_1, t_j) = 0$. However, this contradicts the strict positivity of $V(\mu)$ (Lemma~\ref{lem:AltVmu.mean}), proving Theorem~\ref{PropOpt}\ref{PropOpt_gre0}. 

Next, since for $j\ne 1$, $G_j(\cdot,\cdot,\cdot)$ are jointly-continuous functions (Lemma~\ref{lem:AltVmu.mean}), the map $(\mu,t)\rightarrow \min_j G_j(\mu,t_1,t_j)$ is also jointly-continuous. Moreover, $\Sigma_K$ is a compact set that is continuous in $\mu$ (a constant correspondence). An application of Berge's theorem (see Section~\ref{sec:Berge}) to the outer maximization problem in (\ref{eq:altVmurep.mean}) implies that $V: \mathcal M^o \rightarrow \Re$ is a continuous function, and the set of maximizers, $t^*$, is upper-hemicontinuous (see Section~\ref{sec:Berge} for a definition). Upper-hemicontinuity together with the set being a singleton (to be proven), implies continuity of $t^*$ (see \citet[Chapter 9]{SundaramOpt1996}).

The proof for \ref{PropOpt_EqualVals} and uniqueness of the maximizer follows along the lines of \citet[Theorem 5]{garivier2016optimal} (however, carefully avoiding taking derivatives) where the authors proved similar result for the setting of SPEF. 
Since for $t\in t^*(\mu)$, $t_i > 0$ for all $i\in [K]$, we can rewrite $V(\mu)$ as 
\[ V(\mu) = \max\limits_{w\in\Sigma_K, w_1 > 0}~ w_1 ~ \min\limits_{j\ne 1} ~ h_j\lrp{\frac{w_j}{w_1}}. \]
For $t^*\in t^*(\mu)$,

\[ t^*\in \argmax\limits_{w\in\Sigma_K, w_1 >0 }~ w_1 ~ \min\limits_{j\ne 1} ~ h_j\lrp{\frac{w_j}{w_1}}.\]
Setting $y^*_i = \frac{t^*_j}{t^*_1}$ for $i\ne 1$ and using that $\sum\limits_{j\in [K]} t^*_j = 1$, we have 
\[ t^*_1 = \frac{1}{1+\sum\limits_{j\ne 1} y^*_j}, \quad \text{and} \quad t_j = \frac{y^*_j}{1+\sum\limits_{i\ne 1} y^*_i}. \numberthis \label{eq:yt}\]
Clearly, $y^* = \{y^*_j\}_{2}^K \in \Re^{K-1}$ and 
\[ y^* ~\in~ \argmax\limits_{\lrset{y_j}_2^K \in \Re^{K-1}} ~  \frac{\min \limits_{j\ne 1}~h_j(\mu,y_j)}{1+\sum\limits_{j=2}^K y_j}.  \] 
Equation~\eqref{eq:yt} implies that if the maximizers $y^*\in\Re^{K-1}$ in the above formulation are unique, then $t^*(\mu)$ is singleton. 
We now show that at optimal $y^*$, $h_j(\mu,y^*_j)$ is the same for all $j\ne 1$, proving \ref{PropOpt_EqualVals}. Moreover, since $h_j(\mu,\cdot)$ are monotonic, optimal $y^*$ will be unique (inverse of $h_j(\mu,\cdot)$ at $V(\mu)$) giving uniqueness of $t^*(\mu)$.

Let $$\mathcal B = \lrset{b\in\lrset{2,\dots, K}: ~ h_b(\mu,y^*_b) = \min\limits_{j\ne 1} h_j(\mu,y^*_j)) },$$ 
and let $\mathcal A = \lrset{2, \dots, K}\setminus\mathcal B$. Assume that $\mathcal A \ne \emptyset$. Then, for all $a\in\cal A$ and $b\in\cal B$, we have $h_a(\mu,y^*_a) > h_b(\mu,y^*_b). $ From continuity and monotonicity of $h_j(\mu,\cdot)$ from Lemma~\ref{lem:contandmonforh},  there exists $\epsilon_0 > 0$ such that 
\[ \forall a\in\mathcal A, ~ \forall b \in \mathcal B, ~~ h_a\lrp{\mu, y^*_a - \frac{\epsilon_0}{\abs{\mathcal A}}} > h_b\lrp{\mu, y^*_b + \frac{\epsilon_0}{\abs{\mathcal B}}}. \]
Setting $\bar{y}_a = y^*_a - \frac{\epsilon_0}{\abs{\mathcal A}}$ for $a\in \mathcal A$, and $\bar{y_b} = y^*_b + \frac{\epsilon_0}{\abs{\mathcal B}}$. Clearly, $\sum_j \bar{y}_j = \sum_j y^*_j$. Moreover, there exists $b\in\mathcal B$ such that 
\[ h_b\lrp{\mu, \bar{y}_b} = \min\limits_{i\in \mathcal B} h_i\lrp{\mu, \bar{y}_i} = \min\limits_{j\ne 1} h_j(\mu, \bar{y}_j). \]
Hence,
\[ \frac{\min\limits_{j\ne 1}~ h_j(\mu,\bar{y}_j) }{1+\sum\limits_{j\ne 1} \bar{y}_j} = \frac{h_b(\mu, \bar{y}_b)}{ 1+\sum\limits_{j\ne 1} y^*_j } > \frac{h_b(\mu, y^*_b)}{1+\sum\limits_{j\ne 1} y^*_j} = \frac{\min\limits_{j\ne 1} ~ h_j(\mu, y^*_j)}{ 1+ \sum\limits_{j\ne 1} y^*_j} ,\]
contradicting the optimality of $y^*$. Hence, $\mathcal A = \emptyset$, proving the desired results. \BlackBox

\section{Proofs from Section~\ref{sec:dual}}
In this section, we prove the alternative representations for the two $\KL$ projection functions $\KLinfU$ and $\KLinfL$ (Theorem \ref{th:klinfDual_mean}). Since these are symmetric quantities, with $m(\cdot) \ge x$ and $m(\cdot) \le x$ being linear constraints, they have similar properties. While we state the supporting results for both these functions, we only give proofs for these results and dual representation for $\KLinfU$. Proofs for $\KLinfL$ follow along the same lines.

For  $x\in M$ (recall definition from Section~\ref{sec:lb.mean}), define the sets  
\begin{equation*}
	\mathcal{D}^L_x \triangleq \lrset{\eta \in \mathcal{L} : m(\eta) \leq x}\quad\text{ and }\quad \mathcal{D}^U_x \triangleq \lrset{\eta \in \mathcal{L} : m(\eta) \geq x}.
\end{equation*} 
For $\eta\in\mathcal P(\Re)$, these are the feasible regions for $\KLinfL(\eta,x)$ and $\KLinfU(\eta,x)$, respectively. Recall from Lemma~\ref{lem:001} that these are closed, compact, and convex subsets of $\mathcal P(\Re)$ in the topology of weak convergence. For a set $A$, let $A^c$ denote its complement, and $|A|$ denote its size. Also, recall that we endow the space of probability measures with topology of weak convergence, or equivalently, with the L\'evy metric. For a sequence of probability measures $\eta_n$, we denote by $\eta_n \xRightarrow{D} \eta$ its convergence to $\eta$ in this topology.

\paragraph{Extending $\KL$ divergence.} Recall that \(\mathcal{P}(\Re)\) denotes the space of all probability measures on \(\Re\). Let \(\mathcal M^+(\Re)\) denote the collection of all finite, positive measures on \(\Re\). Extend the Kullback-Leibler Divergence to \(\mathcal M^+(\Re)\times \mathcal M^+(\Re)\), i.e., let \(\KL: \mathcal M^+(\Re)\times \mathcal M^+(\Re)\rightarrow \Re\) defined as:
\[\KL(\kappa_1,\kappa_2) \triangleq \int_{y\in\Re} \log\lrp{ \frac{d\kappa_1}{d\kappa_2}(y)} d\kappa_1(y). \]
Note that for \(\kappa_1 \in \mathcal{P}(\Re)\) and \(\kappa_2 \in \mathcal{P}(\Re)\), \(\KL(\kappa_1,\kappa_2) \) is the usual Kullback-Leibler Divergence between the probability measures. 
 
For $y\in\Re$, let $f(y) = \abs{y}^{1+\epsilon}$. Recall that for \(\eta \in\mathcal{P}(\Re)\), \(x \in \Re\), and \(B > f({x})\),  \(\KLinfU(\eta,x)\) is defined as the solution to the following optimization problem, \(\mathcal{O}_1\):
\begin{equation*}
\inf\limits_{\kappa \in \mathcal M^+(\Re)} \KL(\eta,\kappa)\quad  \text{ s.t. } \quad \int\limits_{y\in\Re}yd \kappa(y) \geq x,\quad  \int\limits_{y\in\Re}f({y})d\kappa(y) \leq B, \quad \int\limits_{y\in\Re}d \kappa(y) = 1.
\end{equation*}

\subsection{The Lagrangian Dual Problem}
We now present the Lagrangian dual problem for ${\mathcal O_1}$. Let \(\bm{\lambda}=\lrp{\lambda_1,\lambda_2,\lambda_{3}} \in \Re^3\). For \(\kappa\in \mathcal M^+(\Re)\), the Lagrangian, denoted by \(L(\kappa, \bm{\lambda})\), for the Problem \(\mathcal{O}_1\) is given by, 
\begin{align*}
L(\kappa, \bm{\lambda}) &= \int\limits_{\Re} \log \lrp{\frac{d \eta}{d\kappa}(y)} d\eta(y) + \lambda_{3} - \lambda_2\int\limits_{y \in \Re} d\kappa(y) + \lambda_1 x \\ 
& \qquad \qquad \qquad - \lambda_1\int\limits_{y \in \Re} y d\kappa(y)
+ \lambda_2 B + \lambda_2\int\limits_{y \in \Re} f({y}) d\kappa(y). \label{lagrangian} \numberthis
\end{align*}

The Lagrangian dual problem corresponding to the Problem \(\mathcal{O}_1\) is given by
\begin{equation}
\label{lagrangianDual}
\max\limits_{\substack{\lambda_{3}\in\Re,\lambda_1\geq 0, \lambda_{2}\geq 0}} \lrp{\inf\limits_{\kappa\in \mathcal M^+(\Re)} ~~~ L(\kappa,\bm{\lambda})}.
\end{equation}

Let \(\Supp(\kappa)\) denote the support of measure \(\kappa\),  
\begin{equation*}
h(y,\bm{\lambda}) \triangleq -\lambda_{3}-y\lambda_1+f({y})\lambda_{2}, \;\;\;\;\; \mathcal{Z}(\bm{\lambda}) = \lrset{y\in\Re: h(y,\bm{\lambda})=0},
\end{equation*}
and 
\[S_1 = \lrset{\bm{\lambda}\in\Re^3: \lambda_1 \geq 0, \; \lambda_2 \geq 0, \lambda_3 \in \Re, ~~\inf\limits_{y\in\Re}~ h(y,\bm{\lambda}) \geq 0  }. \]
Observe that for \(\bm{\lambda}\in S_1\), \(\mathcal{Z}(\bm{\lambda})\) is either singleton or empty set. This is easy to see since \(f(\cdot)\) is strictly convex and continuous function. In particular, if \(\mathcal{Z}(\bm{\lambda}) \) is non-empty, \(y_0\) that minimizes \(h(y,\bm{\lambda})\) is the unique element in \(\mathcal{Z}(\bm{\lambda})\).
\begin{lemma}\label{lem:probO2}
	The Lagrangian dual problem (\ref{lagrangianDual}) is simplified as below. 
	\begin{equation*}
	\max\limits_{\substack{\lambda_{3}\in\Re,\lambda_1\geq 0, \lambda_{2}\geq 0}} \lrp{\inf\limits_{\kappa\in \mathcal M^+(\Re)} L(\kappa,\bm{\lambda})} = \max\limits_{\bm{\lambda} \in S_1 } \lrp{\inf\limits_{\kappa\in \mathcal M^+(\Re)} L(\kappa,\bm{\lambda})} .
	\end{equation*}
\end{lemma}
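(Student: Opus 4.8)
The plan is to show that restricting the outer maximization in \eqref{lagrangianDual} from the region $\{\lambda_3 \in \Re, \lambda_1 \geq 0, \lambda_2 \geq 0\}$ to the smaller region $\mathcal{R}_3$ does not change the optimal value, because any $\bm\lambda$ outside $\mathcal{R}_3$ contributes $-\infty$ to the inner infimum $\inf_{\kappa \in M^+(\Re)} L(\kappa, \bm\lambda)$, and hence cannot be a maximizer. Concretely, I would first compute $L(\bm\lambda) = \inf_{\kappa \in M^+(\Re)} L(\kappa,\bm\lambda)$ explicitly. Rearranging the Lagrangian \eqref{lagrangian}, one writes
\[
L(\kappa,\bm\lambda) = \lambda_3 - \lambda_2 B + \lambda_1 x + \int_{y \in \Re}\left( \log\frac{d\eta}{d\kappa}(y)\, d\eta(y) + h(y,\bm\lambda)\, d\kappa(y)\right),
\]
where $h(y,\bm\lambda) = -\lambda_3 - y\lambda_1 + f(|y|)\lambda_2$. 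The key structural point is that $\kappa$ ranges over all of $M^+(\Re)$, i.e., it is unconstrained except for nonnegativity: the normalization, mean and moment constraints have all been dualized.

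The main step is the dichotomy on the sign of $h(\cdot,\bm\lambda)$. If $h(y_0,\bm\lambda) < 0$ for some $y_0 \in \Re$, then by putting more and more mass near $y_0$ (formally, take $\kappa = \eta + c\,\delta_{y_0}$ or, to keep $\kappa \ll \eta$ if needed, inflate the density of $\kappa$ on a small neighborhood of $y_0$ where $h$ stays negative) one drives the term $\int h(y,\bm\lambda)\,d\kappa(y) \to -\infty$ while the $\operatorname{KL}$ term grows only sublinearly in the added mass (like $c\log c$ versus the linear $c\, h(y_0,\bm\lambda)$), so $L(\bm\lambda) = -\infty$. Thus such $\bm\lambda$ are strictly dominated by, say, the zero vector (for which the inner infimum is finite since $\eta$ itself is feasible-ish), and may be discarded from the outer max. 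Conversely, if $h(y,\bm\lambda) \geq 0$ for all $y$, i.e.\ $\bm\lambda \in \mathcal{R}_3$, the inner infimum is finite (minimized pointwise by $\frac{d\kappa}{d\eta}(y) = 1/h(y,\bm\lambda)$ on $\{h>0\}$ plus possibly an atom where $h$ vanishes, giving a finite value). Since the unrestricted supremum is $\geq$ the restricted one trivially, and every point achieving a value $>-\infty$ lies in $\mathcal{R}_3$, the two suprema coincide. It is worth noting here the observation already recorded in the excerpt: for $\bm\lambda \in \mathcal{R}_3$, strict convexity of $f$ forces $\mathcal{Z}(\bm\lambda)$ to be empty or a single point $y_0$ (the unique minimizer of $h(\cdot,\bm\lambda)$), which is what makes the pointwise minimization of the inner problem well-behaved.

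I expect the main obstacle to be the rigorous handling of the $\operatorname{KL}$ term when inflating mass near $y_0$ in the case $h(y_0,\bm\lambda)<0$: one must ensure the comparison measure $\kappa$ stays absolutely continuous with respect to... wait, $\operatorname{KL}(\eta,\kappa)$ requires $\eta \ll \kappa$, not the reverse, so adding mass to $\kappa$ only helps — one can take $\kappa_c$ with $\frac{d\kappa_c}{d\eta} = 1 + c\,\mathbbm{1}_{A}$ for a small set $A$ around $y_0$ on which $h \leq h(y_0,\bm\lambda)/2 < 0$, and then $\operatorname{KL}(\eta,\kappa_c) = -\int \log(1+c\mathbbm{1}_A)\,d\eta \leq 0$ actually decreases, while $\int h\,d\kappa_c \leq \int h\,d\eta + c\,\eta(A)\,h(y_0,\bm\lambda)/2 \to -\infty$. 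So the estimate is in fact clean; the only care needed is choosing $A$ with $\eta(A)>0$, which holds if $y_0$ can be taken in (or on the boundary of) $\Supp(\eta)$ — and if $h$ is negative somewhere it is negative on an open set by continuity, which must intersect a set of positive $\eta$-measure unless $h \geq 0$ on all of $\Supp(\eta)$; one then reduces to $y$ ranging over $\Supp(\eta)$, or handles the off-support case by the added-atom construction since $\operatorname{KL}(\eta, \eta + c\delta_{y_0})$ is still finite. Assembling these cases is the bulk of the work; the rest is bookkeeping.
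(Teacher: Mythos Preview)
Your approach is correct and coincides with the paper's: show that for $\bm{\lambda}\notin\mathcal{R}_3$ the inner infimum is $-\infty$ by pushing mass onto a point $y_0$ where $h(y_0,\bm{\lambda})<0$. The paper simply takes $\kappa_M$ equal to $\eta$ on $\Supp(\eta)\setminus\{y_0\}$ together with mass $M$ at $y_0$, which works uniformly and avoids the on-support/off-support case split you spend most of your effort on; your added-atom construction at the end is exactly this, so you can drop the neighborhood-inflation detour entirely.
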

We call the problem on right as \(\bm{\mathcal{O}_2}\).

\begin{proof}
	Let \(\bm{\lambda}\in \Re^3\setminus S_1\). Then, there exists \(y_0\in\Re \) such that \(h(y_0,\bm{\lambda}) < 0\). We show below that for such a \(\bm{\lambda},\) the inner infimum in the problem on left above, equals $-\infty$. Thus, to maximize this infimum, it is sufficient to consider \(\bm{\lambda }\in S_1. \) Towards this, for \(M > 0\), consider the measure \(\kappa_M\in \mathcal M^+(\Re) \) be such  that  \(\kappa_M(y_0) = M\) and 
	\[\frac{d\eta}{d \kappa_M}(y) = 1, \quad  \text{for } \quad y \in\Supp(\eta)\setminus \lrset{y_0}.\]
	Then the Lagrangian in (\ref{lagrangian}) can be re-written as: 
	
	\begin{align*}
	L(\kappa_M,\bm{\lambda} ) &= \underbrace{\int\limits_{y\in\Re} \log\lrp{\frac{d \eta}{d \kappa_M}(y)} d \eta(y)}_{\triangleq A} + \underbrace{\int\limits_{y\in \Re}{h(y,\bm{\lambda})  } d\kappa_M(y)}_{\triangleq B} +    \lambda_3+\lambda_1 x - \lambda_2B.
	\end{align*}
	From above, it can be easily seen that \(L(\kappa_M, \bm{\lambda})\xrightarrow{M\rightarrow \infty}-\infty\), since \(A+B\rightarrow -\infty \), implying that the infimum is $-\infty$, giving the desired result.
\end{proof}

\begin{lemma}
\label{lem:MinimizerOfLagrangian_mean}
For \(\bm{\lambda}\in S_1 \), there is a unique \(\kappa^*\in \mathcal M^+(\Re)\) that minimizes \(L(\kappa, \bm{\lambda})\). It satisfies
\begin{equation}
\label{eq_OptParams}
\Supp(\kappa^*)\subset \lrset{\Supp(\eta)\cup \mathcal{Z}(\bm{\lambda})}.
\end{equation}

Furthermore, for \(y \in \Supp(\eta)\), \(h(y,\bm{\lambda}) >0 \), and 
\begin{equation}
\label{eq_OptDist}
\frac{d \kappa^*}{d \eta}(y) = \frac{1}{-\lambda_{3} - \lambda_1 y +\lambda_2 f({y})}.
\end{equation}
\end{lemma}

\begin{proof}
\label{proof_lem:MinimizerOfLagrangian_mean}
First, observe that for a fixed set of dual variables $\bm{\lambda}$, if there are multiple measures that minimize $L(\cdot, \bm{\lambda})$, then all of them agree on $\Supp(\eta)$. This follows from strict convexity of $\KL(\eta,\cdot)$, hence of $L(\cdot,\bm{\lambda})$, on support of $\eta$. 

Before establishing uniqueness of the optimizer, we first show that any measure, say \(\kappa^* \), satisfying (\ref{eq_OptParams}) and (\ref{eq_OptDist}) minimizes \(L(\kappa, \bm{\lambda})\). Let \(\kappa_1\) be any measure in \(\mathcal M^+(\Re)\) that is different from \(\kappa^*\). Since \(\mathcal M^+(\Re)\) is a convex set, for \(t\in [0,1]\), 
$$\kappa_{2,t}\triangleq (1-t)\kappa^*+t\kappa_1$$ 
belongs to 
\(\mathcal M^+(\Re).\) Since \(L(\kappa,\bm{\lambda})\) is convex in $\kappa$, to show that \(\kappa^*\) minimizes \(L(\kappa,\bm{\lambda})\), it suffices to show 
\[ \frac{\partial L\lrp{\kappa_{2,t},\bm{\lambda}}}{\partial t}\bigg\rvert_{t=0} \geq 0.\]
Substituting for \(\kappa_{2,t} \) in  (\ref{lagrangian}), 
\begin{align*}
L\lrp{\kappa_{2,t}, \bm{\lambda}} 	=\int\limits_{y\in\Supp(\eta)}\log\lrp{\frac{d\eta}{d\kappa_{2,t}}(y)} d\eta(y) &+\lrp{\lambda_{3}+\lambda_1 x - \lambda_2 B}+\int\limits_{\Re}h(y,\bm{\lambda})d\kappa_{2,t}(y).
\end{align*}	
Evaluating the derivative with respect to \(t\) at \(t=0\), 
\begin{align*}
\frac{\partial L\lrp{\kappa_{2,t}, \bm{\lambda}}}{\partial t}\bigg\rvert_{t=0}&= \int\limits_{y\in\Supp(\eta)}\frac{d\eta}{d\kappa^*}(y)(d\kappa^*-d\kappa_1)(y)+
\int\limits_{\Re}\lrp{\lambda_{3}+\lambda_1y -\lambda_2 f({y})}(d\kappa^*-d\kappa_1)(y).
\end{align*}
For \(y\in\Supp(\eta) \), \(\partial \eta /\partial \kappa^* =h(y)\). Substituting this in the above expression, we get: 
\begin{align*}
\frac{\partial L\lrp{\kappa_{2,t}, \bm{\lambda}}}{\partial t}\bigg\rvert_{t=0} ~~ 
&= ~~ \int\limits_{y\in\Supp(\eta)}h(y)(d\kappa^*-d\kappa_1)(y)- \int\limits_{\Re}h(y)(d\kappa^*-d\kappa_1)(y)\\
&= ~~ -\int\limits_{y\in \lrset{\Re\setminus\Supp(\eta)}}h(y)d\kappa^*(y) + \int\limits_{y\in \lrset{\Re\setminus\Supp(\eta)}}h(y)d\kappa_1(y)\\
&\geq ~~ 0.
\end{align*}
where, for the last inequality, we have used the fact that for \(y\in \lrset{\Supp(\kappa^*)\setminus\Supp(\eta)}\), \(h(y)=0\) and \(h(y)\geq 0 \), otherwise.

\paragraph{Uniqueness of the minimizer. } Fix $\bm{\lambda}\in S_1$. Let there be another optimizer, $\kappa_{\bm{\lambda}}$. Then, $\kappa_{\bm{\lambda}}$ and $\kappa^*$ only differ outside $\Supp(\eta)$. This is because of strict convexity of $\KL(\eta,\cdot)$ when restricted to $\Supp(\eta)$. Then, this implies that there exists a point $z\in\Supp(\kappa_{\bm{\lambda}})$ such that $z\not\in \Supp(\eta) \cup \mathcal Z(\bm{\lambda})$, i.e., $\kappa(z) > 0$ and $h(z,\bm{\lambda}) > 0$. Consider
\[ 
\kappa'(y) =
\begin{cases}
\kappa_{\bm{\lambda}}(y), \quad&\text{ for }y\in\Supp(\eta) \cup \mathcal Z(\bm{\lambda}),\\
0, \quad&\text{ otherwise}.
\end{cases} 
\]
Clearly, $L(\kappa',\bm{\lambda})$ is strictly smaller than $L(\kappa, \bm{\lambda})$, contradicting the optimality of $\kappa_{\bm{\lambda}}$.
\end{proof}

\subsection{Proof of Theorem~\ref{th:klinfDual_mean}}
\label{Proof_th:klinfDual_mean}

Let
\begin{equation}\label{S2}
S_2 = \lrset{(\lambda_1,\lambda_2): \lambda_1 \geq 0, \; \lambda_2 \geq 0, \quad \inf\limits_{y\in\Re}~ \lrset{1-(y-x)\lambda_1 - (B-f({y}))\lambda_2} \geq 0  }. 
\end{equation}
Furthermore, define
\begin{equation}\label{htilde}
\tilde{h}(y,(\lambda_1,\lambda_2)) \triangleq 1-(y-x)\lambda_1 - (B-f({y}))\lambda_2.
\end{equation}

To prove the alternative expression for \(\KLinfU\) given by this theorem, we first show that both the primal ( $\KLinfU$ problem, denoted as $\mathcal O_1$) and the dual problems ($\mathcal{O}_2$,  from Lemma \ref{lem:probO2}) are feasible. Further, we argue that strong duality holds for the Problem \(\mathcal{O}_1\) and show that the expression on the right in  (\ref{eq:KLinf}) is the corresponding optimal Lagrangian dual. 

\paragraph{Primal feasibility.} Let \(\delta_y\) denote a unit mass at point \(y\). Since \(f({x}) < B \), there exists \(\epsilon > 0\) such that \(f({x+\epsilon}) < B\). Consider \(\kappa_{0} = \delta_{x+\epsilon}\). Consider distribution \(\kappa'\) which is a convex combination of \(\eta \) and \(\kappa_{0}\), given by: \(\kappa'=p\kappa_{0}+(1-p)\eta\), for \(p\in [0,1] \) chosen to satisfy the following two conditions. 
\[p(x+\epsilon)+(1-p)m(\eta)\geq x \text{ and } pf({x+\epsilon})+(1-p)\mathbb{E}_{\eta}(f({X})) \leq B. \]
It is easy to check that such a \(p\) always exists. \(\kappa'\) thus obtained satisfies the constraints of \(\mathcal{O}_1\) and \(\KL(\eta,\kappa') < \infty\), since \(\Supp(\eta)\subset \Supp(\kappa')\). Hence, primal problem \(\mathcal{O}_1 \) is feasible. 

\paragraph{Dual feasibility. } Next, we claim that \(\bm{\lambda^{1}}=(0,0,-1) \) is a dual feasible solution. To this end, it is sufficient to show that \(
\min_{\kappa\in \mathcal M^+(\Re)}L(\kappa,(0,0,-1))  > -\infty 
\). Observe that for \(\kappa\in\mathcal{M}^{+}(\Re)\), \(\KL(\eta,\kappa)\) defined to extend the usual definition of Kullback-Leibler Divergence to include all measures in \(\mathcal M^+(\Re)\), can be negative with arbitrarily large magnitude. From (\ref{lagrangian}), 
\[L(\kappa, \bm{\lambda^1}) = \KL(\eta,\kappa) - 1 + \int\nolimits_{y\in\Re} d\kappa(y). \]

Let \(\tilde{\kappa} \) denote the minimizer of \(L(\kappa,\bm{\lambda^1}) \). First, observe that \( \Supp(\tilde{\kappa}) = \Supp(\eta) \). If there is a \(y\) in \(\Supp(\eta)\) but outside \(\Supp(\tilde{\kappa})\), then \(L(\tilde{\kappa},\bm{\lambda^1}) \) is \(\infty\). On the other hand, if there exists \(y \text{ in } \lrset{\Supp(\tilde{\kappa})\setminus\Supp(\eta)}, \) it only contributes to increase the integral in the above expression and thus increases \( L(\tilde{\kappa},\bm{\lambda^1}).\) Thus, \( \Supp(\tilde{\kappa}) = \Supp(\eta) \). Furthermore, from Lemma \ref{lem:MinimizerOfLagrangian_mean}, for \(y \text{ in } \Supp(\eta),\) for $\bm{\lambda^1}$, the optimal measure \(\tilde{\kappa} \) must satisfy
\[\frac{d \tilde{\kappa}}{d\eta}(y) = 1. \]
Thus, \(\tilde{\kappa} = \eta \) and \(\min\limits_{\kappa\in \mathcal M^+(\Re)} L(\kappa,\bm{\lambda^1}) = 0. \) This proves the feasibility of the dual problem \(\mathcal{O}_2\).

\paragraph{Strong duality.} Since both primal and dual problems are feasible, both have optimal solutions. Furthermore, \(\kappa_{0} = \delta_{x+\epsilon}\) defined earlier, satisfies all the inequality constraints of \((\mathcal{O}_1)\) strictly, hence lies in the interior of the feasible region (Slater's conditions are satisfied). Thus strong duality holds for the problem \((\mathcal{O}_1)\) and there exists optimal dual variable \(\bm{\lambda}^*=( \lambda^*_1, \lambda^*_{2}, \lambda^*_3) \) that attains maximum in the problem \(\mathcal{O}_2 \) (see \citet[Theorem 1, Page 224]{luenberger1969optimization}). 

Since the primal problem is minimization of a convex function (which is non-negative on the feasible set) over a closed, compact, and convex set (see Lemma \ref{lem:001} for properties of the feasible region), it attains its infimum within the set. 

\paragraph{Dual representation.} Strong duality implies
\begin{equation*} 
\KLinfU(\eta,x) = \max\limits_{\bm{\lambda} \in S_1} ~\inf\limits_{\kappa\in \mathcal M^+(\Re)} ~L(\kappa,\bm{\lambda} ).\end{equation*}

\paragraph{Simplification of the dual representation. } Let \(\kappa^* \text{ and } \bm{\lambda^*}\) denote the optimal primal and dual variables. Since strong duality holds, and the problem \((\mathcal{O}_1)\) is a convex optimization problem, KKT conditions are necessary and sufficient for \(\kappa^* \text{ and } \bm{\lambda^*}\) to be optimal variables (see \citet[ page 224]{boyd2004convex}). Hence \(\kappa^*, \lambda_{3}^*\in \Re, \lambda_1^*\geq 0, \) and  \(\lambda_2^*\geq 0 \) must satisfy the following conditions (KKT):
\begin{equation}
\label{eqDualFeasibility}
\kappa^* \in \mathcal M^+(\Re),\quad \int\limits_{\Re} yd\kappa^*(y) \geq x, \quad \int\limits_{\Re} f({y}) d\kappa^*(y) \leq B,\quad \int\limits_{y\in\Re}d\kappa^*(y)=1, 
\end{equation}
\begin{equation}
\label{eqCS_mean}
\lambda_{3}^*\lrp{1-\int\limits_{\Re} d\kappa^*(y)}=0, ~~ \lambda_1^*\lrp{x-\int\limits_{\Re} yd\kappa^*(y)} = 0, ~~ \lambda_2^*\lrp{\int\limits_{\Re} f({y})d\kappa^*(y) - B} =0,
\end{equation}
and 
\begin{equation}
\label{eq:DualFeasible}
(\lambda^*_1,\lambda^*_2,\lambda^*_3)\in S_1.
\end{equation}
Furthermore, \(\kappa^*\) minimizes \(L(\kappa, \bm{\lambda^*})\). From conditions (\ref{eqCS_mean}), and Lemma \ref{lem:MinimizerOfLagrangian_mean},

\[L(\kappa^*,\bm{\lambda^*}) = \mathbb{E}_{\eta}\lrp{\log\lrp{-\lambda^*_3-\lambda^*_1 X+\lambda^*_2f({X})}},\]
where \(X\) is the random variable distributed as \(\eta. \)

Adding the equations in (\ref{eqCS_mean}), and using the form of \(\kappa^*\) from Lemma \ref{lem:MinimizerOfLagrangian_mean}, we get 
\[\lambda_{3}^* = -1-\lambda_1^* x + \lambda_2^* B .\]
With this condition on \(\lambda_{3}^*\), the region \( S_1\) reduces to the region \(S_2\) defined earlier in (\ref{S2}).

Since we know that the optimal \(\bm{\lambda^*}\) in \(S_1 \) with the corresponding minimizer, \(\kappa^* \), satisfies the conditions in (\ref{eqCS_mean}) and that \(\lambda^*_3 \) has the specific form given above, the dual-optimal value remains  unaffected by adding these conditions as constraints in the dual-optimization problem. With these conditions, the dual reduces to 
\begin{equation*}
\max\limits_{(\lambda_1, \lambda_{2})\in S_2} \mathbb{E}_{\eta}\lrp{\log\lrp{1-(X-x)\lambda_1-(B-f({X})\lambda_2)}},
\end{equation*}
and by strong duality, this is also the value of \(\KLinfU(\eta,x) \).

\paragraph{Simplifying the dual region $S_2$. } We now show that the dual region, $S_2$, is the same as the region $S^U(x)$ in the theorem statement. Towards this, for a fixed $x$, $B$, $\lambda_1 > 0$ and $\lambda_2 > 0$, let us compute the minimizer for 
$$ \inf\limits_{y\in\Re} ~ \lrset{1-\lambda_1(y-x) - \lambda_2(B-\abs{y}^{1+\epsilon})}. $$
Since $\abs{\cdot}^{1+\epsilon}$ is a strictly convex function, we get that
 \[ y^*_{\bm{\lambda}} = \lrp{\frac{\lambda_1}{\lambda_2 (1+\epsilon)}}^{\frac{1}{\epsilon}} \numberthis \label{eq:extrapoint}\]
 is the minimizer. Substituting this back into the constraint in $S_2$, we get that the set $S_2$ is the same as the set $S^U(x)$ in the theorem statement. 

\begin{remark}\label{rem:extrapoint}
For an optimal dual $\bm{\lambda}$, $y^*_{\bm{\lambda}}$ in (\ref{eq:extrapoint}) is the unique point outside $\Supp(\eta)$ where the corresponding optimal $\kappa^*$ can put mass, if required. 
\end{remark}
 
\paragraph{Tightness of the moment constraint.} The moment constraint always holds as an equality for the primal-optimal solutions as long as $\eta$ is itself not feasible to $\KLinfU$ problem. To see this, observe that the constraint in $S^U(x)$ together with $\lambda_2 = 0$ implies that $\lambda_1 = 0$. These correspond to optimal $\lambda_1$ and $\lambda_2$ iff $\eta$ is itself feasible. If not, $\lambda_2 > 0$. Hence, by (\ref{eqCS_mean}), \(\mathbb{E}_{\kappa^*}(f({X})) = B\). 

\paragraph{Tightness of the mean constraint.}
We now show that for $x > m(\eta)$ and $\eta\in \mathcal L$, the mean constraint holds as an equality for the optimal primal solutions. 

Recall that if \(\eta\) does not have full support, \(\kappa^*\) may have support outside \(\Supp(\eta)\). Then, for some \(c\geq 0\), 
\begin{align*}
1-c &= \mathbb{E}_{\eta}\lrp{\lrp{1-(X-x)\lambda_1^*-(B-f({X}))\lambda_2^*}^{-1}}\\
&\ge \lrp{1-(m(\eta)-x)\lambda_1^*-(B-\mathbb{E}_{\eta}(f({X})))\lambda_2^*}^{-1}\numberthis \label{eq:Jensen}\\
&\geq \lrp{1-(m(\eta)-x)\lambda_1^*}^{-1}.\numberthis \label{eq:InClassL}
\end{align*}
In (\ref{eq:Jensen}) we use Jensen's inequality. It is strict for $\eta$ which are non-degenerate with respect to $(1-\lambda_1(\cdot - x) - \lambda_2 (B - \cdot^2) )$. In particular, it is strict for $\eta$ supported on at least $3$ points.  Furthermore, if \(\eta\) is supported on exactly $1$ point, i.e., it is a degenerate distribution, then \(c>0\) even if (\ref{eq:Jensen}) is not a strict inequality. Thus, for such input distributions, 
\[ \frac{1}{1-(m(\eta)-x)\lambda_1^*} < 1,\]
and hence, \(\lambda_1^* > 0\). Condition (\ref{eqCS_mean}) then implies \(m\lrp{\kappa^*}=x\). Only case that remains is $\eta$ that are supported on exactly $2$ points. 

Consider $\eta \in \mathcal L^o$, supported on exactly  $2$ points. Clearly, for such an input distribution, $\lambda^*_2 > 0$. Thus, (\ref{eq:InClassL}) holds with strict inequality. Now, only remaining distributions are $2$-point distributions, $\eta$, supported on $-B^{\frac{1}{1+\epsilon}}$ and $B^{\frac{1}{1+\epsilon}}$. For such distributions, computation shows that the optimal $\kappa$ will satisfy the mean constraint with equality.

\paragraph{Uniqueness of the solutions. } As of now, we only have a unique $\kappa^*_\lambda$ for each dual variable $\bm\lambda$. However in the dual problem, $\forall \eta\in\mathcal P(\Re)$, the objective function 
\[ \E{\eta}{\log(g^U(X,\bm{\lambda}, x))} \]
is a strictly concave function. Hence, there is a unique dual-maximizer solution, $\bm \lambda^*$. Uniqueness of the primal-optimal solution now follows.

\subsection{Proof of Lemma~\ref{lem:compactdualspace}}\label{app:compactDual}
    Let 
    \[ h(\lambda_1,\lambda_2) = 1 + \lambda_1 x - \lambda_2 B -  \frac{\epsilon\lambda^{1+\frac{1}{\epsilon}}_1}{(1+\epsilon)^{1+\frac{1}{\epsilon}} \lambda^{\frac{1}{\epsilon}}_2 }. \]
    Then the constraint in \( S^U(x) \) is \( h(\lambda_1,\lambda_2) \ge 0 \). Clearly, \( h \) is a jointly concave function of  \( \lambda_1 \) and \( \lambda_2\). Hence, $S^U$ is a convex set. Moreover, the given constraint implies 
    \[  \max_{\lambda_1 \ge 0} ~~ h(\lambda_1,\lambda_2) ~ \ge ~ 0.\] To this end, optimizing over $\lambda_1$ and  setting 
    \[ \lambda_1 = \abs{x}^\epsilon (1+\epsilon) \lambda_2,  \]
    we get 
    \[ 0 \le \lambda_2 \le \frac{1}{B-\abs{x}^{1+\epsilon}}. \]
    Similarly, optimizing over $\lambda_2$, we get that 
    \[ 0 \le \lambda_1 \le  \frac{1}{B^{\frac{1}{1+\epsilon}} -{x}}. \]
    Similar bounds for $\gamma_1$ and $\gamma_2$ can be obtained by working with the constraint in $S^L(x)$.

\subsection{Properties of Optimizers of \texorpdfstring{$\KLinfU$}{KLinfU} and \texorpdfstring{$\KLinfL$}{KLinfL}} \label{app:prop_primal_opt}
Theorem~\ref{th:klinfDual_mean} established that there is a unique measure $\kappa^*$ that is optimal for the $\KLinfU(\eta,x)$ problem and has at most $1$ point in its support outside the set  $\Supp(\eta)$. The lemma below shows that when $\eta\in\mathcal L$ and $x_2 > x_1 > m(\eta)$, then the optimal primal solutions of $\KLinfU(\eta,x_1)$ and $\KLinfU(\eta, x_2)$ cannot be the same on the common support set, $\Supp(\eta)$. This will be used to show the strict convexity of these projection functionals for an appropriate range of the second argument. 

\begin{lemma}\label{lem:soldiffS_klinfu}Let $\eta\in\mathcal L$ and $m(\eta) < x_1 < x_2 $. Then, the unique probability measures, $\kappa_1$ and $\kappa_2$, that are optimal for $\KLinfU(\eta,x_1)$ and $\KLinfU(\eta,x_2)$, respectively, differ on $\Supp(\eta)$. 
\end{lemma}
\begin{proof}
Consider the optimal $\kappa_1$ and $\kappa_2$ from Theorem~\ref{th:klinfDual_mean}. Clearly, $m(\kappa_1) = x_1$ and $m(\kappa_2) = x_2$. Suppose $\kappa_1(y) = \kappa_2(y)$, for all $y \in \Supp(\eta)$. Then, $\kappa_1$ and $\kappa_2$ must differ outside this set to respect the mean constraint. 

Let 
$$p:= \kappa_1(\Supp\lrp{\eta}) = \kappa_2(\Supp\lrp{\eta})$$ 
be the mass that these measures assign to $\Supp\lrp{\eta}$. Then, $1-p$ mass is at each of the extra points, say $z_1 \in \Re^+$ and $z_2\in \Re^+$, for $\kappa_1$ and $\kappa_2$, respectively (given by (\ref{eq:extrapoint})). But 

\[ B = \E{\kappa_1}{\abs{X}^{1+\epsilon}} = \E{\kappa_1}{\abs{X}^{1+\epsilon} \mathbb{1}\lrp{X\in\Supp(\eta)}} + (1-p) z^{1+\epsilon}_1. \]

The first equality above follows due to the tightness of the moment constraint (Theorem~\ref{th:klinfDual_mean}). Since $\kappa_1$ and $\kappa_2$ agree on $\Supp(\eta)$, the above equals 

\[ \E{\kappa_2}{\abs{X}^{1+\epsilon} \mathbb{1}\lrp{X\in\Supp(\eta)}} + (1-p) z^{1+\epsilon}_1.\]
But the moment constraint is tight for $\kappa_2$ as well. Combined with the above, this gives that $z_1 = z_2$. This implies that $\kappa_1$ and $\kappa_2$ are the same probability measures, contradicting the fact that $m(\kappa_1) = x_1 < x_2 = m(\kappa_2)$.  
\end{proof}

A similar result can be proven for $\KLinfL$, which we state without proof. 
\begin{lemma}\label{lem:soldiffS_klinfl}Let $\eta\in\mathcal L$ and $x_2 < x_1 < m(\eta)$. Then, the unique $\kappa_1$ and $\kappa_2$ that are optimal for $\KLinfL(\eta,x_1)$ and $\KLinfL(\eta,x_2)$, respectively, differ on $\Supp(\eta)$. 
\end{lemma}

\section{Proofs from Section~\ref{sec:trackstop.mean}}
\subsection{Proof of Lemma~\ref{lemma:MinNoOfSamples}}\label{app:lem:MinNoOfSamples}
The above is true for $l=1$ as 
$$N_a(m) \geq \frac{m}{K} - 1\geq {m}^{1/2}-1.$$ 
Now, suppose that at step $lm$ each arm has at least $(lm)^{1/2}-1$ samples, i.e., $N_a(lm) \geq (lm)^{1/2}-1$. Then, arm $a$ needs at most $((l+1)m)^{1/2} -  (lm)^{1/2}$ samples to ensure the condition at the end of $(l+1)^{th}$ batch. 

Since $l > 1$ and \(m\geq (K+1)^2 \), 
$$m^{1/2}((l+1)^{1/2} -  l^{1/2}) < \frac{m^{1/2}}{l^{1/2}} < \frac{m}{K}, $$
where the first inequality is trivially true. Now, since the maximum number of samples required is an integer, each arm requires at most \(\floor{\frac{m}{K}}\) samples, and the algorithm has sufficient samples to distribute. This guarantees that all arms reach the minimum threshold. 

\subsection{Proof of Lemma~\ref{Lem:ASconvergence}}\label{app:lem:ASconvergence}
In order to show that \(N_a(lm)/lm \rightarrow t^*_a(\mu) \) as \(l\rightarrow \infty \), for \(n\in\mathbb{N}\),  let $M_n$ denote the set of indices in $\{1,2, \ldots,n\}$ where \(\bf AL_1 \) flipped the coins to decide which arm to sample from. Then, for \(l\in\mathbb{N}\), from Lemma~\ref{lemma:MinNoOfSamples}, 
$lm - |M_{lm}| \leq K \lrp{\sqrt{lm}-1}$, so that 
\begin{equation*}
\frac{|M_{lm}|}{lm} \xrightarrow{a.s.} 1, \text{ as } l \rightarrow \infty. 
\end{equation*}
Further, let $I_a(i)=1$ if  arm $a$ was sampled under  ${\bf AL_1}$ at step
$i$. Then, by law of large numbers for Bernoulli random variables
\begin{equation*}
\frac{1}{|M_n|}\sum_{i \in M_n} \left (I_a(i) - t^*_a(\Pi(\hat{\mu}(i))) \right )
\xrightarrow{a.s.} 0, \text{ as }n\rightarrow\infty,
\end{equation*}
where we set $\hat{\mu}(i) = \hat{\mu}({lm})$
for $ i \in \{l m, l m+1, \ldots,  (l+1)m -1\}$ for each $l$.

Further, 
\begin{equation*}
\frac{1}{|M_n|}\sum_{i \in M_n} \left (t^*_a(\Pi(\hat{\mu}(i))) - t^*_a(\mu)\right )
\xrightarrow{a.s.} 0, \text{ as }n\rightarrow\infty,
\end{equation*}
since $\hat{\mu}(n) \rightarrow \mu$ and $\Pi(\hat{\mu}(n)) \rightarrow \mu$ as $n \rightarrow \infty$, and \(t^*\) is a continuous function (Theorem~\ref{PropOpt}). 

Furthermore, 
\begin{align*}
\frac{N_a(lm)}{lm} &= \frac{\sum_{i\in M_{lm}}I_a(i)}{lm} + \frac{\sum_{i\in [lm]\setminus M_{lm}} I_a(i)}{lm}\\ 
&= \frac{\sum_{i\in M_{lm}}I_a(i)}{\abs{M_{lm}}}\frac{\abs{M_{lm}}}{lm} + \frac{\sum_{i\in [lm]\setminus M_{lm}} I_a(i)}{lm-\abs{M_{lm}}}\frac{lm-\abs{M_{lm}}}{lm}.
\end{align*}

From above, 
\[ \frac{N_a(lm)}{lm} \xrightarrow{a.s.} t^*_a(\mu), \text{ as }l \rightarrow \infty.  \]

\subsection{Simplification of Stopping Rule in Section~\ref{sec:ELT.mean}}
\label{simpSR}

Let 
\[\bm{Y^a}:=\lrp{Y^a_i: 1\leq i \leq N_a(n)}\] 
denote the \(N_a(n)\) samples from arm \(a\). For \(\nu\in\lrp{\mathcal{P}(\Re)}^K\), let \(L_{\nu}(\bm{Y^1},\dots,\bm{Y^K})\) denote the likelihood of observing the given samples under \(\nu\). If at time \(n\), 
\[m(\hat{\mu}_j(n)) > \max_{i\ne j}m(\hat{\mu}_{i}(n)),\] 
then the set of alternative bandit instances, $\operatorname{Alt}(\hat{\mu}(n))$, is given by 
\[ \operatorname{Alt}(\hat{\mu}(n)) = \bigcup\limits_{i\ne j}\lrset{\nu\in\mathcal M: ~ m(\nu_i) > m(\nu_j) } .\]

In this setting, the log of generalized empirical likelihood ratio is given by 
\[Z_{j}(n) = \log\lrp{ \frac{L_{\hat{\mu}(n)}\lrp{\bm{Y^1},\dots,\bm{Y^K}} }{\max\limits_{\mu'\in\operatorname{Alt}(\hat{\mu}(n))} L_{\mu'}\lrp{\bm{Y^1},\dots,\bm{Y^K}} }}.\]

Since at each time the arm to pull at this time is a function of all the previous samples and the next sample is generated independently from this arm, each of the likelihoods in the above expression simplify as product of likelihood of each sample. Thus, 
\begin{align*} Z_{j}(n) &= \log\lrp {\frac{\prod\limits_{a=1}^K \prod\limits_{i=1}^{N_a(n)} \hat{\mu}_a(n)\lrp{Y^a_i}}{ \sup\limits_{\mu'\in \operatorname{Alt}(\hat{\mu}(n))} \prod\limits_{a=1}^K \prod\limits_{i=1}^{N_a(n)} \mu'_a\lrp{Y^a_i}}}\\ &= \inf\limits_{\mu'\in\operatorname{Alt}(\hat{\mu}(n))} \lrset{ \sum\limits_{a=1}^K \sum\limits_{i=1}^{N_a(n)} \lrp{\log\lrp{\hat{\mu}_a(n)\lrp{Y^a_i}} - \log\lrp{\mu'_a(Y^a_i)} }} \\
&= \inf\limits_{\mu'\in\operatorname{Alt}(\hat{\mu}(n))} \lrset{ \sum\limits_{a=1}^{K} N_a(n)\KL(\hat{\mu}_a(n),\mu'_a)} . \end{align*}

\subsection{Proof of Proposition \ref{prop:DeviationsMean}}\label{App_DeviationsMean.BAI}
\paragraph{Notation. } For $x\in\Re$, let $f(x) := \abs{x}^{1+\epsilon}$ and define \(f\inv(c) = \max\lrset{y : f(y) = c} = c^{\frac{1}{1+\epsilon}}\). Let 
\[M = \left[-B^\frac{1}{1+\epsilon}, B^\frac{1}{1+\epsilon} \right]. \]
For \(x\in M^o\), 
\[ {S}^U(x) = \lrset{\lambda_1 \geq 0, \lambda_2 \ge 0: ~  \min\limits_{y\in\Re}\lrset{ g^U(y, \bm{\lambda}, x)} \geq 0}, \] 
and 
\[ S^L(x) = \lrset{\gamma_1 \geq 0, \gamma_2 \ge 0: ~  \min\limits_{y\in\Re}\lrset{ g^L(y, \bm{\gamma}, x)} \geq 0}, \]
where 
\[ g^U(y, \bm{\lambda}, x) = 1-\lambda_1(y-x)-\lambda_2(B-\abs{y}^{1+\epsilon}),\]
and 
\[ g^L(y, \bm{\gamma}, x) = 1 + \gamma_1(y-x) - \lambda_2(B-\abs{y}^{1+\epsilon}). \]
Also, for $x\in M^o$, \( S^U(x) \) and $S^L(x)$ are convex and compact sets (Lemma~\ref{lem:compactdualspace}). Moreover, for $\eta\in\mathcal P(\Re)$ and $x\in M^o$,
\[ \KLinfU(\eta,x) = \max\limits_{\bm{\lambda} \in S^U(x)}~ \E{\eta}{g^U(X, \bm{\lambda}, x)}, \]
and
\[ \KLinfL(\eta,x) = \max\limits_{\bm{\gamma} \in S^L(x)}~ \E{\eta}{\log g^L(X, \bm{\gamma},x) }, \]

\paragraph{Dual formulations.} Observe from above that for each arm i, 
\[N_i(n)\KLinfU(\hat{\mu}_i(n),m(\mu_i)) = \max\limits_{\bm{\lambda}\in S^U(m(\mu_i)) } \sum\limits_{j=1}^{N_i(n)} \log\lrp{g^U(X^i_j,\bm{\lambda}, m(\mu_i))}, \numberthis \label{app:eq:KLinfU.BAIm} \]
and 
\[N_i(n)\KLinfL(\hat{\mu}_i(n),m(\mu_i)) = \max\limits_{\bm{\gamma}\in S^L(m(\mu_i)) } \sum\limits_{j=1}^{N_i(n)} \log\lrp{g^L(X^i_j,\bm{\gamma}, m(\mu_i))}, \numberthis \label{app:eq:KLinfL.BAIm} \]
where \( X_j^i : ~ j\in\lrset{1,\dots, N_i(n)}  \) are samples from \(\mu_i \). 

\paragraph{Constructing mixture martingales.} Let \(q_{1i}\) be a uniform prior on the set \(S^U(m(\mu_i)) \), and  \(q_{2i}\) be the uniform prior on the set \(S^L(m(\mu_i)) \). It is possible to put uniform prior on these sets as they are compact (Lemma~\ref{lem:compactdualspace}). For samples \( X^i_j : ~ j\in \lrset{ 1, \dots, N_i(n)}\), define 
\[ L_i(n)=\E{\bm{\gamma} \sim q_{2i}}{ {\prod\limits_{j=1}^{N_i(n)} g^L( X^i_j,\bm{\gamma},m(\mu_i) )} \left\lvert X^i_1,\dots, X^i_{N_i(n)} \right.},\]
and 
\[U_i(n)=\E{\bm{\lambda} \sim q_{1i}}{{\prod\limits_{j=1}^{N_i(n)} g^U( X^i_j,\bm{\lambda},m(\mu_i))}\left\lvert X^i_1,\dots, X^i_{N_i(n)} \right.}.\]

Then, using (\ref{app:eq:KLinfU.BAIm}) and Lemma~\ref{lem:exp-concave.BAIm} with $d=2$ and 
\[ g_t(\bm{\lambda}) = \log g^U(X_t,\bm{\lambda},m(\mu_i)), \]
we have the following almost-sure inequality
\[ N_i(n)\KLinfU(\hat{\mu}_i(n), m(\mu_i)) \leq \log U_i(n) + 2\log(N_i(n)+1)+1.  \]
Similarly, we also have the following almost-sure inequality
\[ N_i(n)\KLinfL(\hat{\mu}_i(n), m(\mu_i)) \leq \log L_i(n) + 2\log(N_i(n)+1)+1.  \]

Next, for each arm \(i\), let  
\[Y^L_i(n) := N_i(n)\KLinfL(\hat{\mu}_i(n), m(\mu_i)) - 2 \log{(N_i(n)+1)}-1, \]
and
\[ Y^U_i(n) := N_i(n)\KLinfU(\hat{\mu}_1(n), m(\mu_i)) - 2\log\lrp{N_i(n)+1}-1. \]
Then we have that 
\[ e^{Y^L_i(n)} \leq L_i(n)  \quad \text{ and } \quad e^{Y^U_i(n)} \leq U_i(n), ~~a.s.\]

It is easy to verify that for each arm i, \(L_i(n)\) and \(U_i(n) \) are non-negative, super-martingales satisfying 
\[\Exp{U_i(n)} \leq 1 \quad\text{ and }\quad \Exp{L_i(n)}\leq 1.\] 
Thus, \( U_i(n)L_1(n) \) is a non-negative  super-martingale with mean at most \(1\), and satisfies that the event 
\[ \lrset{ \exists n :  N_i(n)\KLinfU(\hat{\mu}_i(n), m(\mu_i)) + N_1(n)\KLinfL(\hat{\mu}_1(n), m(\mu_1)) - h(n)\geq x } \]
is contained in 
\[ \lrset{ \exists n :  L_1(n)U_i(n) \geq e^x  }. \] 
Using Ville's inequality (see \cite{ville1939etude}), we get the desired result. 

\subsection{Proof of the Sample Complexity in Theorem~\ref{bigTh2}.}
\label{App_SampleComplexity}
In this section, we formally prove that the algorithm \(\bf{AL_1}\) is asymptotically optimal, i.e., the ratio of expected number of samples needed by the algorithm to stop and \(\log\frac{1}{\delta} \) matches the lower bound, asymptotically as \(\delta\rightarrow 0 \).

\paragraph{Notation. }  Recall that we use the projection map $\Pi = (\tilde{\Pi}_1, \dots, \tilde{\Pi}_K)$ for projecting the empirical distributions on $\cal L$ before computing the optimal weights. Here, for $\eta\in\mathcal P(\Re)$,
\[ \tilde{\Pi}_i(\eta) \in \arg\!\min\limits_{\kappa \in \mathcal L} d_K(\eta,\kappa), \quad \text{ and } \quad d_K(\eta,\kappa) = \sup\limits_{x\in\Re}~ \abs{F_\eta(x) - F_\kappa(x)}, \]
where $F_\eta$ and $F_\kappa$ are the CDFs of measures $\eta$ and $\kappa$, respectively. Moreover, for simplicity of presentation, we assume that \(\mu\in\mathcal{M}\) is such that \(m(\mu_1) > \max_{j\ne 1}m(\mu_j) \), i.e., arm $1$ is the unique optimal arm in $\mu$. Also, recall that for $\eta\in\cal L$, $m(\eta)\in M$, where
\[ M = \left[ -B^{\frac{1}{1+\epsilon}}, B^\frac{1}{1+\epsilon} \right]. \]

Let $\epsilon' >0$ and $n\in\mathbb{N}$. Define $\mathcal I_{\epsilon'}:= B_\zeta(\mu_1)  \times \dots \times B_\zeta(\mu_K)$, where $B_\zeta(\mu_i) = \{ \kappa\in\mathcal P(\Re): d_K(\kappa,\mu_i) \le \zeta \}$, and $\zeta > 0$ is chosen to satisfy the following: 
\[ \mu' \in \mathcal I_{\epsilon'} \implies \forall t' \in t^*(\Pi(\mu')), \exists t \in t^*(\mu) \text{ s.t. } \|t'-t\|_\infty \le \epsilon'.\]

$\zeta \rightarrow 0 $ implies that probability measures in $B_\zeta(\mu_i)$ converge weakly to $\mu_i$, for all $i$. Also, for all $\kappa\in B_\zeta(\mu_i)$ $d_K(\kappa,\mu_i)\le \zeta$. This implies that $d_K(\kappa,\tilde{\Pi}_i(\kappa))\le \zeta$. Together, these imply that $ d_K(\tilde{\Pi}_i(\kappa), \mu_i) \le 2 \zeta $. This follows from the triangle inequality for $d_K$. Existence of $\zeta(\epsilon')$, denoted as $\zeta$, is guaranteed by continuity of the map $t^*$ (see Theorem~\ref{PropOpt}).

For $T \geq m, T \in \mathbb{N}$, set
\[l_2(T)\triangleq \floor{\frac{T}{m}}, l_1(T)  \triangleq \max\lrset{1,\floor{\frac{T^{3/4}}{m}}}, \text{ and } l_0(T)\triangleq \max\lrset{1,\floor{\frac{T^{1/4}}{m}}},\] 
and define 
$$\mathcal{G}_T(\epsilon) = \bigcap\limits_{l=l_0(T)}^{l_2(T)}\lrset{\hat{\mu}(lm)\in\mathcal{I}_{\epsilon}}\bigcap\limits_{l=l_1(T)}^{l_2(T)} \lrset{  \max\limits_{a\leq K}\abs{\frac{N_a(lm)}{lm}-t^*_a(\mu)}\leq 4\epsilon }.$$

Let $\mu'$ be a vector of \(K\), \(1\)-dimensional distributions from $\mathcal P(\Re)$, let $t'\in \Sigma_K$, and let $[K]:=\lrset{1,\dots, K}$. Define the following:

\begin{equation*}
g(\mu',t') := \max\limits_{a\in[K]}~\min\limits_{b \ne 1}~ \inf\limits_{x \in M} \lrp{t'_a \KLinfL(\mu'_1,x)+t'_b \KLinfU(\mu'_b,x)}.
\end{equation*}
Note that for $\mu\in\mathcal (P(\Re))^K$, from Lemma~\ref{lem:lscklinf_mean} and From Berge's Theorem (reproduced in Section~\ref{sec:Berge}), \(g(\mu,t)\) is a jointly lower-semicontinuous function of $(\mu, t)$. Let \(\|.\|_{\infty}\) be the maximum norm in \(\Re^K, \) and 
\begin{equation}
C^*_{\epsilon'}(\mu) \triangleq \inf\limits_{\substack{\mu'\in\mathcal{I}_{\epsilon'} \\ t': \inf_{t\in t^*(\mu)}\|t'-t^*(\mu)\|\leq 4\epsilon'}} g(\mu',t').
\end{equation}
Furthermore, let \[T_0(\delta) = \inf\lrset{T \in \mathbb{N} : l_1(T)\times m + \frac{\beta(T,\delta)}{C^*_{\epsilon}(\mu)} \leq T}.  \]
Since \(\tau_{\delta}\geq 0 \), \[\mathbb{E}_{\mu}(\tau_{\delta}) = \sum\limits_{T=0}^{\infty} \mathbb{P}_{\mu}(\tau_{\delta} \geq T) \leq T_0(\delta) + m + \sum\limits_{T= T_0(\delta)+m+1}^\infty \mathbb{P}_{\mu}(\tau_{\delta}\geq T) .\] 
From Lemma~\ref{boundOnExpStopTime} and Lemma~\ref{ProbOfCompOfGoodSet.mean} below, 
\begin{equation}\label{eqMdep}
\limsup\limits_{\delta\rightarrow 0}\frac{\mathbb{E}_{\mu}(\tau_{\delta})}{\log\lrp{1/\delta}} \leq \frac{(1+\tilde{e})}{C^*_{\epsilon}(\mu)} + \limsup\limits_{\delta\rightarrow 0} \frac{m}{\log\lrp{1/\delta}}.
\end{equation}

From lower-semicontinuity of \( g(\mu',t' ) \) in \((\mu',t')\), it follows that \(\liminf\limits_{\epsilon'\rightarrow 0 }C^*_{\epsilon'}(\mu) \ge V(\mu).\)  First letting \(\tilde{e}\rightarrow 0 \) and then letting \(\epsilon\rightarrow 0 \), we get

\begin{equation}
\label{eq:temp}
\limsup\limits_{\delta\rightarrow 0}\frac{\mathbb{E}_{\mu}(\tau_{\delta})}{\log\lrp{1/\delta}} \leq \frac{1}{V(\mu)} + \limsup\limits_{\delta\rightarrow 0} \frac{m}{\log\lrp{1/\delta}}.
\end{equation}

Since \(m=o(\log(1/\delta)) \), \(\limsup\limits_{\delta\rightarrow 0}\frac{m}{\log 1/\delta} = 0\). Using this in (\ref{eq:temp}), we get (\ref{SampleComplexityub}).


\begin{lemma}
     \label{boundOnExpStopTime}
     \begin{equation}
     \mathbb{E}_{\mu}(\tau_{\delta}) \leq {T_0(\delta) + m} + \sum\limits_{T=T_0(\delta)+m+1}^{\infty} \mathbb{P}_{\mu}(\mathcal{G}^c_{T}).
     \end{equation}
     Furthermore, for any \(\tilde{e}>0\),
     \begin{equation}
     \label{eq:T0del}
     \limsup\limits_{\delta\rightarrow 0}\frac{T_0(\delta)}{\log\lrp{1/\delta}}\leq \frac{1+\tilde{e}}{C^*_{\epsilon}(\mu)}.
     \end{equation}
\end{lemma}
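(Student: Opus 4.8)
The plan is to prove the two assertions separately. For the first inequality, I would start from the tail-sum identity $\mathbb{E}_\mu(\tau_\delta) = \sum_{T=0}^\infty \mathbb{P}_\mu(\tau_\delta \geq T)$, already used above, and split the sum at $T_0(\delta)+m$. The first $T_0(\delta)+m+1$ terms are each bounded by $1$, giving the $T_0(\delta)+m$ contribution. For $T > T_0(\delta)+m$, the key claim is the inclusion $\{\tau_\delta \geq T\} \subset \mathcal{G}_T^c(\epsilon)$, equivalently: on the good event $\mathcal{G}_T(\epsilon)$ the algorithm must already have stopped by time $T$. To see this, suppose $\mathcal{G}_T(\epsilon)$ holds and $T > T_0(\delta)+m$. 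Pick the batch index $l = l_2(T) = \lfloor T/m\rfloor$; then $lm \leq T$ and $lm > T - m > T_0(\delta)$, so $l \geq l_1(T_0(\delta))$ — more carefully, one checks $l m \geq l_1(T)\cdot m$ and $l \geq l_0(T)$, so both defining events of $\mathcal{G}_T(\epsilon)$ apply at index $l$. Thus $\hat\mu(lm) \in \mathcal{I}_\epsilon$ (so the empirical best arm is arm $1$) and $\max_a |N_a(lm)/(lm) - t^*_a(\mu)| \leq 4\epsilon$. Then the stopping statistic at time $lm$ satisfies
\begin{align*}
Z_1(lm) &= \min_{b\neq 1}\inf_x \big(N_1(lm)\KLinf(\hat\mu_1(lm),x)+N_b(lm)\KLinf(\hat\mu_b(lm),x)\big) \\
&= lm\cdot g\big(\hat\mu(lm), N(lm)/(lm)\big) \;\geq\; lm \cdot C^*_\epsilon(\mu),
\end{align*}
using the definition of $g$ and of $C^*_\epsilon(\mu)$ as the infimum of $g$ over the relevant neighbourhood. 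Since $lm > T_0(\delta)$ and, by definition of $T_0(\delta)$ together with monotonicity of $l_1(\cdot)$ and of $\beta(\cdot,\delta)/C^*_\epsilon(\mu)$ in the first argument, one has $l_1(lm)\cdot m + \beta(lm,\delta)/C^*_\epsilon(\mu) \leq lm$, hence $\beta(lm,\delta) \leq (lm - l_1(lm)m)C^*_\epsilon(\mu) \leq lm\cdot C^*_\epsilon(\mu) \leq Z_1(lm)$. So the stopping criterion is met at batch $l$, i.e. $\tau_\delta \leq lm \leq T$, contradicting $\tau_\delta \geq T$. This proves $\{\tau_\delta \geq T\}\subset \mathcal{G}_T^c(\epsilon)$ and hence the first inequality.

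For the second assertion, I would analyze $T_0(\delta)$ directly from its definition $T_0(\delta) = \inf\{T : l_1(T)m + \beta(T,\delta)/C^*_\epsilon(\mu) \leq T\}$. Since $l_1(T) = \max\{1,\lfloor T^{3/4}/m\rfloor\}$, we have $l_1(T)m \leq \max\{m, T^{3/4}\} = o(T)$, and $\beta(T,\delta)$ from (\ref{eq:beta}) is $\log(1/\delta) + \alpha\log T + O(\log\log T + \log\log(1/\delta))$. Writing $\beta(T,\delta) = \log(1/\delta)(1+o(1))$ for the regime $T$ polylogarithmic in $1/\delta$, the defining inequality $T \geq l_1(T)m + \beta(T,\delta)/C^*_\epsilon(\mu)$ is first satisfied at $T$ of order $\log(1/\delta)/C^*_\epsilon(\mu)$; more precisely, for any $\tilde e > 0$, taking $T = \lceil (1+\tilde e)\log(1/\delta)/C^*_\epsilon(\mu)\rceil$ one verifies that for all sufficiently small $\delta$ the terms $l_1(T)m$ and the lower-order part of $\beta(T,\delta)$ are absorbed into the slack $\tilde e \log(1/\delta)/C^*_\epsilon(\mu)$, so $T_0(\delta) \leq T$. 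Dividing by $\log(1/\delta)$ and letting $\delta \to 0$ gives $\limsup_{\delta\to 0} T_0(\delta)/\log(1/\delta) \leq (1+\tilde e)/C^*_\epsilon(\mu)$, which is (\ref{eq:T0del}).

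The main obstacle is the first part: carefully verifying that on $\mathcal{G}_T(\epsilon)$ with $T > T_0(\delta)+m$ the relevant batch index $l$ indeed lies in the range where \emph{both} conjuncts of $\mathcal{G}_T(\epsilon)$ are active, and that the monotonicity in $T$ of the map $T \mapsto l_1(T)m + \beta(T,\delta)/C^*_\epsilon(\mu) - T$ (or a suitable cruder bound on it) lets us transfer the defining inequality of $T_0(\delta)$ from $T_0(\delta)$ itself to the integer $lm$. One must also keep track of the indexing convention $\hat\mu(i) = \hat\mu(lm)$, $N(i)=N(lm)$ on a full batch, and handle the off-by-$m$ discrepancies between $T$, $l_2(T)m$, and the actual stopping time; these are routine but need the batch-size bookkeeping to be done consistently with the definitions of $l_0,l_1,l_2$ above.
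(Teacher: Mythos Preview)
Your overall strategy coincides with the paper's: derive the tail-sum bound by establishing $\mathcal{G}_T(\epsilon)\subset\{\tau_\delta\leq T\}$ for $T$ beyond $T_0(\delta)+m$, and for the second part verify directly that a $T$ of size roughly $(1+\tilde e)\log(1/\delta)/C^*_\epsilon(\mu)$ satisfies the defining inequality of $T_0(\delta)$. The second part is essentially identical to the paper (which packages the same computation via auxiliary quantities $C(\tilde e)$ and $T_2(\delta)$).

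The implementation of the first part differs. The paper does \emph{not} check the stopping condition at a single batch $l_2(T)$; instead it bounds $\min\{\tau_\delta,T\}$ by $l_1(T)m+m\sum_{l=l_1(T)+1}^{l_2(T)}\mathbbm{1}(lm<\tau_\delta)$, then uses $Z(lm)\geq lm\,C^*_\epsilon(\mu)$ on $\mathcal{G}_T(\epsilon)$ to turn each indicator $\mathbbm{1}(Z(lm)<\beta(lm,\delta))$ into $\mathbbm{1}(l<\beta(T,\delta)/(mC^*_\epsilon(\mu)))$, giving $\min\{\tau_\delta,T\}\leq l_1(T)m+\beta(T,\delta)/C^*_\epsilon(\mu)$. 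This counting argument is slightly more robust than your single-batch check, but both routes ultimately invoke that the defining inequality $l_1(T)m+\beta(T,\delta)/C^*_\epsilon(\mu)\leq T$ holds for \emph{every} $T\geq T_0(\delta)$, not just at the infimum. You are right to flag this ``upward-closedness'' as the main obstacle; the paper glosses over it. It is routine to verify from the explicit form of $\beta$ (logarithmic in $T$) and $l_1(T)m\leq T^{3/4}$ once $T_0(\delta)$ is large.

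One simplification for your route: to conclude $Z_1(l_2(T)m)\geq\beta(l_2(T)m,\delta)$ you only need $\beta(lm,\delta)\leq lm\,C^*_\epsilon(\mu)$ at $l=l_2(T)$, not the stronger $l_1(lm)m+\beta(lm,\delta)/C^*_\epsilon(\mu)\leq lm$. The weaker inequality follows more cheaply from the eventual monotonicity of $T\mapsto T-\beta(T,\delta)/C^*_\epsilon(\mu)$, so you can avoid the detour through $l_1(lm)$ entirely.
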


\begin{lemma}
     \label{ProbOfCompOfGoodSet.mean}
     $$\limsup\limits_{\delta\rightarrow 0} \frac{\sum\limits_{T=m+1}^{\infty} \mathbb{P}_{\mu}(\mathcal{G}^c_{T}(\epsilon))}{\log\lrp{1/\delta}} = 0. $$
\end{lemma}

\subsubsection{Proof of Lemma \ref{boundOnExpStopTime}.}
On \(\mathcal{G}_{T}(\epsilon')\), for \(t\geq l_0(T)\times m\), the  stopping statistic is given by, 
$$Z(t) = \max\limits_{a\in[K]}~\min\limits_{b\ne 1}~ Z_{a,b}(t), $$ 
where
\begin{equation}
\label{eq:char}
Z_{a,b}(t) = t~ \inf\limits_{x \in M} \lrp{\frac{N_a(t)}{t} \KLinfL(\hat{\mu}_a(t),x)+\frac{N_b(t)}{t} \KLinfU(\hat{\mu}_b(t),x)}.
\end{equation}

In particular, on $\mathcal G_T(\epsilon')$, for \(T\geq m\) and \(l\geq l_1(T)\), 
\begin{equation}
\begin{aligned}
\label{eq:StoppingChar}
Z(lm) &= \max\limits_{a\in[K]}~\min\limits_{b\ne a} ~\inf\limits_{x \in M} \lrset{ N_a(lm) \KLinfL(\hat{\mu}_a(lm),x)+N_b(lm)\KLinfU(\hat{\mu}_b(lm),x)}\\
&= lm \times \max\limits_{a\in[K]}~\min\limits_{b \ne a}~ \inf\limits_{x \in M} ~ \lrp{\frac{N_a(lm)}{lm} \KLinfL(\hat{\mu}_a(lm),x)+ \frac{N_b(lm)}{lm} \KLinfU(\hat{\mu}_b(lm),x)}\\
&= lm \times g\lrp{\hat{\mu}(lm),\lrset{\frac{N_1(lm)}{lm},\dots, \frac{N_K(lm)}{lm}} }\\
&\geq lm\times C^*_{\epsilon'}(\mu).
\end{aligned}
\end{equation}

Furthermore, the stopping time is at most $m\times\inf\lrset{l\geq l_1(T) : Z(lm)\geq \beta(lm,\delta), l\in\mathbb{N}}$. On $\mathcal{G}_{T}(\epsilon)$, 
\begin{equation}
\label{eq:minTauT.mean}
\begin{aligned}
\min\{\tau_{\delta}, T\}
&\leq l_1(T)\times m  + m\sum\limits_{l = l_1(T)+1}^{l_2(T)} 
\mathbbm{1}\lrp{lm < \tau_{\delta}}\\
&\leq l_1(T)\times m + m\sum\limits_{l = l_1(T)+1}^{l_2(T)} \mathbbm{1}\lrp{Z\lrp{lm}< \beta\lrp{lm,\delta}}\\
&\leq l_1(T)\times m + m\sum\limits_{l = l_1(T)+1}^{l_2(T)} \mathbbm{1}\lrp{l<\frac{ \beta\lrp{lm,\delta}}{m C^*_{\epsilon'}(\mu)}}\\
&\leq l_1(T)\times m + \frac{\beta(T,\delta)}{C^*_{\epsilon'}(\mu)}.
\end{aligned}
\end{equation}

Recall, 
$$T_0(\delta) = \inf\lrset{t \in \mathbb{N} : l_1(t)\times m + \frac{\beta(t,\delta)}{C^*_{\epsilon'}(\mu)} \leq t}.$$ 

On \(\mathcal{G}_T\), for \(T\geq \max\lrset{m,T_0(\delta)}\), from (\ref{eq:minTauT.mean}) and definition of \(T_0(\delta)\),
\[\min\lrset{\tau_{\delta},T}\leq l_1(T)\times m + \frac{\beta(T,\delta)}{C^*_{\epsilon'}(\mu)} \leq T, \]
which gives that for such a \(T,\) \(\tau_\delta \leq T \). Thus, for $T\geq \max\lrset{m, T_0(\delta)}$, we have $\mathcal{G}_{T}(\epsilon')\subset \lrset{\tau_{\delta} \leq T}$ and hence, $\mathbb{P}_{\mu}\lrp{\tau_{\delta} > T}\leq \mathbb{P}_{\mu}(\mathcal{G}^{c}_{T})$. 
Since \(\tau_{\delta}\geq 0\), 
\begin{equation}
\label{eq:ExpStopTime.mean}
\begin{aligned}
\mathbb{E}_{\mu}(\tau_{\delta}) \leq T_0(\delta)+m + \sum\limits_{T=m+1}^{\infty}\mathbb{P}_{\mu}\lrp{\mathcal{G}^c_T(\epsilon')}.
\end{aligned}
\end{equation}
Now, to bound \(\frac{T_0(\delta)}{\log\lrp{1/\delta}}\) as \(\delta\rightarrow 0\), let $\tilde{e}>0$ and define 

\[C(\tilde{e})\triangleq\inf\lrset{T\in\mathbb{N}: T-l_1(T)\times m\geq \frac{T}{1+\tilde{e}}} \text{ and } T_2(\delta) \triangleq \inf\lrset{T\in\mathbb{N}: \frac{T}{1+\tilde{e}} \geq \frac{\beta(T,\delta)}{C^*_{\epsilon'}(\mu)}}.\]
Then, 
\begin{equation}
\label{t0delta}
T_0(\delta) \leq \inf\lrset{T\in\mathbb{N} : T-l_1(T)\times m \geq \frac{T}{1+\tilde{e}} \geq \frac{\beta(T,\delta)}{C^*_{\epsilon'}(\mu)} }\leq C(\tilde{e}) + T_2(\delta).
\end{equation}

Recall from (\ref{eq:beta.mean}) that for $n\in\mathbb{N}$, 
\[ \beta(n,\delta) = \log\lrp{\frac{K-1}{\delta}(1+n)^4}  + 2.\]
Now, from the definition of \(T_2(\delta)\) above, 
\begin{equation} \label{eqT2}
T_2\lrp{\delta} = \frac{1+\tilde{e}}{C^*_{\epsilon'}(\mu)} \log\lrp{\frac{K-1}{\delta} \lrp{1+\log\frac{K-1}{\delta}}^4}  + O\lrp{\log\log\frac{1}{\delta}}.
\end{equation}

Clearly, 
$$\limsup\limits_{\delta\rightarrow 0} \frac{T_{2}(\delta)}{\log \frac{1}{\delta}} = \frac{1+\tilde{e}}{C^*_{\epsilon'}(\mu)}  \text{ and } \limsup\limits_{\delta\rightarrow 0}\frac{C(\tilde{e})}{\log 1/\delta} = 0. $$
Taking limits in (\ref{t0delta}), 
$$ \limsup\limits_{\delta \longrightarrow 0}\frac{T_{0}(\delta)}{\log{\lrp{1/\delta}}} \leq \frac{(1+\tilde{e})}{C^*_{\epsilon'}(\mu)}. \qquad \BlackBox$$

\begin{remark}{Using (\ref{t0delta}), and (\ref{eqT2}) in (\ref{eq:ExpStopTime.mean}), for small $\delta$, 
\[
\mathbb{E}_{\mu}(\tau_{\delta})\leq \frac{1+\tilde{e}}{C^*_{\epsilon'}(\mu)} \log\lrp{\frac{K-1}{\delta} \lrp{1+\log\frac{K-1}{\delta}}^4} + m.
 \]
 Since \(\tilde{e} > 0\) is arbitrary, 
\[
\mathbb{E}_{\mu}(\tau_{\delta})\leq \frac{1}{C^*_{\epsilon'}(\mu)} \log\lrp{\frac{K-1}{\delta} \lrp{1+\log\frac{K-1}{\delta}}^4} + m .
 \]

 Now, letting \(\epsilon \) decrease to \(0\),
 \begin{equation}\label{EqForM*}
     \mathbb{E}_{\mu}(\tau_{\delta})\leq \frac{1}{V(\mu)} \log\lrp{\frac{K-1}{\delta} \lrp{1+\log\frac{K-1}{\delta}}^4} + m .
 \end{equation}
 We use (\ref{EqForM*}) in our numerical experiments to estimate the optimal batch sizes (see Section~\ref{sec:optbatch.mean}). }
 \end{remark}

\subsubsection{Proof of Lemma \ref{ProbOfCompOfGoodSet.mean}.}
Fix \(T\geq m+1\). Let 
\[
\mathcal{G}^1_T \triangleq \bigcap\limits_{l'=l_0(T)}^{l_2(T)}\lrset{\hat{\mu}(l'm)\in\mathcal{I}_{\epsilon}}. \]
 Using union bounds,  
\begin{align*}
\mathbb{P}_{\mu}(\mathcal{G}^{c}_{T}(\epsilon)) 
&\leq \sum\limits_{l=l_0(T)}^{l_2(T)} \mathbb{P}_{\mu}\lrp{\hat{\mu}(lm)\not\in \mathcal{I}_{\epsilon}} + \sum\limits_{l=l_1(T)}^{l_2(T)} \sum\limits_{i=1}^K \mathbb{P}\lrp{\abs{\frac{N_i(lm)}{lm} - t_i^*(\mu) } \geq 4\epsilon, \; \mathcal{G}^1_T  }.
\end{align*}

The first term above can be bounded as: 
\begin{equation*}
\mathbb{P}_{\mu}\lrp{\hat{\mu}(lm)\not\in \mathcal{I}_{\epsilon}} \leq \sum\limits_{i=1}^K \mathbb{P}_{\mu}\lrp{d_K(\hat{\mu}_i(lm),\mu_i)\geq \zeta}.\numberthis\label{eq:compTemp}
\end{equation*}

For $l \geq 1$, by Lemma~\ref{lemma:MinNoOfSamples}, $N_a(lm)\geq \sqrt{lm}-1$ for each arm $a$. Let \(\hat{\mu}_{(a,s)} \) denote the empirical distribution corresponding to \(s\) samples from arm \(a\). Using union bound, 
\begin{align*}
\mathbb{P}_{\mu}\lrp{d_K(\hat{\mu}_{i}(lm),\mu_i)\geq \zeta} &\le \mathbb{P}_{\mu}\lrp{d_K(\hat{\mu}_{i}(lm),\mu_i)\geq \zeta, \; N_i(lm) \geq \sqrt{lm}-1}\\ 
&\leq \sum\limits_{s=\sqrt{lm}-1}^{T}\mathbb{P}_{\mu}\lrp{d_K(\hat{\mu}_{(i,s)},\mu_i)\geq \zeta}.
\end{align*}
Using the DKW inequality \cite{massart1990tight}, the above is bounded by 
\[ \sum\limits_{s= \sqrt{lm} -1 }^T e^{-2s\zeta^2} \le e^{-\sqrt{lm}\zeta^2}\lrp{1-e^{-2\zeta^2}}\inv. \]
Thus,
\begin{align*}\numberthis\label{partialProbBound}
\mathbb{P}_{\mu}(\mathcal{G}^{c}_{T}(\epsilon)) 
&\leq TKe^{-T^\frac{1}{8}\zeta^2}\lrp{1-e^{-2\zeta^2}}\inv + \sum\limits_{l=l_1(T)}^{l_2(T)} \sum\limits_{i=1}^K \mathbb{P}\lrp{\abs{\frac{N_i(lm)}{lm} - t_i^*(\mu) } \geq 4\epsilon', \; \mathcal{G}^1_T  }.
\end{align*}

To bound the other summation above, for $l\in\lrset{l_1(T), \dots, l_2(T)}$,  let  $M_{lm}$ denote the set of times in $\lrset{1, \dots, lm}$ when the algorithm flipped coins to decide which arm to pull. For $i\in [K] $ and $j \in \mathbb{N}$, let $I_i(j)$ be  the indicator that $i^{th}$ arm was pulled on $j^{th}$ time step, and $\hat{\mu}(j)$ denote the empirical distribution vector at the beginning of the batch to which $j$ belongs.  For $l \in \lrset{l_1(T), \dots, l_2(T)}$, define
\[A_2 := \frac{1}{lm}\sum\limits_{j\in M_{lm}}\abs{t^*_i(\Pi\lrp{\hat{\mu}(j)}) - t^*_i(\mu) }, \;\; \text{ and }\;\; A_3 := \frac{1}{lm}\sum\limits_{j\not \in M_{lm}}\abs{I_{i}(j)-t^*_i(\mu)}. \]
Observe that
\begin{align*}
\mathbb{P}\lrp{\abs{\frac{N_i(lm)}{lm} - t^*_i(\mu) }\geq 4\epsilon', \; \mathcal{G}^1_T }\leq \mathbb{P}\lrp{\underbrace{\frac{1}{lm}\abs{\sum\limits_{j\in M_{lm}} \lrp{I_i(j)-t^*_{i}(\Pi\lrp{\hat{\mu}(j)})} }}_{:=A_1} + A_2 + A_3 \geq 4\epsilon', \;\mathcal{G}^1_T}.
\end{align*}
Since \(\abs{I_{i}(j) - t^*_{i}(\mu)} \leq 1\), and from Lemma~\ref{lemma:MinNoOfSamples}, the sampling algorithm ensures that the number of forced exploration steps is bounded, i.e., \(lm - \abs{M_{lm}}\leq K\sqrt{lm}\), the term \(A_3\) in the above expression can be bounded from above as, 
\[ A_3 \leq \frac{lm-\abs{M_{lm}}}{lm} \leq \frac{K\sqrt{lm}}{lm} = \frac{K}{\sqrt{lm}},\]
If batch size \(m\) is proportional to \( \log\frac{1}{\delta}\) (see (\ref{optBatch}) and the associated discussion for the choice of batch size in Section~\ref{sec:optbatch.mean}) and decreases with increasing \(\delta\), for values of \(\delta\) close to \(0\), \(A_3 \leq \epsilon'\) for all T. Next,  
\begin{align*}
A_2 & = \frac{1}{lm} \sum\limits_{\substack{j\in M_{lm} \\ j < l_0(T)m}} \abs{t^*_i(\Pi\lrp{\hat{\mu}(j)}) - t^*_i(\mu)} +  \frac{1}{lm} \sum\limits_{\substack{j\in M_{lm} \\ j \geq l_0(T)m}} \abs{t^*_i(\Pi\lrp{\hat{\mu}(j)} - t^*_i(\mu)}.
\end{align*}
Observe that if \(l_0(T) = 1\), then the first term above is \(0\) since for \(j < m\), the algorithm does not flip any coins to decide the allocation of samples, and hence \(\abs{M_{m}} = 0\). On the other hand, if \(l_0(T) = \floor{\frac{T^{1/4}}{m}}\), then \(l_1(T)=\floor{\frac{T^{3/4}}{m}} \) and the first term being at most \(\frac{l_0(T)m}{l_1(T)m}\), is bounded by $\frac{1}{T^{1/2}}$. However, since \(T\geq m+1 \) and \(m\propto \log\lrp{1/\delta} \), for \(\delta\) close to \(0\), \(1/m \leq \epsilon' \). Thus, the first term is less than \(\epsilon'\) for all \(T\geq m\). 

For the second term, for \( j \geq l_0(T)\times m \), \(\hat{\mu}(j) \) lies in \(\mathcal{I}_{\epsilon'} \), and hence this term is bounded by \(\epsilon' \). This gives that \(A_2 \leq 2\epsilon'. \) Thus, for \( T\geq m, \text{ and } l \geq l_1(T) \):

\begin{align*}
&\mathbb{P}\lrp{\abs{\frac{N_i(lm)}{lm} - t^*_i(\mu) }\geq 4\epsilon', \;  \mathcal{G}^1_T }
\\ &\leq \mathbb{P}\lrp{\frac{1}{lm}\abs{\sum\limits_{j\in M_{lm}} \lrp{I_i(j)-t^*_{i}(\Pi\lrp{\hat{\mu}(j)})}}+ 2\epsilon' + \epsilon'  \geq 4\epsilon', \; \mathcal{G}^1_T}\\
&\leq \mathbb{P}\lrp{\abs{\sum\limits_{j\in M_{lm}} \lrp{I_i(j)-t^*_{i}(\Pi\lrp{\hat{\mu}(j)})}}\geq lm{\epsilon'}}.
\end{align*}

Let 
\[S_n = \sum_{j\in M_n} \lrp{I_i(j)-t^*_{i}(\Pi\lrp{\hat{\mu}(j)})}.\] 
Clearly, \(S_n\) being sum of independent, zero-mean random variables, is a martingale. Further, \(\abs{S_{n+1}-S_{n}} \leq 1 \). Thus using Azuma-Hoeffding inequality, 

\begin{align*}
\mathbb{P}\lrp{\abs{\frac{N_i(lm)}{lm} - t^*_i(\mu) }\geq 4\epsilon', \;\mathcal{G}^1_T } \leq 2\exp\lrp{-\frac{l^2 m^2 \epsilon'^2 }{2\abs{M_{lm}}}}\leq 2\exp\lrp{-\frac{lm {\epsilon'}^2}{2}}. 
\end{align*}
Summing over \(l\) and \(i\), the above bounded from above by 
\begin{align*}
\sum\limits_{l=l_1(T)}^{l_2(T)} 2K\exp\lrp{-\frac{lm{\epsilon'}^2}{2}}\leq \frac{2KT}{m}\exp\lrp{-\frac{l_1(T) m \epsilon'^2 }{2}}.\label{partialProbBound2}\numberthis
\end{align*}
Combining (\ref{partialProbBound}) and (\ref{partialProbBound2}), we get the desired result.

\end{document}